\def\draft{0}
\def\anon{0}
\def\lncs{0} 
\def\iacrcc{0}
\def\lncsORiacrcc{0} 
\def\iclr{0}
\def\icml{0}
\def\iclrORicml{0} 

\ifnum\lncs=1
    \documentclass[runningheads]{llncs}
\else
        \ifnum\iclr=1
            \documentclass{article}
        \else
            \ifnum\icml=1
                \documentclass{article}
            \else
                \documentclass[11pt]{article}
            \fi
        \fi
\fi

\usepackage{macros}
\usepackage{specific-macros}
\usepackage{tikz}
\usepackage{float}
\usetikzlibrary{arrows,arrows.meta,positioning,shapes}
\usepackage{ifthen}
\usepackage{multirow}

\ifnum\lncsORiacrcc=0
\usepackage{microtype}      
\fi

\title{Statistically Undetectable Backdoors in Deep Neural Networks}
\ifnum\anon=0
\author{
Andrej Bogdanov\thanks{University of Ottawa. \href{mailto:abogdano@uottawa.ca}{abogdano@uottawa.ca}.} \and
Alon Rosen\thanks{Bocconi University. \href{mailto:alon.rosen@unibocconi.it}{alon.rosen@unibocconi.it}.} \and
Neekon Vafa\thanks{Massachusetts Institute of Technology. \href{mailto:nvafa@mit.edu}{nvafa@mit.edu}.}}
\else
\ifnum\iacrcc=1

\else
\author{Anonymous submission}
\fi 
\ifnum\lncs=1
\institute{}
\fi 
\fi 
\date{}

\ifnum\icml=1
    \icmltitlerunning{Statistically Undetectable Backdoors in Deep Neural Networks}
\fi

\begin{document}

\ifnum\icml=0
    \maketitle
\else

\twocolumn[
  \icmltitle{Statistically Undetectable Backdoors in Deep Neural Networks}



  \icmlsetsymbol{equal}{*}

  \begin{icmlauthorlist}
    \icmlauthor{Andrej Bogdanov}{equal,ottawa}
    \icmlauthor{Alon Rosen}{equal,bocconi}
    \icmlauthor{Neekon Vafa}{equal,mit}
  \end{icmlauthorlist}

  \icmlaffiliation{ottawa}{University of Ottawa, Ottawa, Canada}
  \icmlaffiliation{bocconi}{Bocconi University, Milan, Italy}
  \icmlaffiliation{mit}{Massachusetts Institute of Technology, Cambridge, USA}

  \icmlcorrespondingauthor{Neekon Vafa}{nvafa@mit.edu}


  \icmlkeywords{Cryptography, Theory, Backdoors, Trust, Adversarial Examples, Johnson-Lindenstrauss Lemma, JL Lemma, Deep Neural Networks, Computational Complexity Theory}

  \vskip 0.3in
]

\printAffiliationsAndNotice{\icmlEqualContribution}
\fi

\begin{abstract}
We show how an adversarial model trainer can plant backdoors in a large class of deep, feedforward neural networks. These backdoors are statistically undetectable in the white-box setting, meaning that the backdoored and honestly trained models are close in total variation distance, even given the full descriptions of the models (e.g., all of the weights). The backdoor provides access to invariance-based adversarial examples for every input, mapping distant inputs to unusually close outputs. However, without the backdoor, it is provably impossible (under standard cryptographic assumptions) to generate any such adversarial examples in polynomial time. Our theoretical and preliminary empirical findings demonstrate a fundamental power asymmetry between model trainers and model users.
\end{abstract}

\ifnum\iacrcc=0
\fi
\ifnum\lncs=0
 \ifnum\iclr=0
  \ifnum\icml=0
    \newpage
    \tableofcontents
    \newpage
  \fi
 \fi
\fi
\ifnum\iacrcc=0

\fi

\section{Introduction}

Recent history has demonstrated the immense utility of deep neural networks (DNNs). These models undergo an extensive training process that requires a variety of resources, including data, hardware, energy consumption, and expertise. Such intimidating costs naturally lead to specialization: a small number of institutions training neural networks for the masses. Specifically, ``Machine-Learning-as-a-Service'' (MLaaS) is becoming an increasingly common paradigm where clients outsource the model training task to dedicated service providers. Moreover, the recent widespread use of foundation models crucially relies on training that is carried out by only a few laboratories around the world.

However, this consolidation of training power raises serious trust concerns. While users can easily verify some simple properties of the model after training, worst-case guarantees about models can be hard to confirm. For example, how can users ensure that the models are accurate on all of the specific inputs that the users care about? Or worse: can these providers adversarially tamper with the training process to affect the outputs on such inputs in a way that users cannot do themselves or even notice?  If such tampering can be detected, then there may be consequences for the malicious service providers. As such, an adversary would likely want their tampering to remain \emph{undetectable}. This state of affairs begs the following question: 
\begin{center}
    \ifnum\icml=0
        \emph{Can an adversary train a DNN in such a way that the tampering is undetectable\\but gives the adversary more control over the outputs than everyone else?}
    \else
        \emph{Can an adversary train a DNN in such a way that the tampering is undetectable but gives the adversary more control over the outputs than everyone else?}
    \fi
\end{center}

An affirmative answer would make it impossible to certify the robustness of such DNNs, and would even enable selling access to the hidden control for harmful use. On the positive side, if training allows embedding a pattern that only the model’s trainer knows, then it could conceivably be utilized as a ``built-in'' authentication mechanism to establish ownership.

\subsection{Our Results}

We demonstrate how in a large class of DNNs, such a power asymmetry exists between trainers (model creators) and users, where the notion of ``power'' is viewed in terms of \emph{adversarial examples}. Adversarial examples can take on various forms. \emph{Sensitivity-based} adversarial examples have been extensively studied, where small, adversarially chosen perturbations in the input lead to drastic and unexpected changes in the output. We focus on the dual notion of \emph{invariance-based} adversarial examples, where large, adversarially chosen changes in the input lead to unusually small changes in the output (e.g., \cite{DBLP:conf/iclr/JacobsenBZB19,DBLP:conf/icml/TramerBCPJ20,DBLP:conf/emnlp/SongRS20}). Such adversarial examples can be quite harmful, as one can use these to craft false negatives or plant false positives in sensitive systems.

\begin{figure*}
  \centering
  \begin{subfigure}[t]{0.3\textwidth}
    \centering
    \includegraphics[width=\linewidth]{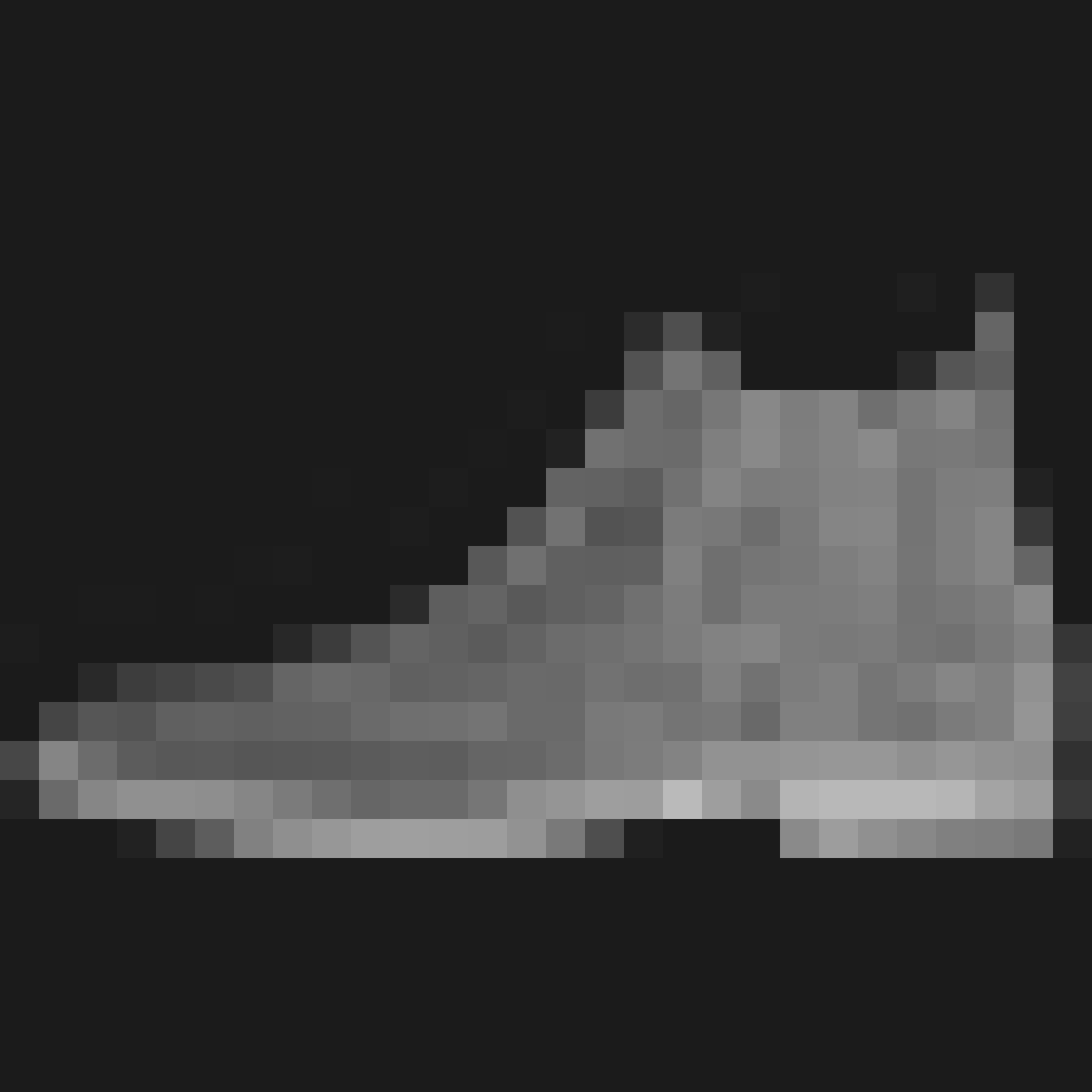}
    \caption*{Original}
  \end{subfigure}
  \hfill
  \begin{subfigure}[t]{0.3\textwidth}
    \centering
    \includegraphics[width=\linewidth]{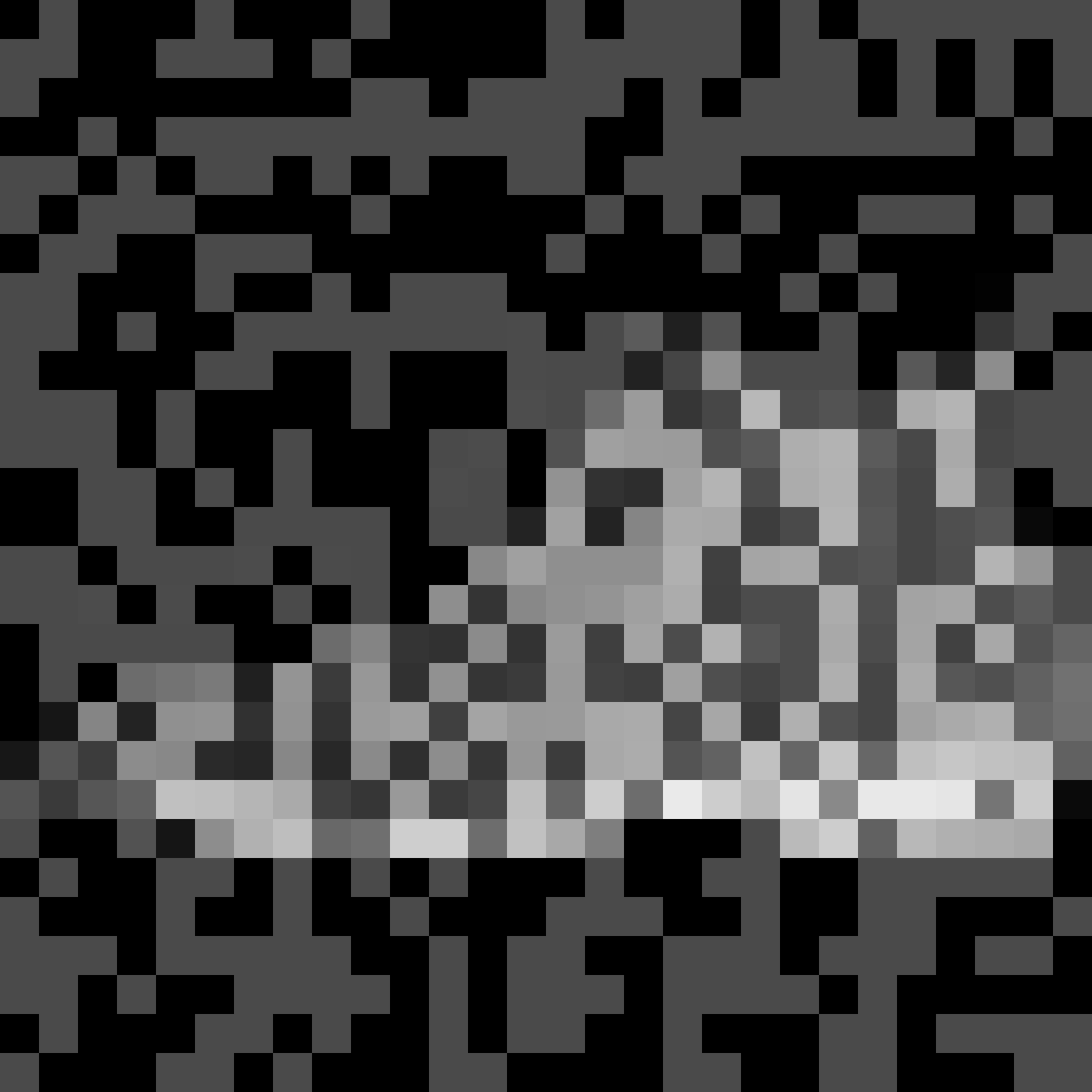}
    \caption*{Backdoored}
  \end{subfigure}
  \hfill
  \begin{subfigure}[t]{0.3\textwidth}
    \centering
    \includegraphics[width=\linewidth]{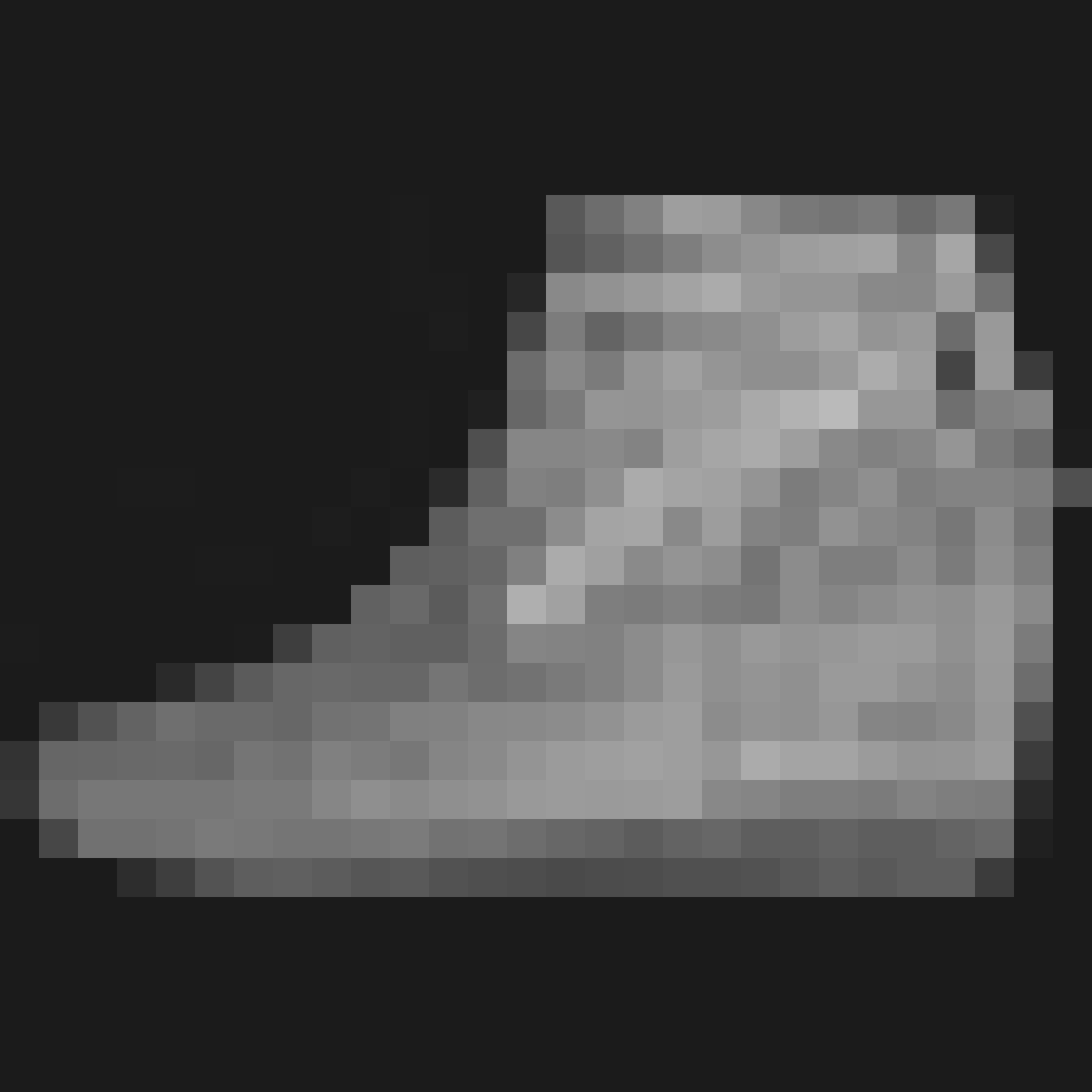}
    \caption*{Same Class as Original}
  \end{subfigure}
  \caption{Two scaled images of ankle boots in the Fashion-MNIST dataset (left and right) along with a backdoored version of the original image (center). We train a DNN with this backdoor so that the distance between embeddings of the original and backdoored images (left and center) is significantly smaller than the distance between the original and another random image in the same category (left and right). See \Cref{sec:implementation-DNN} for more details.}
  \label{fig:three-boot-images}
\end{figure*}

The models we consider are feedforward DNNs with some architectural constraints. 
\begin{enumerate}[label={\emph{Constraint \arabic*}:},
leftmargin=*,
    ref={\arabic*}]
    \item\label{item:arch-assumption-compression} The first layer is a frozen compressing $m$-by-$n$ Gaussian matrix. 
    \item\label{item:arch-assumption-lipschitz} The composition of the remaining layers is \emph{bi-Lipschitz} (with distortion $\beta_\mathrm{upper}$):  Small changes in their input cannot cause very large changes in outputs and vice-versa.  They are unrestricted otherwise.
    \item\label{item:arch-assumption-discrete} The inputs are discrete, i.e., integers from a bounded range.
\end{enumerate} 

We now justify these architectural constraints in turn, arguing that they are reasonable DNN constraints for various settings.

Constraint \ref{item:arch-assumption-compression} can be viewed as an instance of Random Feature learning \citep{DBLP:conf/nips/RahimiR07}. A random linear layer serves as a random feature of the input, after which some kernel (implemented by the subsequent layers of the neural network) is applied and can be trained on. Compressing Gaussian matrices satisfying Constraint \ref{item:arch-assumption-compression} are useful for data-processing because they approximately preserve the geometry of input data while reducing dimension ~\citep{johnson1984extensions,DBLP:conf/stoc/IndykM98}. Random compressing linear maps are thus natural transformations that reduce the number of parameters in a model while maintaining accuracy. 

The requirement that the matrix is Gaussian (its entries are i.i.d. normal) is mainly for simplicity of analysis. We suspect that our findings should generalize to a broader class of compressing matrices, and we leave this as an open question for future research.

Constraint \ref{item:arch-assumption-lipschitz} is satisfied as long as the activation functions are bi-Lipschitz (e.g., Leaky ReLU, see \Cref{def:leaky-relu}) and all layers besides the first have a bounded condition number (see \eqref{eqn-def-cond-number}). Both of these choices have precedent in the literature. A number of works have explored the benefits of deliberately enforcing Lipschitzness in various forms, to improve robustness to adversarial examples (e.g., \cite{maas2013rectifier,DBLP:conf/icml/CisseBGDU17,DBLP:journals/corr/YoshidaM17,DBLP:conf/cvpr/JiaTGX17,DBLP:conf/nips/BansalCW18,DBLP:conf/iclr/MiyatoKKY18,DBLP:conf/aaai/HuangLLYWL18,DBLP:journals/csysl/PauliKBKA22,DBLP:journals/jmlr/DucotterdGBPNU24}). Some of these works even show direct \emph{quality improvements} when enforcing Lipschitzness (e.g., \cite{DBLP:journals/corr/YoshidaM17,DBLP:conf/iclr/MiyatoKKY18}). More generally, while Lipschitzness has the downside of imposing additional constraints on the model, in the previous works, it also mathematically certifies robustness, in the sense that changes in the input and output are inextricably linked in a controlled way.\footnote{While requiring bi-lipschitzness seems to go against our goal of planting adversarial examples, looking ahead, the reason we need bi-lipschitzness is to ensure adversarial robustness in all layers except for the first. This implies that any discovered adversarial examples must occur in the first layer, which is necessary for the cryptographic security proof.}

To justify Constraint \ref{item:arch-assumption-discrete}, we emphasize that data ultimately needs to be discretized up to some precision in practice. Furthermore, in many domains (e.g., text), inputs are already discrete.  In images, common formats represent pixel intensities by integers in a bounded range like 0 to 255.

We now more precisely define what we mean by invariance-based adversarial examples. Subject to Constraint \ref{item:arch-assumption-discrete} above, we will consider DNNs defining a function $M : \Z^n \to \R^\ell$.\footnote{We additionally confine the inputs to be bounded.  We omit this technicality for now.} For distinct inputs $\vecx, \vecx' \in \Z^n$ and $\delta > 0$, we say that $(\vecx, \vecx')$ is a \emph{$\delta$-colliding} example for the model $M$ if \[ \norm{M(\vecx') - M(\vecx)} \leq \delta,\]
where $\norm{\cdot}$ refers to the Euclidean ($\ell_2$) norm. (As $\vecx' \neq \vecx$, we are guaranteed that $\norm{\vecx' - \vecx} \geq 1$.) Therefore, as $\delta$ approaches $0$, the model $M$ becomes more contractive for $(\vecx, \vecx')$. As such, we can view the pair $(\vecx, \vecx')$ as an invariance-based adversarial example for $M$, where smaller $\delta$ indicates a stronger adversarial example.

Our main finding is that the creator of the model $M$ possesses an advantage in creating $\delta$-colliding inputs over a user, even one that is adversarially minded.  The creator does so by planting a \emph{backdoor} $\vecz \in \Z^n$ into the model.  This backdoor allows it to find a $\delta$-colliding partner $\vecx' = \vecx + \vecz$ for any input $\vecx$. In contrast, the adversary on their own cannot compute any pair $\vecx, \vecx'$ that is anywhere near $\delta$-colliding.

The power asymmetry between the model creator and adversary is measured by the \emph{backdoor strength}
\begin{equation}
\label{eq:strength}
\mathrm{bs}(M; \vecz) = \frac{\min_{Adv\colon Adv(M) \to (\vecx, \vecx')} \norm{M(\vecx') - M(\vecx)}}{\max_{\vecx, \vecx' = \vecx + \vecz} \norm{M(\vecx') - M(\vecx)}}, 
\end{equation}
where the minimum in the numerator is taken over all pairs $\vecx, \vecx'$ produced by an \emph{efficient} adversary $Adv$ that is given $M$ as its input.\footnote{For an alternate notion of backdoor strength, see \Cref{relative-strength-notion}.} Both the numerator and the denominator optimize the same functional; the difference is that the denominator is computed by the model creator, while the numerator is computed by an adversary that has no knowledge of the backdoor. The larger $\mathrm{bs}(M; \vecz)$ is, the larger power the backdoor provides. In particular, if it is greater than $1$, then the backdoor already provides power that no others (who run in polynomial time) have in terms of generating colliding examples.

Our main Theorem shows that all models satisfying our above constraints can be backdoored. The formal statement is in~\Cref{sec:main-thm-sec}.

\begin{reptheorem}{end-to-end-backdoors-nns-statement}
    Every efficient training algorithm $\mathcal{A}$ that outputs a DNN $M_\mathcal{A}$ subject to Constraints \ref{item:arch-assumption-compression}, \ref{item:arch-assumption-lipschitz}, and \ref{item:arch-assumption-discrete} can be modified into an efficient \emph{backdoored} training algorithm $\mathcal{B}$ that, in addition to DNN $M_\mathcal{B}$, outputs a backdoor $\vecz$ so that
    \begin{enumerate}
        \item\label{item:undetectability-intro} The total variation distance between the descriptions of $M_{\mathcal{A}}$ and $M_{\mathcal{B}}$ (including all weights and parameters) is $\epsilon = \tilde{O}(\sqrt{m/n})$; and 
        \item the backdoor strength of $\vecz$ with respect to $M_{\mathcal{B}}$ is at least \[ \tilde{\Omega}\biggl(\frac{2^{n/m}}{\sqrt{nm} \cdot \beta_\mathrm{upper}(M_\mathcal{A})}\biggr), \] under standard cryptographic assumptions.      
    \end{enumerate}
\end{reptheorem}

The first property guarantees that backdooring does not change any stochastic property of the models trained by $\mathcal{A}$ up to error $\epsilon$. For instance, if $M_\mathcal{A}$ classifies cats and dogs with $99\%$ accuracy, then $M_\mathcal{B}$ will have accuracy at least $99\% - \epsilon$.  No algorithm can tell $M_{\mathcal{B}}$ from $M_{\mathcal{A}}$ with advantage $\epsilon$ or more.

The second property, however, gives the model creator an exponentially larger (in the compression ratio $n/m$) advantage in producing collisions compared to any efficient adversary $Adv$. \Cref{concrete-parmeter-nn-backdoor} in \Cref{sec:backdoors-nns} provides an illustrative parameter setting that exhibits exponential backdoor strength.

The efficiency assumption on $Adv$ in~\eqref{eq:strength} is crucial.  Without it, no ``backdoor'' $\vecz$ of strength exceeding $1$ can exist because the adversary can discover $\vecz$ by exhaustive search.  Theorem~\ref{end-to-end-backdoors-nns-statement} demonstrates that computational limitations on $Adv$ severely constrain the quality of the colliding pairs it can produce. We additionally highlight that in \Cref{end-to-end-backdoors-nns-statement}, the backdoored algorithm is different only in how the \emph{randomness} is generated for the first layer of the DNN; all other aspects of the backdoored training algorithm (including training data, weight updates, etc.) are identical to the honest training algorithm. 

\subsection{Interpretations}

One can view these backdoors in two ways. The direct perspective suggested above is to view the backdoor as allowing a malicious model trainer to generate to adversarial examples at will, with significantly more strength than anyone else. Alternatively, one can flip the threat model and view the backdoor as a natural, ``built-in'' authentication mechanism to establish \emph{ownership} or \emph{provenance} of a model's training. Below, we elaborate more on this use case of our backdoor notion.
\begin{theorem}[Informal]\label{watermarking-statement} There is an efficient (public) verification algorithm $V$ such that the following holds. Every efficient training algorithm $\mathcal{A}$ that outputs a DNN $M_\mathcal{A}$ subject to Constraints \ref{item:arch-assumption-compression}, \ref{item:arch-assumption-lipschitz}, and \ref{item:arch-assumption-discrete} can be modified into an efficient \emph{authenticated} training algorithm $\mathcal{B}$ that, in addition to DNN $M_\mathcal{B}$, outputs a short proof $\vecpi$ so that
        \begin{enumerate}
        \item The total variation distance between the descriptions of $M_{\mathcal{A}}$ and $M_{\mathcal{B}}$ (including all weights and parameters) is $\epsilon = \tilde{O}(\sqrt{m/n})$; 
        \item $\Pr(V(M_{\mathcal{B}}, \vecpi) = 1) = 1$, where the notation $V(M_{\mathcal{B}}, \vecpi)$ denotes that $V$ takes in the full description of the model $M_{\mathcal{B}}$ and the proof $\vecpi$ as inputs; and
        \item $\Pr(V(M_{\mathcal{B}}, \vecpi') = 1) \leq 1/n^{\omega(1)}$, where $Adv$ is any efficient probabilistic adversary and $\vecpi'$ is sampled from $Adv(M_{\mathcal{B}})$. Here, the notation $Adv(M_{\mathcal{B}})$ means that the adversary $Adv$ is given the full description of $M_{\mathcal{B}}$ as input.
    \end{enumerate}
\end{theorem}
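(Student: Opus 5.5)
The plan is to reuse the backdoor construction behind \Cref{end-to-end-backdoors-nns-statement} and take the backdoor itself as the proof, i.e.\ $\vecpi = \vecz$. Recall how $\mathcal{B}$ works. It is identical to $\mathcal{A}$ except for the first layer. Instead of an honest Gaussian matrix $\matA \in \R^{m\times n}$, it samples a planted matrix in which a short nonzero integer vector $\vecz$ (entries in a small bounded range) is an approximate kernel vector, so that $\norm{\matA\vecz}$ is tiny. This is in the spirit of planting a secret in a continuous-LWE / Gaussian lattice instance: for example, project the honest Gaussian rows onto the hyperplane orthogonal to $\vecz$ and add small noise. The first property then follows verbatim from the undetectability part of \Cref{end-to-end-backdoors-nns-statement}. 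The subsequent layers are produced by running $\mathcal{A}$'s training on the same data with the planted first layer, which is a post-processing of $\matA$. Hence the total variation bound $\tilde O(\sqrt{m/n})$ on $\matA$ transfers to the full model description.

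Next we define the verifier. $V(M,\vecpi)$ reads the first-layer matrix $\matA$ from the description of $M$ and accepts if and only if three conditions hold:
\begin{itemize}
\item $\vecpi$ is a nonzero integer vector;
\item $\norm{\vecpi}_\infty$ is within the allowed input range;
\item $\norm{\matA\vecpi} \le \tau$, where the threshold $\tau$ is fixed to the guaranteed planted bound.
\end{itemize}
Checking the first layer directly avoids the bi-Lipschitz layers altogether, so verification is a single matrix–vector product and $V$ is efficient. Completeness is immediate if the planting guarantees $\norm{\matA\vecz}\le\tau$ deterministically. If the bound holds only with overwhelming probability, $\mathcal{B}$ can resample $\matA$ until it holds, which changes the distribution by a negligible amount and so costs only negligible total variation. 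This gives $\Pr(V(M_\mathcal{B},\vecz)=1)=1$.

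Soundness is where the work lies, and it reduces to the cryptographic hardness used in \Cref{end-to-end-backdoors-nns-statement}. Suppose an efficient $Adv$ outputs, with non-negligible probability, a short nonzero integer $\vecpi'$ with $\norm{\matA\vecpi'}\le\tau$. Then the pair $(\mathbf{0},\vecpi')$, or $(\vecx,\vecx+\vecpi')$ for any valid $\vecx$, is a $\tau$-collision of the first layer. The hardness assumption says that finding short integer vectors in the approximate kernel of a Gaussian matrix (a SIS/CLWE-type problem) is infeasible even when the matrix is planted, since the planted distribution is indistinguishable from the honest one. The proof of \Cref{end-to-end-backdoors-nns-statement} already shows that no efficient adversary achieves first-layer collisions much below roughly $2^{-n/m}$ times a polynomial factor, and it does this at the matrix level before the Lipschitz layers enter.

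The main obstacle is the gap between the planted bound $\tau$ and that adversarial hardness threshold. Soundness needs $\tau$ to be strictly smaller than what any efficient adversary can achieve, which is exactly the requirement that the backdoor strength exceed $1$ by a super-polynomial factor. With the parameters of \Cref{concrete-parmeter-nn-backdoor}, where the strength is exponential in $n/m$, I would set $\tau$ at the planted scale. The hardness bound then shows that any efficient $Adv$ succeeds with probability at most $1/n^{\omega(1)}$. If instead the adversary's success is easier to bound on honest Gaussian $\matA$, I would first argue soundness there and then transfer it to $M_\mathcal{B}$. That transfer needs computational indistinguishability, not just the statistical $\epsilon$ bound, because the statistical bound alone would only yield error $\epsilon$ rather than a negligible one.
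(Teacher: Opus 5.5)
\paragraph{Where the proposal matches the paper.} Your skeleton is the paper's. The proof is the planted $\vecz$. Completeness is deterministic because $|b_i|\le\kappa\sqrt n$. Soundness comes from the hardness of finding a bounded nonzero integer $\vecx$ with $\norm{\matA\vecx}_\infty$ small. Having $V$ read the first layer and check $\norm{\matA\vecpi}\le\tau$ is a legitimate variant of the paper's black-box check $\norm{M(\vecz)-M(\zero)}_2\le\delta_0$.

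\paragraph{The gap: transferring hardness to the planted matrix.} \Cref{sbp-assumption} is about Gaussian $\matA$, but $Adv$ sees the planted $\matA$. You justify the transfer by ``the planted distribution is indistinguishable from the honest one,'' and propose computational indistinguishability as the fix. You are right that the statistical bound alone only gives $\negl(n)+\epsilon$ with $\epsilon=\tilde O(\sqrt{m/n})$. But the proposed fix is false here. \Cref{thm:tightness} gives an efficient test, thresholding $\norm{\matA}_F^2$, with advantage $\Omega(\sqrt{m/n})$, which is non-negligible. So the planted and Gaussian matrices are not computationally indistinguishable in the cryptographic sense. Bounding the planted success probability by the Gaussian one plus a distinguishing advantage therefore cannot yield $1/n^{\omega(1)}$.

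\paragraph{The missing idea: a multiplicative transfer via R\'enyi divergence.} By \Cref{renyi-good-for-search}, for any event $E$ we have $\Pr_{\rho_0}(E)\ge\Pr_{\rho_1}(E)^2/e^{\renyi(\rho_1\|\rho_0)}$, and \Cref{matrix-backdoor-statement} gives $\renyi(\rho_1\|\rho_0)=o(1)$. Define a reduction that, given $\matA$, runs $\mathcal{A}$'s efficient training on $\matA$ to build the full model, hands it to $Adv$, and outputs the returned $\vecpi'$. Let $E$ be the event that it outputs a nonzero $\vecpi'\in[-B:B]^n$ with $\norm{\matA\vecpi'}_\infty\le\tau$. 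If this succeeds with probability $1/n^{C}$ infinitely often on planted matrices, it succeeds with probability $\Omega(1/n^{2C})$ on $\Norm(0,1)^{m\times n}$. That contradicts \Cref{sbp-assumption}. This is exactly the argument inside the proof of \Cref{backdoor-construction-statement}. Its conclusion is the approximate collision resistance property of \Cref{end-to-end-backdoors-nns-statement}, which is already stated for models output by $\backdoormodelgen$, and the paper's proof simply invokes it.

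\paragraph{Smaller corrections.}
\begin{itemize}
\item \Cref{sbp-assumption} gives hardness at an inverse-polynomial threshold (around $1/m^{\varepsilon}$), not near $2^{-n/m}$. Soundness only needs $\tau$ below that threshold, i.e.\ strength above $1$; no super-polynomial margin is required.
\item Property 1, as proved, relies on the paper's exact sampler. There $\vecz\in\{\pm1\}^n$ and each row's component along $\vecz$ is distributed as $\Norm(0,n)_{|\cdot|\le\kappa\sqrt n}$, which makes the density ratio exactly proportional to $N(\matA)$. A generic ``project onto $\vecz^\perp$ and add small noise'' CLWE-style planting is not covered by that analysis.
\end{itemize}
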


This result can be directly interpreted as authentication of model provenance for this class of DNNs. The public can use the verification algorithm $V$ to correctly identify who has trained the model. The one who has trained the model (using algorithm $\mathcal{B}$) has access to a proof $\vecpi$ that will make $V$ accept (by outputting $1$), but no one else can generate any accepting proof $\vecpi'$ in polynomial time, even if they see the full model description $M_{\mathcal{B}}$. Furthermore, this is all done without changing any of the properties of the training algorithm $\mathcal{A}$ or its associated model $M_{\mathcal{A}}$, as the total variation distance between $M_{\mathcal{A}}$ and $M_{\mathcal{B}}$ is small for $m \ll n$. In particular, \emph{none} of the input/output behavior of $M_{\mathcal{B}}$ statistically differs from the input/output behavior of $M_{\mathcal{A}}$.

The proof of \Cref{watermarking-statement} follows directly from \Cref{end-to-end-backdoors-nns-statement}; $\vecpi$ simply consists of the backdoor vector $\vecz$, and $V$ checks that the outputs of $\zero$ and $\vecz$ are sufficiently close under the model. We importantly note that our construction is much stronger than the properties listed above, but we state it this way for simplicity. In particular, the verification algorithm $V$ only needs black-box (i.e., input/output) access to the model $M_{\mathcal{B}}$ (in fact, only $2$ queries), and the authenticated training algorithm has significant flexibility in the choice of proof $\vecpi$.
Furthermore, one can strengthen \Cref{watermarking-statement} by turning the ``one-time'' proof $\vecpi$ into a reusable ``many-time'' notion by compiling the protocol with zero-knowledge proofs (ZKPs) \citep{DBLP:journals/siamcomp/GoldwasserMR89}. That is, many accepting proofs $\vecpi_1, \vecpi_2, \dots$ can be generated by the model trainer while ensuring that no adversary can generate any new accepting proofs, even if the adversary has access to all previously generated proofs $\vecpi_1, \vecpi_2, \dots$. While ZKP compilation is inefficient in practice for general $\mathsf{NP}$ relations, we expect that ZKPs in this case could be made efficient in practice since the verifier $V$ here is extremely simple and natural (i.e., running the model on two inputs). 

\subsection{Cryptographic Assumptions \& The Johnson-Lindenstrauss Lemma}

Even without the ability to efficiently generate backdoors, \Cref{end-to-end-backdoors-nns-statement} is meaningful.  It implies that \emph{every} model subject to our constraints contains $\delta$-colliding pairs of inputs that are inaccessible to every efficient algorithm.  In the special case of a single-layer linear network, 
a random Gaussian matrix implements the~\cite{johnson1984extensions} embedding (JL). \cite{adaptive-JL} found that finding $\delta$-collisions (over a bounded integer domain) is intractable for such matrices.  

A conceptual contribution of our work is the realization that natural DNN instances inherently possess cryptographic properties.  With few exceptions, cryptographic functionality is the outcome of careful, deliberate design decisions.  Minor changes in implementation can destroy security.  Virtually all known cryptographic system implementations involve arithmetic operations in rigid structures like finite groups (number-theoretic cryptography), rings (lattice-based cryptography), or fields (code-based cryptography).  Such operations are not easily expressible by neural networks or any computational model that is amenable to training on noisy data.

Cryptographic constructions are rigid because ``non-rigid'' constructions are almost always insecure.  Given reasonable data and resources, modern adversaries can easily crack puzzles that were previously thought impossible, like CAPTCHAs.  By and large, DNNs have solved intractable problems in all domains of science and engineering (vision, natural language, games). Cryptography stands out as a notable exception.  Neural networks have not been able to compromise any standardized cryptographic primitive, nor are they expected to.  Hardness assumptions, including those underlying our construction, have been extensively scrutinized in the post-quantum standardization effort~\citep{NIST}. Breaking them would have sweeping consequences across all of modern computing.

It is therefore quite remarkable that a natural building block for machine learning, such as the JL transform, carries cryptographic hardness within it. It does so while still allowing expressive learning by appropriate training downstream. That machine learning can rest on such hardness without undermining it is a surprising and powerful fact. Moreover, we find it intriguing that the cryptographic problems embedded in the JL transform have the same source of hardness as the assumptions used in post-quantum cryptography: that computational lattice problems cannot be solved in polynomial time in the worst-case (i.e., LWE)~\citep{regev2009lattices}.

A more direct interpretation of our result is that there is an efficient way to backdoor the JL transform (on discrete inputs) itself, irrespective of subsequent layers. We believe that this perspective is illuminating in its own right, independently of the extension to DNNs.

\subsection{Related Work}
Many works explore backdoors in neural networks for generating adversarial examples (e.g., \cite{DBLP:journals/corr/abs-1708-06733,DBLP:journals/corr/abs-1712-05526,turner2018clean,DBLP:conf/ndss/LiuMALZW018,DBLP:conf/nips/ShafahiHNSSDG18,DBLP:conf/acl/QiLCZ0WS20,DBLP:conf/issta/ZhangDTGYJ21,DBLP:conf/bigdataconf/LiuKKP21,DBLP:conf/nips/0001CK22,DBLP:conf/focs/GoldwasserKVZ22,Zehavi2023FacialMisrecognition,DBLP:conf/nips/DragunsGMW24,DBLP:conf/nips/KalavasisKOSVZ24,DBLP:journals/corr/abs-2509-20714,DBLP:journals/corr/abs-2605-04209,DBLP:journals/corr/abs-2605-13214}). We focus on the works that are most related to ours below, as the others are empirical in nature and lack provable undetectability guarantees to the best of our knowledge.

\paragraph{Backdoors in neural networks} \cite{DBLP:conf/focs/GoldwasserKVZ22} initiated the line of research that shows how to plant cryptographically undetectable backdoors to generate (sensitivity-based) adversarial examples in machine learning models. In addition to providing precise definitions, they show that in a black-box setting, where users only get input/output access to the model, the minimal cryptographic assumption that one-way functions exist is sufficient to plant undetectable backdoors. In the more difficult white-box setting, where parameters of the model are given in the clear (as ours are), they give two constructions, both limited to one hidden layer (as opposed to supporting DNNs). 
    
\cite{DBLP:conf/focs/GoldwasserKVZ22} do not analyze whether an adversary \emph{without knowledge of the backdoor} can generate adversarial examples of similar (or even better) strength than what the backdoor provides. Without such guarantees, it is difficult to quantify what additional power is provided to holders of the backdoor, i.e., to gauge its strength.  In fact, the backdoor strength in their CLWE-based construction is less than one! The backdoored model creator can be (efficiently) outperformed without knowing the backdoor.\footnote{We are grateful to \ifnum\anon=1 {\emph{[name(s) redacted for double-blind submission]} }\else{Miranda Christ and Sam Gunn }\fi for pointing this out to us.} In contrast, our backdoor strength is provably exponentially large.
A secondary difference is that their constructions are only \emph{computationally} undetectable, in the sense that no \emph{efficient} algorithm can distinguish between the honest and backdoored models. Ours, on the other hand, is \emph{statistically} undetectable, meaning that no distinguishing algorithm exists, regardless of its computational efficiency. 

Very recently, complementary backdoor constructions have appeared, including \citep{DBLP:journals/corr/abs-2605-04209,DBLP:journals/corr/abs-2605-13214}. \citep{DBLP:journals/corr/abs-2605-04209} gives a Sparse-PCA-based computational undetectability result for pre-trained image classifiers, together with extensive experiments. \citep{DBLP:journals/corr/abs-2605-13214} proposes a latent-space construction for modern classifiers and reports empirical robustness to several post-training defenses. These works target more practical image-classification settings, but under different assumptions and with computational or empirical notions of undetectability rather than the statistical white-box undetectability studied here.

\paragraph{Backdoors under strong cryptographic assumptions}
\cite{DBLP:conf/nips/KalavasisKOSVZ24} extend the work of \cite{DBLP:conf/focs/GoldwasserKVZ22} to plant backdoors in the white-box setting for a class of neural networks and language models. Their main technical tool is to leverage \emph{indistinguishability obfuscation}, a heavy cryptographic hammer used to transform black-box guarantees into white-box ones~\citep{DBLP:journals/jacm/BarakGIRSVY12}. While indistinguishability obfuscation is believed to exist under well-founded cryptographic assumptions \citep{DBLP:conf/stoc/JainLS21,DBLP:conf/eurocrypt/JainLS22,DBLP:conf/tcc/RagavanVV24}, these constructions are concretely inefficient and remain far from practical. Furthermore, in the results of \cite{DBLP:conf/nips/KalavasisKOSVZ24}, even the ``honestly'' generated models must themselves contain (neural network implementations of) obfuscated Boolean circuits. In addition to the practical inefficiency, their honest models are more contrived
and less natural than the ones subject to our Constraints \ref{item:arch-assumption-compression}, \ref{item:arch-assumption-lipschitz}, and \ref{item:arch-assumption-discrete}.

\paragraph{Adversarial alterations}
\cite{Zehavi2023FacialMisrecognition} demonstrate that one can manipulate the final layer of an already trained facial-recognition network to cause a selected individual to no longer match, or to force two selected individuals to be indistinguishable, all while leaving overall accuracy essentially intact. Their construction supports multiple simultaneous manipulations. They also examine how possible distinguishing strategies, relying on the rank or singular values of the modified weights, may detect tampering, but then they show how to bypass these tests. Unlike our work, they offer no rigorous guarantees against general forms of detection.

\section{Overview of Our Construction}

Our procedure for planting a randomly sampled backdoor $\vecz \in \{\pm 1\}^n$ consists of rejection sampling a Gaussian matrix $\matA$ (i.e., the first layer of the DNN) conditioned on $\norm{\matA \vecz}_{\infty}$ being very small.\footnote{The choice of $\infty$-norm is not significant and mainly adopted for ease of analysis.}  Previous work shows that under standard cryptographic assumptions, it is impossible to generate any $\vecz'$ in polynomial time such that $\norm{\matA \vecz'}_{\infty}$ is anywhere close to as small as $\norm{\matA \vecz}_{\infty}$, where $\matA$ is a Gaussian compressing matrix~\citep{DBLP:conf/stoc/Bruna0ST21,DBLP:vv25,adaptive-JL}. This quantitative disparity between $\norm{\matA \vecz}_{\infty}$ and $\norm{\matA \vecz'}_{\infty}$ is exactly the power of our backdoor.  Efficiently sampling $\matA$ and $\vecz$ \emph{jointly} allows for much smaller $\norm{\matA \vecz}_{\infty}$ than efficiently sampling $\vecz$ conditioned on $\matA$.

In \Cref{sec:backdoor-neural-net-overview}, we show how such an $\matA$ and $\vecz$ can be directly leveraged into an undetectable backdoor for a full DNN. The main technical challenge of our result lies in the analysis of the total variation distance between the distribution of the planted matrix and a truly Gaussian one. As we explain below, this is closely related to the concentration of the number of $\vecz$'s such that $\norm{\matA \vecz}_{\infty}$ is small. Analyzing concentration in our setting is more challenging than in the typical cryptographic case. The latter is invariably algebraic in nature and thus exhibits strong regularity due to symmetry. Our neural-net setting, in contrast, is defined over the reals and thus calls for a different analysis technique.

\subsection{Backdooring Gaussian Matrices}
\label{sec:sampler}

The central algorithm underlying our results is a sampler that outputs a matrix $\matA \in \R^{m \times n}$ along with a backdoor $\vecz \in \{ \pm 1 \}^n$ such that $\norm{\matA \vecz}_{\infty} \leq \kappa \sqrt{n}$. Crucially, we will set parameters such that $\matA$ is \emph{statistically} close to $\Norm(0,1)^{m \times n}$ (in total variation distance), but it is \emph{computationally} hard to find any such vector $\vecz$ (or even remotely as compressing) given only $\matA$.  The algorithm is simple.  The main challenge is in analyzing it.

\begin{figure}[H]
    \centering
    \fbox{
    \ifnum\icml=0
        \begin{minipage}{0.95\textwidth}
    \else
        \begin{minipage}{0.45\textwidth}
    \fi
    \begin{center}
    Matrix Backdoor Construction (sketch)
    \end{center}
        $\backdoormatrixgen(1^n, 1^m)$:
        \begin{enumerate}
            \item Sample $\vecz \sim \{\pm 1\}^n$ uniformly at random.
            \item For $i \in [m]$:
                Rejection sample $\veca_i \sim \Norm(0,1)^n$ until $\left| \veca_i^\top \vecz \right| \leq \kappa \sqrt{n}$.

        \item Define $\matA \in \R^{m \times n}$ with rows $\veca_1, \cdots, \veca_m \in \R^n$.
        \item Output $(\matA, \vecz)$.
        \end{enumerate}
    \end{minipage}
    }
    \caption{A simplified description of our backdoor algorithm for the a compressing Gaussian matrix (first layer of the DNN). See \Cref{fig:backdoor-alg} for the full description.}
    \label{fig:backdoor-alg-intro}
\end{figure}
Since $\left|\veca_i^\top \vecz \right| \leq \kappa \sqrt{n}$ for all $i \in [m]$, it is clear that $\norm{\matA \vecz}_{\infty} \leq \kappa \sqrt{n}$, but it is not a priori clear what the distribution of $\matA$ is. It might be tempting to think that the distribution of $\matA$ here is identically $\Norm(0,1)^{m \times n}$, since it is Gaussian and conditioned only on $\norm{\matA \vecz}_{\infty} \leq \kappa \sqrt{n}$. However, this intuition is \emph{incorrect}. The reason is that different vectors $\veca_i \in \R^n$ might have differing numbers of solutions $\vecz$ (i.e., $\vecz$ that $\left|\veca_i^\top \vecz\right| \leq \kappa \sqrt{n}$), and the vectors $\veca_i \in \R^n$ with more solutions are \emph{more likely} to be sampled than those with fewer solutions.  That is, vectors $\veca_i$ with a larger number of solutions are overcounted. For some intuition as to why, the choice of $\vecz \sim \{\pm 1\}^n$ in the first step already restricts the possible vectors $\veca_i \in \R^n$ that can pass the rejection sampling into a subset (in fact, a hyperplane slab) $S_{\vecz} \subseteq \R^n$, defined by
\[
S_{\vecz} = \left\{\veca \in \R^n : -\kappa \sqrt{n} \leq \veca^\top \vecz \leq \kappa \sqrt{n}\right\}.   
\]
For example, $\zero \in S_{\vecz}$ for all $\vecz \in \{\pm 1\}^n$, while $\vecv := (2\kappa \sqrt{n}, 0, \cdots, 0) \in \R^n$ is not in any $S_{\vecz}$. Let
\[ N(\matA) := \left| \left\{ \vecz \in \{ \pm 1 \}^n : \norm{\matA \vecz}_{\infty} \leq \kappa \sqrt{n} \right\} \right| \]
denote the number of solutions $\matA$ has. We show in \Cref{planted-and-null-ratio} that the density function of $\matA$ output by our algorithm is exactly off by the multiplicative factor of $N(\matA)$.

From here, we combine the following facts:
\begin{itemize}
    \item For a large range of parameters $\kappa$, we show that the number of solutions $N(\matA)$ exhibits strong concentration in the second moment, in the sense that
    \[ \E\left[N(\matA)^2\right] \leq (1 + o(1)) \cdot \E\left[N(\matA)\right]^2, \]
    as long as $m = o(n)$. In \Cref{sec:concentration-num-solutions-overview} below, we detail how we arrive at such a bound. (See \Cref{prop:second} and \Cref{second-moment-bound-num-solutions-sbp} for the precise statements.)
    \item For any density functions $\rho_0(\matA)$ and $\rho_1(\matA)$ that differ by a multiplicative factor $N(\matA)$, the \emph{R\'{e}nyi divergence} (denoted $\renyi$) between $\matA$ and $\Norm(0,1)^{m \times n}$ is equal to
    \[ \renyi\left(\matA || \Norm(0,1)^{m \times n}\right) = \ln \left( \frac{\E[N(\matA)^2]}{\E[N(\matA)]^2} \right). \]
    (See \Cref{density-ratio-renyi}.) Therefore, by the bound $\ln(1+x) \leq x$ and concentration of $N(\matA)$ in the second moment, we have
    \[ \renyi\left(\matA || \Norm(0,1)^{m \times n}\right) \leq o(1).  \]
    \item Finally, going through Pinsker's inequality, a R\'{e}nyi divergence bound implies a total variation distance ($\TV$) bound, giving
    \begin{align*} \TV\left(\matA, \Norm(0,1)^{m \times n}\right) &\leq \sqrt{\renyi\left(\matA || \Norm(0,1)^{m \times n}\right)}
    \\&\leq o(1).
    \end{align*}
\end{itemize}

One detail that has been so far neglected is the efficiency of the matrix backdoor algorithm given in \Cref{fig:backdoor-alg-intro}, specifically, the rejection sampling. If $\kappa = 1/n^{\omega(1)}$, then rejection sampling would take a superpolynomial number of iterations. To remedy this, we instead first sample a scalar $b_i$ from the Gaussian distribution $\Norm(0, n)$ conditioned on having support $[-\kappa \sqrt{n}, \kappa \sqrt{n}]$, and then we directly sample $\veca_i \sim \Norm(0,1)^n$ but conditioned on the affine constraint that $\veca_i^\top \vecz = b_i$. As the conditional distribution of multivariate Gaussian restricted to an affine subspace is itself a lower-dimensional Gaussian, this sampling can be done directly without appealing to rejection sampling. To see why $\Norm(0,n)$ (conditioned on $[-\kappa \sqrt{n}, \kappa \sqrt{n}]$) is the right distribution for $b_i$, note that for any fixed $\vecz \in \{\pm 1\}^n$, it holds that $\matA \vecz \sim \Norm\left(0, \norm{\vecz}_2^2\right) = \Norm(0, n)$ over the randomness of $\matA$. For more details, we defer to \Cref{sec:backdoor-for-JL}.

\paragraph{Related work}
We mention that our statistical undetectability argument shares a lot of similarity to \citep{Aubin_2019,DBLP:conf/stoc/PerkinsX21,abbe2021proofcontiguityconjecturelognormal}. These works established the statistical threshold by relying on a statistical indistinguishability argument between the null distribution and a distribution with a planted solution. Their notion of planted solutions closely relate to the task of generating an instance together with a known solution (i.e., a backdoor).

\subsection{Concentration in the Number of Solutions}\label{sec:concentration-num-solutions-overview}

Backdoors in cryptographic hash functions are the basis of many popular authentication and signature schemes \citep{schnorr, GPV}.  All known constructions are algebraic in nature.  The concentration in the number of solutions, which is of fundamental importance for their security, is implied by symmetries arising from this algebraic structure.  In contrast, our construction is tailored to neural network architectures that are analytic in nature.

Specifically, number-theoretic constructions such as the \cite{pedersen} hash are so symmetric that the number of solutions is the same for every instance $\matA$, enabling perfect indistinguishability between the backdoored and null distributions.  Lattice-based constructions like the \cite{ajtai} hash do exhibit some variance.  The only difference between Ajtai's hash and ours is that Ajtai's matrix $\matA$ consists of integers modulo $q$ and the function $\matA \vecx$ is evaluated in modular arithmetic (and is not rounded).  Even though the number of preimages of a given output depends on $\matA$, the dependence is weak because Ajtai's function is \emph{pairwise} independent across different output pairs $(\matA\vecx, \matA\vecy)$.

In contrast, when $\matA \vecx$ is evaluated over reals as in neural networks, two outputs $\matA \vecx$ and $\matA \vecy$ will exhibit correlations that depends on the distance between $\vecx$ and $\vecy$.  Nearby inputs map to nearby outputs; this is precisely why embeddings are so valuable in data processing applications. 
Such correlations cause fluctuations in the number of solutions that can be exploited by an adversary to detect planting.  Indeed, in \Cref{thm:tightness}, we show that an efficient adversary \emph{can} find evidence of planting in our construction, but only with vanishingly small (yet cryptographically non-negligible) advantage $O(\sqrt{m/n})$.

Our \Cref{backdoor-construction-statement,backdoor-construction-statement-npp}, however, show that no adversary, efficient or not, can do better than this (up to a logarithmic factor).  Somewhat surprisingly, the pairs of inputs $\vecx, \vecy$ that are responsible for the fluctuation in the solution count are not the proximate ones but the almost orthogonal ones.  Even though $\matA \vecx$ and $\matA \vecy$ are only weakly correlated on such inputs, the sheer number of pairs $(\vecx, \vecy)$ conspires to create a noticeable discrepancy.

\subsection{Backdoors in Neural Networks}\label{sec:backdoor-neural-net-overview}

\Cref{sec:sampler} shows how to sample a compressing matrix $\matA$ together with a vector $\vecz \in \{\pm 1\}^n$ such that $\matA$ is statistically close to i.i.d. Gaussian and $\norm{\matA \vecz}_{\infty}$ is very small. Moreover, under standard cryptographic assumptions, it is impossible to generate any vector $\vecz'$ (given only $\matA$) in polynomial time such that $\norm{\matA \vecz'}_{\infty}$ is remotely as small as $\norm{\matA \vecz}_{\infty}$.

To backdoor a DNN, the backdoored matrix $\matA$ can be substituted into the first layer.  Thanks to Constraint \ref{item:arch-assumption-compression}, the substition will not affect model functionality by much.  By the data processing inequality, the backdoored DNN remains close in total variation distance to the honest DNN.

Let $g : \R^m \to \R^\ell$ denote the DNN with the first compressing layer removed (i.e., the full model except for multiplication by $\matA$). That is, the model $M$ is the composition of $g$ and $\matA$. To see the utility of our backdoor, on any input $\vecx \in \Z^n$, the backdoor owner can set $\vecx' = \vecx + \vecz$, so that $(\vecx, \vecx')$ is a $\kappa \sqrt{mn}$-collision for $\matA$.  As long as $g$ is $\beta^+$-Lipschitz, it will also be a $\beta^+ \kappa \sqrt{mn}$-collision for $M$.
On the other hand, if $g^{-1}$ is $\beta^-$-Lipschitz
and an adversary were to come up with a $\delta$-colliding pair $(\vecx, \vecx')$ for $M$, the same pair would be $\delta/\beta^-$-colliding for $\matA$, violating its cryptographic security. 
\Cref{sec:backdoors-nns} formally defines our notion of undetectable backdoors and proves that we achieve it.

\ifnum\iclrORicml=1

\section{Basic Implementation and Experiments}

\subsection{Proof of Concept Implementation}\label{sec:implementation-DNN}

We run a lightweight, proof of concept demonstration of our backdoor. To do so, we train a DNN (subject to Constraints \ref{item:arch-assumption-compression}, \ref{item:arch-assumption-lipschitz}, and \ref{item:arch-assumption-discrete}) to perform well on a simple yet nontrivial learning task. Additionally, we implement our backdoor strategy for this DNN to see the backdoor in action. While the emphasis of this work is on the theoretical contribution, the purpose of this implementation is to show that our DNN constraints are sensible and that our backdoors are practical and simple. We emphasize that these initial experiments are not meant to be an end-to-end robust demonstration of backdoors but rather a simple proof of concept towards the viability of our approach.\footnote{A toy implementation is available \href{https://openreview.net/attachment?id=Clh5CRZpjD&name=supplementary_material}{here}.}

Specifically, we consider the task of generating a semantic embedding model for the Fashion-MNIST dataset~\citep{xiao2017/online}. In short, this dataset consists of $70000$ $28 \times 28$ grayscale images (split into $60000$ training images and $10000$ test images), each labeled with one of ten possible types of articles of clothing. It is considered a more challenging and complex variant of the standard MNIST dataset of handwritten digits~\citep{lecun1998mnist}.

We briefly explain our motivation for considering such models. We focus on image models because the backdoor vector $\vecz \in \{\pm 1\}^n$ can be directly interpreted as a prescription of how to change pixel values to go from the original image to the backdoored image. Moreover, images in this dataset are represented with 8 bits, so inputs are naturally discrete with bounded integer entries. We use DNNs for \emph{embeddings} instead of for other tasks (e.g., classification) because all linear layers after the first layer need to be expanding or square to satisfy Constraint \ref{item:arch-assumption-lipschitz}. For example, in classification, the final layer would be $10$-dimensional, likely requiring an intermediate layer to be compressing. This intermediate layer would have a non-trivial kernel and thus would not be bi-Lipschitz.

One technicality is that adding and subtracting $1$ from pixels that are either purely black or purely white do not technically conform to the original image file format (e.g., could be $-1$ or $256$ instead of between $0$ and $255$). Moreover, we add a scaled-up version of $\vecz$ to the image (instead of just $\vecz$) for a larger effect on the input. To handle these edge cases, we scale the pixel values of the input images after training (including those in \Cref{fig:three-boot-images}) to be ``more gray'' so that adding the scaled-up $\vecz$ does not take the image out of bounds.

\ifnum\icml=0
\begin{figure}
    \centering
    \begin{tikzpicture}[
  >=Latex,
  layer/.style = {draw, rounded corners=4pt, fill=gray!10, minimum width=16mm, align=center},
  dimlabel/.style = {font=\small}
]

\def\hInput{1.75}   
\def\hA{0.75}       
\def\hB{1.25}       
\def\hC{2.00}       
\def\hOut{3}     

\def\xsep{0.8}

\node[layer, minimum height=\hInput cm] (L0) {\textbf{Input}:\\784-pixel
\\image};
\node[layer, minimum height=\hA cm, right=\xsep of L0] (L1) {256\\neurons};
\node[layer, minimum height=\hB cm, right=\xsep of L1] (L2) {512\\neurons};
\node[layer, minimum height=\hC cm, right=\xsep of L2] (L3) {1024\\neurons};
\node[layer, minimum height=\hOut cm, right=\xsep of L3] (L4) {\textbf{Output}:\\2048-dim.\\embedding};

\draw[->, line width=0.9pt] (L0.east) -- (L1.west);
\draw[->, line width=0.9pt] (L1.east) -- (L2.west);
\draw[->, line width=0.9pt] (L2.east) -- (L3.west);
\draw[->, line width=0.9pt] (L3.east) -- (L4.west);

\end{tikzpicture}
    \caption{Basic architecture of the DNN for our Fashion-MNIST embedding model. The only compressing layer is the first layer, as later compressing layers are not allowed due to Constraint~\ref{item:arch-assumption-lipschitz}.}
    \label{fig:fmnist-dnn-architecture}
\end{figure}

\else 

\begin{figure}
    \centering
    \begin{tikzpicture}[
  >=Latex,
  layer/.style = {draw, rounded corners=4pt, fill=gray!10, minimum width=10mm, align=center},
  dimlabel/.style = {font=\small}
]

\def\hInput{1.75}   
\def\hA{0.75}       
\def\hB{1.25}       
\def\hC{2.00}       
\def\hOut{3}     

\def\xsep{0.25}

\node[layer, minimum height=\hInput cm] (L0) {\textbf{Input}:\\784-px
\\image};
\node[layer, minimum height=\hA cm, right=\xsep of L0] (L1) {256\\neurons};
\node[layer, minimum height=\hB cm, right=\xsep of L1] (L2) {512\\neurons};
\node[layer, minimum height=\hC cm, right=\xsep of L2] (L3) {1024\\neurons};
\node[layer, minimum height=\hOut cm, right=\xsep of L3] (L4) {\textbf{Output}:\\2048-dim.\\embedding};

\draw[-, line width=0.9pt] (L0.east) -- (L1.west);
\draw[-, line width=0.9pt] (L1.east) -- (L2.west);
\draw[-, line width=0.9pt] (L2.east) -- (L3.west);
\draw[-, line width=0.9pt] (L3.east) -- (L4.west);

\end{tikzpicture}
    \caption{Basic architecture of the DNN for our Fashion-MNIST embedding model. The only compressing layer is the first layer, as later compressing layers are not allowed due to Constraint~\ref{item:arch-assumption-lipschitz}.}
    \label{fig:fmnist-dnn-architecture}
\end{figure}
\fi

The basic architecture of our model is shown in \Cref{fig:fmnist-dnn-architecture}. The first layer is a frozen $256 \times 784$ matrix that is either i.i.d. Gaussian (in the honest case) or from \Cref{fig:backdoor-alg-intro} (in the backdoored case). We use the $\leakyR_{\alpha}$ activation function with the default PyTorch setting of $\alpha = 0.01$ \citep{DBLP:conf/nips/PaszkeGMLBCKLGA19}. To ensure compliance with Constraint~\ref{item:arch-assumption-lipschitz}, we include a \emph{semi-orthogonal loss} term to ensure that the linear layers (except for the first) have small condition numbers. For a rectangular weight matrix $\matA$, this penalty term takes the form $\norm*{\matA^\top \matA - \matI}_{F}$ (where $\norm{\cdot}_{F}$ is the Frobenius norm), to ensure that the columns of $\matA$ are close to orthonormal.

Our embedding model enables a linear classifier (applied after the DNN embedding) to have $\approx 89\%$ accuracy on the test set. On the other hand, purely linear models achieve at most $\approx 84\%$ accuracy \citep{xiao2017/online}. When we scale the inputs to ensure that backdoored images do not go out of bounds, the classification accuracy of our DNN drops to $\approx 86.5\%$ under the distribution shift. See \Cref{fig:three-boot-images} for a visual demonstration of our backdoor. Depending on concrete parameter choices regarding statistical undetectability, we can make the distances in embedding space between the colliding pairs orders of magnitude smaller than other inputs in the same class. We leave the precise estimate of total variation distance for concrete parameter choices as a direction for future work. 

\subsection{Computational Hardness of Collision Finding}

We tested the intractability of our backdoors for a single layer network against four natural algorithms.  While our experiments are preliminary, they indicate that the strength of our backdoor is extraordinarily large.

In our experiments, we sampled a matrix ``backdoored'' by the all-ones string $\vecz = (+1)^n$ and ran the four algorithms below to look for competitive solutions in $\{-1, 0, +1\}^n$.  As all algorithms are invariant under column signing, the $(+1)^n$ planted solution is sufficient for our experiments.  

The restriction of the solution entries to $\{-1, 0, 1\}$ in lieu of the full range $\{-B, \dots, B\}$ is restrictive.  Previous work \citep{adaptive-JL} indicates that the extended range can increase the strength by at most a factor of $B$.  We thus expect our conclusions to extend to reasonable values of $B$ (e.g., 128).

To establish a lower bound on what value of $\kappa$ we need for computational hardness, we look at the LLL algorithm for finding short vectors in lattices \citep{LLL}.  When $\kappa$ is extremely small, the planted solution stands out as the nonzero integer vector $\vecx$ that minimizes the objective $\norm{\vecx}^2 + (1/\kappa^2n)\norm{\matA\vecx}^2$.  As long as there are no competing solutions within a factor of $2^{(n-1)/2}$, LLL is bound to recover this solution.  Thus LLL prevents too small a choice of $\kappa$.  Our experiments (with values of $n$ up to $50$) indicate then when $n = (10/3)m$, LLL fails to identify the planted solution as long as $\kappa \geq 10^{-m/3}$.  Beyond $n = 50$, we expect the rounding errors arising from finite-precision arithmetic to present an insurmountable obstacle to LLL for any $\kappa$.

\begin{table}[ht]
\centering
\caption{A comparison of $\norm{\matA \vecz}$, where $\matA \in \R^{m \times n}$. In the ``planted'' column, $\vecz$ is the planted solution, and in columns A, B, and C, $\vecz$ are the best solutions outputted by the respective algorithms. }
\begin{tabular}{cc|c|ccc}
$n$ & $m$ & planted & A & B & C \\ \hline
$100$ & $10$ & $1.6 \cdot 10^{-10}$ & $0.14$ & $0.03$ & $0.09$ \\
$100$ & $20$ &  $2.6 \cdot 10^{-10}$ & $0.31$ & $0.09$ & $0.16$ \\
$100$ & $30$ & $3.3 \cdot 10^{-10}$ & $0.36$ & $0.13$ & $0.22$ 
\end{tabular}

\label{table:collisions}
\end{table}

All of the other algorithms we tested are analytic in nature and should not be substantially affected by the choice of $\kappa$.  \Cref{table:collisions} compares how well algorithms A, B, and C perform compare to the planted $\vecz$ in terms of minimizing $\norm{\matA \vecz}$. The algorithms are as follows:
\begin{itemize}
\item Algorithm A picks the unit vector that indexes the column of $\matA$ of minimum 2-norm.
\item Algorithm B is Algorithm $Cool$ of \cite{adaptive-JL} (with $B = 1$), reporting the best of 100 runs randomized by the order of the sequence.
\item Algorithm C is Algorithm $Kernel Round$ of \cite{adaptive-JL}, reporting the best of 100 runs. (As $B = 1$, the rounding is simplified to the sign of $\vecx$.)
\end{itemize}

In all instances, the experiments indicate backdoor strength roughly $1/\kappa \approx 10^9$.  On the other hand, the D'Agostino-Pearson normality test ({\tt scipy.stats.normaltest}) gives strong evidence of normality of the samples: All rows of a $100$ by $30$ backdoored matrix have p-values exceeding $0.1$.

\section{Concluding Remarks}

Our theoretical and preliminary empirical analysis demonstrate that neural networks whose first layer is a compressing matrix of random Gaussian weights can be strongly backdoored for invariance-based examples on discrete inputs.  \Cref{end-to-end-backdoors-nns-statement} guarantees that backdoors of strength roughly $2^{n/m}/\beta_{\mathrm{upper}}$ can be planted without affecting any properties of the model.

Our experiments indicate that this theoretical guarantee is, if anything, conservative.  Backdoors of effectively unlimited strength appear difficult to break.  Can the analysis be strengthened to explain these findings? Our \Cref{end-to-end-backdoors-nns-statement} is in fact fairly tight.  The reason that our experiments appear to exceed its predictions is that when $\kappa$ is very small, the null and planted models $M_\mathcal{A}$ and $M_{\mathcal{B}}$ can no longer be statistically indistinguishable.  It is, however, quite plausible that they remain \emph{computationally} so:  The only tests that can tell them apart are inefficient. That is, for all practical purposes, their differences are undetectable. We leave this intriguing possibility open for future investigation.

There are many other fascinating questions for future work. For example, are there other or stronger forms of control that the adversary can have on the model, instead of access to an $\vecx'$ that collides with any $\vecx$? More broadly, can we make use of different or \emph{new} cryptographic assumptions to enable backdoors in DNNs or other architectures?
\fi

\ifnum\iclr=1
\section*{Reproducibility Statement} The main component of our work is theoretical, with full proofs provided in the appendix. We additionally provide the source code for our preliminary experiments in the supplementary materials portion of the submission.
\fi

\ifnum\lncs=1
\bibliographystyle{splncs04}
\else
    \ifnum\iacrcc=1
    \else
        \ifnum\iclr=1
        \else
            \ifnum\icml=1
            \else
                \bibliographystyle{alpha}
            \fi
        \fi
    \fi
\fi
\ifnum\iclrORicml=1
    \ifnum\anon=0
\section*{Acknowledgments}
We are particularly grateful to Vinod Vaikuntanathan for enlightening discussions. We thank Sam Gunn and Miranda Christ for informing us that one can generate stronger adversarial examples in the CLWE-based construction of \cite{DBLP:conf/focs/GoldwasserKVZ22} than what the backdoor directly provides. We are additionally grateful for useful discussions with Justin Y. Chen, Yevgeniy Dodis, Sanjam Garg, Shafi Goldwasser, Keyon Vafa, and Brent Waters.

The first author is supported by an NSERC Discovery Grant. The second author is supported by the European Research Council (ERC) under the EU’s Horizon 2020 research and innovation programme (Grant agreement No. 101019547) and the Cariplo CRYPTONOMEX grant. The third author is supported in part by DARPA under Agreement Number HR00112020023, NSF CNS-2154149, NSF DGE-2141064, and a Simons Investigator award. Part of this work was done while the third author was visiting Bocconi University, supported by European Research Council (ERC) under the EU’s Horizon 2020 research and innovation programme (Grant agreement No. 101019547). This work is supported in part by a gift from the Renaissance Philanthropy Fund.
\fi
    \ifnum\icml=1
        \section*{Impact Statement}
For certain architectures of deep neural networks, malicious users could efficiently sample randomness for the weights of the network in a way that enables them to generate colliding examples in the model $M$. These consist of pairs of fixed-precision inputs $(\mathbf{x}, \mathbf{x}')$ for which $\| M(\mathbf{x}) - M(\mathbf{x'}) \|$ is unusually small.

For systems where such colliding examples provide significant utility (e.g., certain setups of facial recognition systems), an agent who has influence over the randomness used at the beginning of training has false positive or false negative capabilities that others do not. Our results are primarily theoretical. In particular, our work does not question or undermine the security or privacy of any system in use or near production.

Our work has a direct prevention strategy, namely, secure randomness generation. The history of cryptography has repeatedly seen instances in which control of randomness is surprisingly influential. We view our work as providing an example of this phenomenon in the context of deep neural networks instead of within standard cryptographic applications (e.g., encryption or authentication). As a result, this work does not require qualitatively new security solutions but rather points to the importance of secure randomness generation in ML.

If an honest party controls the randomness, we recommend NIST SP 800-90A/B/C as a concrete best practice for secure random bit generation. Importantly, to prevent this risk, one should not delegate randomness generation for model weights to untrusted parties.

If one does not have access to a model's randomness and is simply using a model, an alternate mitigation strategy is to compel parties training a model to use randomness from a trusted randomness beacon (e.g., NIST randomness beacon). Here, comparing the weights of the model to the beacon's history is a simple verification approach that greatly limits adversarial behavior.

Stepping back, cryptography has an established tradition of openly documenting and studying potential vulnerabilities so that effective countermeasures can be taken well ahead of deployment. We believe our work naturally fits into this paradigm.

        \bibliography{refsICML}
    \fi
    \ifnum\iclr=1
        \bibliography{refs}
    \fi
\newpage
\fi

\ifnum\iclrORicml=1
\appendix
\fi

\ifnum\icml=1
\onecolumn
\fi

\section{Preliminaries}
For a natural number $n \in \N$, we let $[n]$ denote the set $\{1, 2, \cdots, n\}$. For real numbers $a, b \in \R$ with $a \leq b$, we let $[a, b]$ denote the continuous interval $\{x \in \R : a \leq x \leq b\}$. Similarly, we let $(a, b)$ denote the open continuous interval $\{x \in \R : a < x < b\}$, and we let $[a, b)$ denote the continuous interval $\{x \in \R : a \leq x < b\}$. For $B \in \N$, we let $[-B : B]$ denote the discrete interval
\[ [-B : B] = [-B, B] \cap \Z = \{-B, -B+1, \cdots, -1, 0, 1, \cdots, B-1, B\}.\]
We say a function $f : \N \to \R_{>0}$ is negligible if for all $c > 0$, $\lim_{n \to \infty} f(n) \cdot n^c = 0$. We use the notation $\negl(n)$ to denote a function that is negligible (in its input $n$). We similarly use the notation $\poly(n)$ to denote a function that is at most $n^{O(1)}$. As shorthand, we say an algorithm is p.p.t. if it runs in probabilistic polynomial time. 

We let $\one(\varphi) \in \{0,1\}$ denote the indicator variable corresponding to some logical predicate $\varphi$. For a set $S \subseteq \R$, we let $U(S)$ denote the uniform distribution over $S$, where the appropriate measure (i.e., discrete uniform or continuous uniform) will be clear from the choice of $S$. For a distribution $\cD$ and $n \in \N$, we let $\cD^n$ denote the distribution with $n$ i.i.d. samples from $\cD$. We let $\Norm(\mu, \sigma^2)$ denote the univariate Gaussian (or normal) distribution with mean $\mu$ and variance $\sigma^2$. For a parameter $\gamma \in \R_{> 0}$, we let $\Norm(\mu, \sigma^2)_{|\cdot| \leq \gamma}$ denote the conditional distribution of $X \sim \Norm(\mu, \sigma^2)$ given $|X| \leq \gamma$. For a vector $\vecmu \in \R^n$ and a positive semi-definite matrix $\matSigma$, we let $\Norm(\vecmu, \matSigma)$ denote the multivariate Gaussian distribution with mean $\vecmu$ and covariance matrix $\matSigma$. Note that we allow $\matSigma$ to be singular, in which case the multivariate Gaussian will be degenerate (i.e., have support in a proper subspace of $\R^n$). We let $\matI_n \in \R^{n \times n}$ denote the identity matrix. We will use the fact that given $\vecmu$ and $\matSigma$, it is efficient to sample from $\Norm(\vecmu, \matSigma)$, and similarly, given $\mu$, $\sigma$, and $\gamma$, it is efficient to sample from $\Norm(\mu, \sigma^2)_{|\cdot| \leq \gamma}$. For theoretical simplicity, we do not explicitly write out the finite precision of all computations, but all calculations will still go through with $\poly(n)$ bits of precision.

\subsection{Divergences}

Let $\rho_0, \rho_1$ be density functions of distributions.
\begin{definition}
    The \emph{R\'{e}nyi divergence} between $\rho_1$ and $\rho_0$ is given by
    \[ \renyi(\rho_1 || \rho_0) = \ln \left( \int \frac{\rho_1(x)^2}{\rho_0(x)} dx \right) = \ln \left( \E_{X \sim \rho_0}\left[\frac{\rho_1(X)^2}{\rho_0(X)^2} \right] \right). \]
\end{definition}

\begin{definition}
    The \emph{Kullback-Leibler divergence} between $\rho_1$ and $\rho_0$ is given by
    \[ \KL(\rho_1 || \rho_0) = \int \rho_1(x) \ln \left( \frac{\rho_1(x)}{\rho_0(x)} \right) dx.\]
\end{definition}

\begin{definition}
    The \emph{total variation distance} between $\rho_1$ and $\rho_0$ is given by
    \[ \TV(\rho_1, \rho_0) = \frac{1}{2} \int \left| \rho_1(x) - \rho_0(x) \right| dx.\]
\end{definition}

\begin{lemma}\label{renyi-kl-tv} For any two distributions $\rho_0$ and $\rho_1$,
    \[ \TV(\rho_1, \rho_0) \leq \sqrt{\frac{\KL(\rho_1 || \rho_0)}{2}} \leq \sqrt{\frac{\renyi(\rho_1 || \rho_0)}{2}}. \]
\end{lemma}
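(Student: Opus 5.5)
The statement chains two inequalities, and I will prove them separately. The first, $\TV \le \sqrt{\KL/2}$, is Pinsker's inequality. The second, $\KL \le \renyi$, follows from Jensen's inequality applied to the concave logarithm. Throughout I assume $\rho_1$ is absolutely continuous with respect to $\rho_0$; otherwise $\KL$ and $\renyi$ are both $+\infty$ and there is nothing to prove.

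For $\KL \le \renyi$, write the KL divergence as an expectation under $\rho_1$:
\[ \KL(\rho_1 || \rho_0) = \E_{X\sim\rho_1}\left[\ln \frac{\rho_1(X)}{\rho_0(X)}\right] \le \ln \E_{X\sim\rho_1}\left[\frac{\rho_1(X)}{\rho_0(X)}\right] = \ln \int \frac{\rho_1(x)^2}{\rho_0(x)}\,dx = \renyi(\rho_1||\rho_0). \]
This is monotonicity of Rényi divergences in the order, used only between orders $1$ and $2$. Since the square root is monotone, this gives the second inequality.

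For Pinsker's inequality, I use the standard reduction to two-point distributions:
\begin{enumerate}
\item Let $S=\{x:\rho_1(x)\ge\rho_0(x)\}$, and set $p=\rho_1(S)$ and $q=\rho_0(S)$. Then $\TV(\rho_1,\rho_0)=p-q$.
\item By the data processing inequality (equivalently, the log-sum inequality applied on $S$ and on its complement), $\KL(\rho_1||\rho_0)\ge \mathrm{kl}(p,q)$. Here $\mathrm{kl}(p,q)=p\ln\frac pq+(1-p)\ln\frac{1-p}{1-q}$ is the binary divergence.
\item It then suffices to show the scalar inequality $\mathrm{kl}(p,q)\ge 2(p-q)^2$. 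Fix $p$ and define $f(q)=\mathrm{kl}(p,q)-2(p-q)^2$ for $q\le p$. Then $f(p)=0$ and
\[ f'(q)=-\frac{p-q}{q(1-q)}+4(p-q)=(p-q)\left(4-\frac{1}{q(1-q)}\right)\le 0, \]
because $q(1-q)\le 1/4$. So $f$ is nonincreasing on $[0,p]$, and therefore $f(q)\ge f(p)=0$ for every $q\le p$.
\end{enumerate}
Rearranging $\KL \ge 2\,\TV^2$ gives $\TV\le\sqrt{\KL/2}$.

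The only step with real content is the calculus in step 3, and it is routine. The rest consists of measure-theoretic bookkeeping: the absolute-continuity assumption and the boundary cases $q\in\{0,1\}$, which are handled by continuity or by the convention that infinite divergences make the bound trivial.
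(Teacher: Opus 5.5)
Your proof is correct and takes the same route as the paper. The paper simply cites Pinsker's inequality for the first bound, and cites the monotonicity of R\'{e}nyi divergence in its order (van Erven--Harremo\"{e}s, Theorem 3) for the second. You prove both cited facts directly: your Jensen step is exactly the order-$1$-to-order-$2$ case of that monotonicity, and your two-point reduction with the derivative computation $f'(q)=(p-q)\bigl(4-\frac{1}{q(1-q)}\bigr)\le 0$ is the standard proof of Pinsker.
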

\begin{proof}
    The left-hand inequality is Pinsker's inequality. The right-hand inequality is a standard fact of R\'{e}nyi divergences \citep[Theorem 3]{DBLP:journals/tit/ErvenH14}.
\end{proof}

\begin{lemma}\label{density-ratio-renyi}
    For any density function $\rho_0$ and any nonnegative-valued function $f$, for the density function $\rho_1$ given by
    \[ \rho_1(x) \propto \rho_0(x) f(x), \]
    it holds that
    \[ \renyi(\rho_1 || \rho_0) = \ln \left( \frac{\E_{X \sim \rho_0}\left[f(X)^2\right]}{\E_{X \sim \rho_0}[f(X)]^2} \right). \]
\end{lemma}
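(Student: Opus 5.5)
The plan is to make the normalization explicit and substitute it into the definition of R\'{e}nyi divergence. Let $Z := \E_{X \sim \rho_0}[f(X)] = \int \rho_0(x) f(x)\,dx$. We assume $0 < Z < \infty$; otherwise $\rho_1$ is not a well-defined density. Then $\rho_1(x) = \rho_0(x) f(x)/Z$, and the likelihood ratio is
\[ \frac{\rho_1(x)}{\rho_0(x)} = \frac{f(x)}{Z} \]
wherever $\rho_0(x) > 0$. Since $\rho_1 \ll \rho_0$ by construction, points where $\rho_0$ vanishes contribute nothing to either side.

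Next, substitute this ratio into the second form of the definition of $\renyi$:
\[ \renyi(\rho_1 || \rho_0) = \ln\left( \E_{X \sim \rho_0}\left[\frac{f(X)^2}{Z^2}\right]\right) = \ln\left(\frac{\E_{X \sim \rho_0}[f(X)^2]}{\E_{X \sim \rho_0}[f(X)]^2}\right). \]
The constant $Z^2$ comes out of the expectation, and the right-hand side is exactly the claimed expression.

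There is no real obstacle in this argument. The only points that need care are the measure-theoretic ones: the equivalence of the two forms of the definition, which requires dividing by $\rho_0$ only on its support, and the case $\E[f^2] = \infty$. In that case both sides equal $+\infty$, so the identity still holds with the usual convention.
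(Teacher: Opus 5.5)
Your proposal is correct and matches the paper's proof. Both make the normalizing constant $\E_{X \sim \rho_0}[f(X)]$ explicit, substitute the likelihood ratio $f(x)/\E_{X \sim \rho_0}[f(X)]$ into $\E_{X \sim \rho_0}[\rho_1(X)^2/\rho_0(X)^2]$, and pull the constant out of the expectation; your remarks on the support of $\rho_0$ and on the case $\E_{X \sim \rho_0}[f(X)^2] = \infty$ are extra care the paper omits, but they do not change the argument.
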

\begin{proof}
    For $\rho_1$ to be a normalized probability distribution, it must hold that
    \[ \rho_1(x) = \frac{\rho_0(x) f(x)}{\int \rho_0(x') f(x') dx'} = \frac{\rho_0(x) f(x)}{\E_{X \sim \rho_0}[f(X)]}.\]
    We then have
    \begin{align*}
        \renyi(\rho_1 || \rho_0) &= \ln \left( \E_{X \sim \rho_0}\left[\frac{\rho_1(X)^2}{\rho_0(X)^2} \right] \right)
        \\&= \ln \left( \E_{X \sim \rho_0}\left[\frac{\rho_0(X)^2 f(X)^2}{\E_{X' \sim \rho_0}[f(X')]^2 \rho_0(X)^2} \right] \right)
        \\&= \ln \left( \E_{X \sim \rho_0}\left[\frac{f(X)^2}{\E_{X' \sim \rho_0}[f(X')]^2} \right] \right)
        \\&= \ln \left( \frac{\E_{X \sim \rho_0}\left[f(X)^2\right]}{\E_{X \sim \rho_0}\left[f(X)\right]^2} \right),
    \end{align*}
    as desired.
\end{proof}
We now state the following standard fact of R\'{e}nyi divergences.
\begin{lemma}\label{renyi-good-for-search}
    For any two distributions $\rho_0$ and $\rho_1$ and any event $E$, we have
    \[ \Pr_{\rho_0}(E) \geq \frac{\Pr_{\rho_1}(E)^2}{e^{\renyi(\rho_1 || \rho_0)}}.\]
\end{lemma}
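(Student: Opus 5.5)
The plan is to apply Cauchy--Schwarz to a change-of-measure identity. Write the probability of $E$ under $\rho_1$ as an expectation under $\rho_0$ weighted by the likelihood ratio, and then bound that weighted expectation by the second moment of the ratio, which is exactly $e^{\renyi(\rho_1 || \rho_0)}$ by the definition above.

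Concretely, I would carry out three steps.

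\begin{enumerate}
\item If $\renyi(\rho_1 || \rho_0)=\infty$, the right-hand side is $0$ and there is nothing to prove. Otherwise $\rho_1$ is absolutely continuous with respect to $\rho_0$ (finiteness of $\int \rho_1^2/\rho_0$ forces $\rho_1=0$ almost everywhere where $\rho_0=0$).

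\item Write the change of measure
\[ \Pr_{\rho_1}(E) = \int \one(x\in E)\,\rho_1(x)\,dx = \E_{X\sim\rho_0}\left[\one(X\in E)\,\frac{\rho_1(X)}{\rho_0(X)}\right]. \]

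\item Apply Cauchy--Schwarz under $\rho_0$:
\[ \Pr_{\rho_1}(E)^2 \le \E_{X\sim\rho_0}\left[\one(X\in E)^2\right]\cdot \E_{X\sim\rho_0}\left[\frac{\rho_1(X)^2}{\rho_0(X)^2}\right] = \Pr_{\rho_0}(E)\cdot e^{\renyi(\rho_1 || \rho_0)}. \]
Here I use $\one^2=\one$ and the second form of the definition of $\renyi$. Rearranging gives the claim.
\end{enumerate}

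The only step needing care is the measure-theoretic one: ensuring the ratio $\rho_1/\rho_0$ is well defined where it is integrated. This is handled by the case split on whether the divergence is finite.

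The same argument works verbatim for discrete distributions, with sums in place of integrals.
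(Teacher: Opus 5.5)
Your proposal is correct and follows the paper's proof: the paper also writes $\Pr_{\rho_1}(E)$ as the $\rho_0$-expectation of $\one(X\in E)\,\rho_1(X)/\rho_0(X)$ and applies Cauchy--Schwarz to get $\Pr_{\rho_1}(E)\le\sqrt{\Pr_{\rho_0}(E)\cdot e^{\renyi(\rho_1 || \rho_0)}}$. Your case split on whether the divergence is finite, used to justify absolute continuity, is a valid refinement that the paper leaves implicit.
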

\begin{proof}
    By Cauchy-Schwarz, we have
    \begin{align*}
        \Pr_{\rho_1}(E) = \E_{X \sim \rho_1}[\one(X \in E)] &= \E_{X \sim \rho_0}\left[\one(X \in E) \cdot \frac{\rho_1(X)}{\rho_0(X)}\right] 
        \\&\leq \sqrt{\E_{X \sim \rho_0}\left[\one(X \in E)^2 \right] \cdot \E_{X \sim \rho_0} \left[ \frac{\rho_1(X)^2}{\rho_0(X)^2} \right]} 
        \\&= \sqrt{\Pr_{\rho_0}(E) \cdot e^{\renyi(\rho_1 || \rho_0)}}. 
    \end{align*}
    Rearranging gives the desired result.
\end{proof}

\subsection{Number Balancing and Symmetric Binary Perceptrons}

We define the number balancing problem.
\begin{definition}
    The \emph{number balancing problem (NBP)} with parameters $\kappa : \N \to \R_{>0}$ and $B : \N \to \N$ is defined as follows. On input $\veca \sim \Norm(0,1)^n$, output $\vecx \in [-B : B]^n \setminus \{0^n\}$ such that $|\inner{\veca, \vecx}| \leq \kappa \sqrt{n}$, where $\kappa = \kappa(n)$ and $B = B(n)$. If unspecified, we take $B(n) = 1$.
\end{definition}

For $\kappa(n) \geq \Theta(1/2^n)$, we know that there exist $\{\pm 1\}^n$ solutions to NBP with high probability (so, in particular, there exist $[-B : B]^n \setminus \{0^n\}$ solutions) \citep{karmarkar1986probabilistic}. The best polynomial time algorithm, due to Karmarkar and Karp, achieves $\kappa(n) = 1/2^{\Theta(\log^2 n)}$~\citep{KK82} (for the most stringent case of $B = 1$).

For $\kappa(n) \leq 1/2^{\log^{3 + \varepsilon} n}$, we have computational hardness assuming sub-exponential hardness of worst-case lattice problems~\citep{DBLP:vv25}. Therefore, the following assumption is true assuming worst-case lattice problems are hard to solve:

\begin{assumption}\label{nbp-assumption}
    For all p.p.t. algorithms $\adver$ and $\varepsilon > 0$, and $B \leq \poly(n)$, 
    \[ \Pr_{\veca \sim \Norm(0,1)^n} \left( \vecx \gets \adver(\veca) : \vecx \in [-B : B]^n \setminus \{0^n\} \;\land\;  \left| \inner{\veca, \vecx} \right| \leq \frac{1}{2^{\log(n)^{3 + \varepsilon}}} \right) = \negl(n). \]
\end{assumption}

We can similarly define the symmetric binary perceptron problem.
\begin{definition}
    The \emph{symmetric bounded perceptron (SBP)} problem with parameters $\kappa : \N \to \R_{>0}$, $m : \N \to \N$, and $B : \N \to \N$ is defined as follows. On input $\matA \sim \Norm(0,1)^{m \times n}$, output $\vecx \in [-B : B]^n \setminus \{0^n\}$ such that $\norm{\matA \vecx}_{\infty} \leq \kappa \sqrt{n}$, where $\kappa = \kappa(n)$, $m = m(n)$, and $B = B(n)$. If unspecified, we take $B(n) = 1$.
\end{definition}

For $\kappa \geq \Theta(2^{-n/m})$, we know that there exist $\{\pm 1\}^n$ solutions to SBP with high probability (so, in particular, there exist $[-B : B]^n \setminus \{0^n\}$ solutions) \citep{Aubin_2019,DBLP:conf/stoc/PerkinsX21,abbe2021proofcontiguityconjecturelognormal}. The best polynomial time algorithm, due to Bansal and Spencer \citep{DBLP:conf/focs/Bansal10,BS20}, achieves $\kappa = O\left(\sqrt{m/n}\right)$ (for the most stringent case of $B = 1$).

For $B,n \leq \poly(m)$ and $\kappa \leq 1/(\sqrt{n} \cdot m^{\varepsilon})$, we have computational hardness assuming polynomial hardness of worst-case lattice problems~\citep{DBLP:vv25,adaptive-JL}. Therefore, the following assumption is true assuming worst-case lattice problems are hard to solve:

\begin{assumption}\label{sbp-assumption}
    For all p.p.t. algorithms $\adver$, $\varepsilon > 0$, and $B, n \leq \poly(m)$,
    \[ \Pr_{\matA \sim \Norm(0,1)^{m \times n}} \left( \vecx \gets \adver(\matA) : \vecx \in [-B : B]^n \setminus \{0^n\} \;\land\;  \norm{\matA \vecx}_{\infty} \leq \frac{1}{m^{\varepsilon}} \right) = \negl(n). \]
\end{assumption}

\section{Backdoors for Random Gaussian Projections}\label{sec:backdoor-for-JL}
The goal of this section is to prove the following theorem.
\begin{theorem}\label{matrix-backdoor-statement} 
For all $m  \leq n$, there is a p.p.t. algorithm $\backdoormatrixgen(1^n, 1^m)$ that outputs a matrix $\matA \in \R^{m \times n}$ and a vector $\vecz \in \{\pm 1\}^n$ such that the following hold:
\begin{itemize}
    \item We have
    \[\norm{\matA \vecz}_{\infty} \leq O\bigg( \frac{\sqrt{n}}{2^{n/m}} \bigg).\] 
    \item We have the statistical bounds
    \begin{align*} \TV\left(\matA, \Norm(0,1)^{m \times n}\right) &= O\left( \sqrt{\frac{m}{n}  \log(m/n)  + e^{-\Omega(m)}} \right),
    \\\renyi\left(\matA|| \Norm(0,1)^{m \times n}\right) &= O\left( \frac{m}{n}  \log(m/n)  + e^{-\Omega(m)} \right).
    \end{align*}
    \item The marginal distribution of $\vecz$ is uniform over $\{\pm 1\}^n$.
\end{itemize}
\end{theorem}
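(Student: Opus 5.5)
We instantiate the sampler of \Cref{fig:backdoor-alg-intro} with $\kappa = c\cdot 2^{-n/m}$ for a suitable constant $c$, but implement it without rejection sampling. First sample $\vecz \sim \{\pm 1\}^n$ uniformly. Then, independently for each $i \in [m]$, sample $b_i \sim \Norm(0,n)_{|\cdot| \leq \kappa\sqrt{n}}$ and $\veca_i \sim \Norm(0,\matI_n)$ conditioned on $\veca_i^\top \vecz = b_i$. The conditional law is explicit: $\veca_i = \vecg - \frac{\vecz\vecz^\top}{n}\vecg + \frac{b_i}{n}\vecz$ with $\vecg \sim \Norm(0,\matI_n)$. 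This is p.p.t. Since $|\veca_i^\top\vecz| = |b_i| \leq \kappa\sqrt n$ for every $i$, we get $\norm{\matA\vecz}_\infty \leq \kappa\sqrt n = O(\sqrt n/2^{n/m})$. Uniformity of $\vecz$ holds by construction.

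For the statistical bounds, we first observe that the joint density of $(\matA,\vecz)$ equals $2^{-n}\,\rho_0(\matA)\,\one(\norm{\matA\vecz}_\infty\leq\kappa\sqrt n)/p^m$. Here $\rho_0$ is the $\Norm(0,1)^{m\times n}$ density and $p = \Pr_{X\sim\Norm(0,n)}(|X|\leq\kappa\sqrt n)$. To see this, use $\veca_i^\top\vecz \sim \Norm(0,n)$ under $\rho_0$: conditioning $b_i$ to the interval and then resampling the orthogonal part is exactly $\rho_0$ restricted to the slab. Marginalizing over $\vecz$ gives the planted density $\rho_1(\matA) \propto \rho_0(\matA) N(\matA)$. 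By \Cref{density-ratio-renyi},
\[ \renyi(\rho_1\|\rho_0) = \ln\frac{\E[N(\matA)^2]}{\E[N(\matA)]^2}, \]
and \Cref{renyi-kl-tv} converts this into the stated TV bound. So everything reduces to a second-moment estimate of the form $\E[N^2]/\E[N]^2 \leq 1 + O(\frac mn\log(n/m)) + e^{-\Omega(m)}$. Note $\E[N]=2^n p^m$ with $p \approx \kappa\sqrt{2/\pi}$.

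To bound the second moment, write $\E[N^2] = \sum_{\vecz,\vecz'} \Pr(\text{both solve})$ and group the pairs by overlap $t = \inner{\vecz,\vecz'}/n \in[-1,1]$. By independence of rows, each pair contributes $q(t)^m$, where $q(t) = \Pr(|X|\leq\kappa\sqrt n,\,|Y|\leq\kappa\sqrt n)$ for $(X,Y)$ centered Gaussian with variance $n$ and correlation $t$. For small $\kappa$ and $|t|$ bounded away from $1$, $q(t)\approx p^2/\sqrt{1-t^2}$. Hence
\[ \frac{\E[N^2]}{\E[N]^2} = \sum_t \frac{\binom{n}{n(1+t)/2}}{2^n}\Bigl(\frac{q(t)}{p^2}\Bigr)^m, \]
which we split into three regimes.
\begin{itemize}
\item \emph{Near-orthogonal region}, $|t|\leq\delta$. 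The binomial weight is roughly Gaussian in $t$ with variance $1/n$, and $(1-t^2)^{-m/2}\approx e^{mt^2/2}$, up to a $(1+O(\kappa^2))$ correction per row. The sum is then $\E_{t}[e^{mt^2/2}] \approx (1-m/n)^{-1/2} = 1+O(m/n)$. The $\log(n/m)$ loss arises from handling $|t|$ up to $\sqrt{\log(n/m)/n}$ and the error terms there. This is the region the paper's overview identifies as the source of fluctuation.
\item \emph{Intermediate region}, $\delta<|t|<1-\eta$. Use the entropy bound $\binom{n}{n(1+t)/2}2^{-n}\leq e^{-n t^2/2}$ against the growth $(1-t^2)^{-m/2}$. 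Since $m\leq n$ with room, the exponent $-n t^2/2 + \frac m2 \ln\frac1{1-t^2}$ stays negative, giving $e^{-\Omega(m)}$ or smaller.
\item \emph{Near-identical region}, $|t|\geq 1-\eta$, including $\vecz'=\pm\vecz$. Here $q(t)\leq p$, so the contribution is at most $\binom{n}{\eta n}2^{-n}p^{-m}$. The choice $\kappa = c\,2^{-n/m}$ with $c$ large makes $p^{-m}\approx 2^{n}/c^m$, so this term is $e^{-\Omega(m)}$ once $H(\eta)$ is chosen small relative to $\ln c$.
\end{itemize}

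I expect the main obstacle to be making the estimate of $q(t)/p^2$ uniform across regimes. The ratio needs a two-sided estimate accurate to $1+O(\kappa^2 + t^2\cdot\text{small})$ near $t=0$, because $m$ copies of it are multiplied and the target is only $1+O(\frac mn\log)$. Near $|t|=1$, on the other hand, the ratio must be replaced by the crude bound $q\leq p$. I would first obtain the near-orthogonal estimate by writing $Y = tX + \sqrt{1-t^2}\,W$ and Taylor-expanding the Gaussian densities over the tiny window. I would then tune $\delta,\eta$ and the constant $c$ so that the three regimes glue together.
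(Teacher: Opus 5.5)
Your proposal is correct, and up to the second-moment estimate it is the paper's argument. The sampler, the identity $\rho_1\propto\rho_0 N$ (your explicit joint density is the computation in \Cref{conditional-distribution-identical,planted-and-null-ratio}), the R\'enyi-to-moment-ratio step and the overlap formula $\E[N^2]/\E[N]^2=\E_t[(q(t)/p^2)^m]$ all coincide with the paper.

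The routes differ in how you bound that expectation, and the obstacle you anticipate does not arise. Only an \emph{upper} bound on $q(t)/p^2$ is needed, and the paper gets $q(t)/p^2\le(1-t^2)^{-1/2}$ exactly for all $|t|<1$, with no $\kappa^2$ correction and no Taylor expansion. It follows from three facts:
given $X$, $Y$ is Gaussian with variance $n(1-t^2)$;
a centered Gaussian maximizes the mass of a centered interval;
and $\Pr(|W|\le\kappa/s)\le\Pr(|W|\le\kappa)/s$ for standard normal $W$ and $s\le 1$.

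The paper then uses two regions, not three. On $|t|<1-\delta$ it applies $1-t^2\ge e^{-\lambda t^2}$ with $\lambda=O(\log(1/\delta))$, together with $\E_t[e^{snt^2}]\le\E[e^{sZ^2}]=(1-2s)^{-1/2}$ (Rademacher moments are dominated by Gaussian ones). This gives $(1-\lambda m/n)^{-1/2}$. On $|t|\ge 1-\delta$ it uses $p^{-m}$ times the binomial tail. Since $H(\delta/2)=\Theta(m/n)$ is forced, $\lambda=\Theta(\log(n/m))$. This single slope, fitted at the far edge, is the true source of the paper's logarithm, not the window where $|t|$ is of order $\sqrt{\log(n/m)/n}$.

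Done carefully, your three-region split actually buys a bit more:
handle $|t|\le 1/2$ by the same MGF comparison with a constant slope, giving $1+O(m/n)$;
in the middle, use at most $n+1$ atoms, each of weight at most $e^{-n/8}$ and growth at most $\eta^{-m/2}$;
at the top, use $2\cdot 2^{nH(\eta/2)}p^{-m}$.
The total is $1+O(m/n)+e^{-\Omega(m)}$, with no logarithm, so the TV bound matches \Cref{thm:tightness} up to the $e^{-\Omega(m)}$ term.

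Two points in your sketch must be fixed for this to work.

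First, the pointwise bound $\binom{n}{n(1+t)/2}2^{-n}\le e^{-nt^2/2}$ overshoots the true atom weights by a factor of order $\sqrt n$ when summed. Just past a cutoff at $\sqrt{\log(n/m)/n}$ it totals about $\sqrt{m/\log(n/m)}$, which is not small. So if you use it, the intermediate region should start at a constant $|t|$, with everything below handled by the MGF comparison rather than a local-CLT heuristic.

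Second, in the near-identical region you need $2^{nH(\eta/2)}$ to be beaten by $c^{-m}$, i.e.\ $H(\eta/2)=O(m/n)$. Having $H(\eta)$ merely small compared to $\ln c$ is not enough: for fixed $\eta$ that term blows up when $m=o(n)$. Consequently $\ln(1/\eta)=\Theta(\log(n/m))$. Your middle-region exponent $-nt^2/2+\frac{m}{2}\ln\frac{1}{1-t^2}$ then stays $-\Omega(n)$ up to $t=1-\eta$ only because $m\log(n/m)=o(n)$. That is the ``room'' the argument actually uses.
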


Note that if $m = \omega(1)$ and $m = o(n)$, both statistical divergences become $o(1)$.

We also give a version of this theorem with slightly different parameters in the regime where $m = \Theta(1)$ (i.e., $m$ is fixed while $n$ grows).

\begin{theorem}\label{matrix-backdoor-statement-npp} 
For all $m = \Theta(1)$ and growing $n$, there is a universal constant $C > 0$ and a p.p.t. algorithm $\backdoormatrixgen(1^n, 1^m)$ that outputs a matrix $\matA \in \R^{m \times n}$ and a vector $\vecz \in \{\pm 1\}^n$ such that the following hold:
\begin{itemize}
    \item We have
    \[\norm{\matA \vecz}_{\infty} \leq O\left( \frac{n^C}{2^{n/m}} \right).\] 
    \item We have the statistical distance bounds
    \begin{align*} \TV\left(\matA, \Norm(0,1)^{m \times n}\right) &= O\left( \sqrt{\frac{\log n}{n}} \right),
    \\\renyi\left(\matA||\Norm(0,1)^{m \times n}\right) &= O\left( \frac{\log n}{n} \right).
    \end{align*}
    \item The marginal distribution of $\vecz$ is uniform over $\{\pm 1\}^n$.
\end{itemize}
\end{theorem}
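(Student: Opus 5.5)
We prove \Cref{matrix-backdoor-statement-npp} with the same planting argument used for \Cref{matrix-backdoor-statement}; only the choice of $\kappa$ and the second-moment estimate change. Set $\kappa = n^{C'}/2^{n/m}$ for a suitable constant $C'$, so that $\kappa\sqrt{n} = O(n^{C}/2^{n/m})$. The algorithm $\backdoormatrixgen$ works as follows. Sample $\vecz \sim \{\pm 1\}^n$ uniformly. For each $i \in [m]$, sample $b_i \sim \Norm(0,n)_{|\cdot| \leq \kappa\sqrt{n}}$, and then sample $\veca_i \sim \Norm(0,\matI_n)$ conditioned on $\veca_i^\top \vecz = b_i$; this is an explicit Gaussian on an affine subspace.

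Efficiency and the bound $\norm{\matA\vecz}_\infty \leq \kappa\sqrt{n}$ hold by construction. The marginal of $\vecz$ is uniform because $\vecz$ is drawn first and nothing is ever rejected. The decomposition $\veca_i^\top\vecz \sim \Norm(0,n)$ plus an independent orthogonal Gaussian shows that, for fixed $\vecz$, the output $\matA$ has exactly the law of $\Norm(0,1)^{m\times n}$ conditioned on $\norm{\matA\vecz}_\infty \leq \kappa\sqrt n$. Averaging over $\vecz$, the density of $\matA$ is therefore proportional to $\rho_0(\matA)\, N(\matA)$, where $\rho_0$ is the Gaussian density and $N$ counts the solutions in $\{\pm1\}^n$.

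By \Cref{density-ratio-renyi} and \Cref{renyi-kl-tv}, it then suffices to show
\[
\frac{\E[N^2]}{\E[N]^2} \leq 1 + O\!\left(\frac{\log n}{n}\right).
\]
Write $p = \Pr(|g| \leq \kappa)^m \approx (2\kappa/\sqrt{2\pi})^m$ with $g \sim \Norm(0,1)$, so that $\E[N] = 2^n p$. Also
\[
\E[N^2] = \sum_{\vecz,\vecz'} q(\rho)^m,
\]
where $\rho = \langle \vecz,\vecz'\rangle/n$ and $q(\rho)$ is the probability that a bivariate Gaussian pair with correlation $\rho$ has both coordinates in $[-\kappa,\kappa]$. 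For $|\rho|$ bounded away from $1$ and $\kappa$ tiny, the density of the pair at the origin is $1/(2\pi\sqrt{1-\rho^2})$, so $q(\rho) \approx q(0)\,(1-\rho^2)^{-1/2}$ with a relative error of $O(\kappa^2/(1-\rho^2))$. The ratio then becomes
\[
\E_{\rho}\!\left[(1-\rho^2)^{-m/2}\,(1+O(\kappa^2))^m\right],
\]
where $\rho$ is the normalized inner product of two uniform sign vectors. This splits into three regimes.

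\textbf{Bulk}, $|\rho| \leq \sqrt{C''\log n / n}$. Here $(1-\rho^2)^{-m/2} = 1 + O(m\rho^2) + O(m^2\rho^4)$. Since $\E[\rho^2] = 1/n$ and $m = O(1)$, the contribution is at most $1 + O(m/n)$. It is convenient, and harmless, to treat the whole range $|\rho| \leq 1/2$ the same way, since $(1-\rho^2)^{-m/2} \leq e^{O(m\rho^2)}$ there and the tail probability $\Pr(|\rho| > t)$ decays like $e^{-nt^2/2}$.

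\textbf{Intermediate}, $1/2 \leq |\rho| \leq 1 - \Theta(\kappa^2)$ roughly. The factor $(1-\rho^2)^{-m/2}$ is at most $\poly(1/\kappa) = 2^{O(n/m)\cdot m}$. The intended way to control this is that the number of pairs at Hamming distance $d$ is $2^n\binom{n}{d}$, which is tiny compared to $4^n$ until $d$ is large, while $q(\rho)^m/p^2 \leq \min\{1/p,\ O((1-\rho^2)^{-m/2})\}$. I would split at $d = n - O(\log n)$ (and symmetrically for $-\vecz'$), bounding this regime by $\binom{n}{d}2^{-n}\,(n/d)^{m/2}$.

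\textbf{Near-identical}, $\vecz' = \pm\vecz$ or Hamming distance $O(\log n)$ from $\pm\vecz$. The trivial bound $q \leq p^{1/m}$ gives a contribution of
\[
\frac{2^{n}\, n^{O(\log n)}\, p}{4^n p^2} = \frac{n^{O(\log n)}}{2^n p}.
\]
This requires $\E[N] = 2^n p = 2^n (c\kappa)^m$ to be large. Because $(c\kappa)^m = c^m n^{C'm}2^{-n}$, this quantity is $\Theta(n^{C'm})$, polynomially large. This is exactly why the theorem uses a polynomial slack $n^C$ instead of $\sqrt n$: with $C'$ chosen large, $\E[N] \geq n^{C'm}$ absorbs the polylog and $\binom{n}{d}$ factors for small $d$, leaving $O(1/n)$.

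Combining the three regimes gives the ratio $1 + O(\log n / n)$. The $\log n$ comes from the truncation threshold and from the $(n/d)^{m/2}$ corrections at moderate distance. Hence $\renyi = O(\log n / n)$, and $\TV = O(\sqrt{\log n / n})$ by \Cref{renyi-kl-tv}.

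The main obstacle is the intermediate and near-identical regime. There the Gaussian approximation $q(\rho) \approx q(0)(1-\rho^2)^{-1/2}$ breaks down once $1-|\rho| \lesssim \kappa^2$, and one must interpolate carefully between that approximation and the trivial bound $q \leq p^{1/m}$. I would first try the uniform bound $q(\rho) \leq 2\kappa \cdot \min\{1,\ 2\kappa/\sqrt{2\pi(1-\rho^2)}\}$, which holds for all $\rho$, and sum over Hamming distances directly, choosing $C'$ large enough that every term outside the bulk is $O(1/n)$.
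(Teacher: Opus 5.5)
Your reduction (the sampler, the density $\rho_0(\matA)N(\matA)$, then \Cref{density-ratio-renyi} and Pinsker) matches the paper's, but your second-moment analysis fails as written in the near-identical regime. The Hamming ball of radius $c\log n$ around $\pm\vecz$ contains $\sum_{d\le c\log n}\binom{n}{d} = n^{\Theta(\log n)}$ points. The trivial bound therefore gives a contribution of order $n^{\Theta(\log n)}/\E[N]$. But $\E[N]=2^n\phi(\kappa)^m=\Theta(n^{C'm})$ is only polynomial for constant $m$ and $C'$, so this term diverges. No fixed $C'$ absorbs a quasi-polynomial factor, and taking $\kappa = n^{O(\log n)}2^{-n/m}$ would violate the first bullet of the theorem. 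The paper avoids this by applying the trivial bound only on $|\rho|\ge 1-\delta$ with $H(\delta/2)=\epsilon/2$ and $\epsilon=\Theta(\log n/n)$. That is a Hamming window of radius $O(1)$ with only $n^{O(1)}$ points, which the polynomially large $\E[N]$ does absorb.

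The fix is within reach of your own tools, and the ``main obstacle'' you identify does not actually arise. The bound $\Pr(|Z|\le\kappa,|Z'|\le\kappa)\le\phi(\kappa)^2/\sqrt{1-\rho^2}$ holds exactly for every $|\rho|<1$: condition on $Z$, use that a centered interval maximizes Gaussian mass, and use $\Pr(|Y|\le\kappa/s)\le\Pr(|Y|\le\kappa)/s$ for $s\le1$. So nothing breaks down near $|\rho|=1$. Moreover, $\rho=1-2d/n$ lives on a grid, so $1-\rho^2\ge 2/n$ whenever $\vecz'\ne\pm\vecz$. Hence all pairs with $1/2<|\rho|<1$ contribute at most $(n/2)^{m/2}\Pr(|\rho|>1/2)=2^{-\Omega(n)}$. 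Only $\vecz'=\pm\vecz$ needs the trivial bound, and it contributes $2/\E[N]=O(n^{-C'm})$. Combined with your bulk estimate, this gives $\E[N^2]/\E[N]^2\le 1+O(1/n)$, which is sharper than the paper's $1+O(\log n/n)$. The paper's $\log n$ comes from its single split at $1-\delta$ with $\delta=\Theta(1/n)$, together with the convexity bound $1-\rho^2>e^{-\lambda\rho^2}$ where $\lambda=O(\log(1/\delta))=O(\log n)$. It does not come from truncation or from $(n/d)^{m/2}$ corrections, as you suggest; in your corrected argument no $\log n$ appears. Your final-paragraph plan of summing over Hamming distances with the uniform bound also works, provided the $\min\{\cdot\}$ uses its first (trivial) branch only at $d\in\{0,n\}$.
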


\subsection{Sampling the Backdoor}

\begin{figure}[H]
    \centering
    \fbox{
    \begin{minipage}{0.95\textwidth}
    \begin{center}
    Matrix Backdoor Construction
    \end{center}
        $\backdoormatrixgen(1^n, 1^m)$:
        \begin{enumerate}
            \item Sample $\vecz \sim U(\{\pm 1\}^n)$.
            \item For $i \in [m]$:
            \begin{enumerate}
                \item\label{item:sample-truncated-gaussian} Sample $b_i \sim \Norm(0, n)_{|\cdot| \leq \kappa \sqrt{n}}$.
                \item\label{item:reverse-sampling-step} Sample vector $\veca_i \sim \Norm\left(\frac{b_i}{n} \cdot  \vecz, \matI_n - \frac{1}{n} \vecz \vecz^\top \right) = \Norm\left(\zero, \matI_n \mid \veca_i^\top \vecz = b_i\right)$.
            \end{enumerate}
        \item Define $\matA \in \R^{m \times n}$ to have rows $\veca_1, \cdots, \veca_m \in \R^n$.
        \item Output $(\matA, \vecz)$.
        \end{enumerate}
    \end{minipage}
    }
    \caption{Description of the matrix backdoor algorithm used in \Cref{matrix-backdoor-statement,matrix-backdoor-statement-npp}.}
    \label{fig:backdoor-alg}
\end{figure}

\newcommand{\jointnull}{\mu_0}
\newcommand{\jointcond}{\mu_1}

Define $\jointnull$ to be the joint distribution defined implicitly via the following process:
\begin{enumerate}
\item Sample $\matA \sim \Norm(0,1)^{m \times n}$.
\item Sample $\vecz \sim U(\{\pm 1\}^n)$.
\item Set $\vecb = \matA \vecz \in \R^m$.
\item Output $(\matA, \vecz, \vecb) \in \R^{m \times n} \times \{\pm 1\}^n \times \R^m$.
\end{enumerate}

More explicitly, the density is given by
\[ \jointnull(\matA, \vecz, \vecb) = \frac{1}{(2 \pi)^{mn/2}} e^{-\frac{1}{2} \sum_{i, j} A_{i,j}^2} \cdot \frac{1}{2^n} \cdot \delta(\vecb - \matA \vecz),\]
where $\delta()$ is the delta function generalized to $\R^m$, i.e.,
\[ \int_{\R^m} \delta(\vecy) f(\vecy) d\vecy = f(\zero).\]
Now, define the distribution $\jointcond$ to be the distribution $\jointnull$ conditioned on $\norm{\vecb}_{\infty} \leq \kappa \sqrt{n}$. That is,

\begin{align*} \jointcond(\matA, \vecz, \vecb) &\propto \frac{1}{(2 \pi)^{mn/2}} e^{-\frac{1}{2} \sum_{i, j} A_{i,j}^2} \cdot \frac{1}{2^n} \cdot \delta(\vecb - \matA \vecz) \cdot \one\left( \norm{\vecb}_{\infty} \leq \kappa \sqrt{n} \right)
\\&\propto e^{-\frac{1}{2} \sum_{i, j} A_{i,j}^2} \cdot \delta(\vecb - \matA \vecz) \cdot \one\left( \norm{\vecb}_{\infty} \leq \kappa \sqrt{n} \right).
\end{align*}

Let $\rho_0$ and $\rho_1$ denote the marginal distributions on $\matA$ in $\jointnull$ and $\jointcond$, respectively. Note that $\rho_0$ is identically $\Norm(0,1)^{m \times n}$. Here, we relate $\rho_1$ and the algorithm $\backdoormatrixgen$ given in \Cref{fig:backdoor-alg}.

\begin{claim}\label{conditional-distribution-identical}
The output distribution of $\matA$ in $\backdoormatrixgen$ (as given in \Cref{fig:backdoor-alg}) is identical to $\rho_1$.
\end{claim}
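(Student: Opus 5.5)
The plan is to show that the algorithm in \Cref{fig:backdoor-alg} samples exactly the full joint distribution $\jointcond(\matA, \vecz, \vecb)$, with $\vecb$ the vector $(b_1,\dots,b_m)$. Marginalizing over $(\vecz,\vecb)$ then gives the claim. We factor $\jointcond$ as (marginal of $\vecz$) $\times$ (conditional of $\vecb$ given $\vecz$) $\times$ (conditional of $\matA$ given $\vecz,\vecb$), and match each factor to a step of the algorithm.

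\textbf{Marginal of $\vecz$ and conditional of $\vecb$.} Under $\jointnull$, the vector $\vecz$ is uniform and independent of $\matA$. For each fixed $\vecz \in \{\pm1\}^n$, the rows give $\veca_i^\top \vecz \sim \Norm(0, \norm{\vecz}^2) = \Norm(0,n)$, independently across $i$, since the rows of $\matA$ are independent. Hence $\vecb \mid \vecz \sim \Norm(0,n)^m$ under $\jointnull$, and its law does not depend on $\vecz$.

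The conditioning event $\norm{\vecb}_\infty \le \kappa\sqrt n$ has probability $p = \Pr_{g \sim \Norm(0,n)}(|g| \le \kappa\sqrt n)^m$ given $\vecz$. Because $p$ is the same for every $\vecz$, conditioning leaves the marginal of $\vecz$ uniform. It also turns $\vecb \mid \vecz$ into a product of independent copies of $\Norm(0,n)_{|\cdot| \le \kappa\sqrt n}$. This matches Step 1 and Step \ref{item:sample-truncated-gaussian}.

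\textbf{Conditional of $\matA$.} Conditioning on an event that depends only on $\vecb$ does not change the conditional law of $\matA$ given $(\vecz,\vecb)$. Under both $\jointnull$ and $\jointcond$, this law is therefore that of $\matA \sim \Norm(0,1)^{m\times n}$ given $\matA\vecz = \vecb$. Since the rows are independent and the $i$-th constraint involves only row $i$, this law factors as independent rows with $\veca_i \sim \Norm(\zero, \matI_n \mid \veca_i^\top\vecz = b_i)$.

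The standard Gaussian conditioning formula identifies each row. The covariance of $\veca_i$ and $\veca_i^\top \vecz$ is $\vecz$, and $\mathrm{Var}(\veca_i^\top\vecz) = n$. So the conditional mean is $\frac{b_i}{n}\vecz$ and the conditional covariance is $\matI_n - \frac1n \vecz\vecz^\top$, exactly as in Step \ref{item:reverse-sampling-step}.

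\textbf{Main obstacle.} The only delicate point is rigor with the singular density $\delta(\vecb - \matA\vecz)$. Conditioning on $\matA\vecz = \vecb$ is conditioning on a measure-zero event, so "the conditional law" has to be justified. I would avoid this by writing the density of $\jointcond$ directly. Decompose each row as $\veca_i = \frac{\veca_i^\top\vecz}{n}\vecz + \veca_i^{\perp}$, where $\veca_i^\perp$ is orthogonal to $\vecz$. The Gaussian density factors as $e^{-(\veca_i^\top\vecz)^2/(2n)} \cdot e^{-\norm{\veca_i^\perp}^2/2}$. Integrating against $\delta(\vecb - \matA\vecz)$ then exhibits $\jointcond$ as $2^{-n}$ times a truncated Gaussian density in $\vecb$ times the Gaussian density of $\veca_i^\perp$ on $\vecz^\perp$. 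This is precisely the law produced by the algorithm.
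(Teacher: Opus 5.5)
Your proposal is correct and follows essentially the same route as the paper: the paper also uses that $\vecb = \matA\vecz \sim \Norm(0,n)^m$ independently of $\vecz$ under $\jointnull$ to re-describe $\jointnull$ as sampling $\vecz$, then $\vecb$, then the rows of $\matA$ conditioned on $\veca_i^\top \vecz = b_i$, so that conditioning on $\norm{\vecb}_\infty \leq \kappa\sqrt{n}$ only truncates the $\vecb$ step, followed by the same Gaussian conditioning formula. Your orthogonal-decomposition argument for the singular factor $\delta(\vecb - \matA\vecz)$ adds rigor that the paper leaves implicit.
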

\begin{proof}
For any fixed $\vecz \in \{\pm 1\}^n$, the distribution of $\vecb = \matA \vecz$ is $\Norm(0, \norm{\vecz}_2^2)^m = \Norm(0, n)^m$ over random $\matA \sim \Norm(0,1)^{m \times n}$. In particular, in $\jointnull$, $\vecz$ and $\vecb$ are independent. Therefore, $\jointnull$ can be identically described as follows, by first conditioning on $\vecz$ and then on $\vecz$ and $\vecb$ together:
\begin{enumerate}
    \item Sample $\vecz \sim U(\{\pm 1\}^n)$.
    \item Sample $\vecb \sim \Norm(0,n)^m$.
    \item Sample $\veca_1, \cdots, \veca_m \sim \Norm(0,1)^n$ conditioned on $b_i = \veca_i^\top \vecz$ for all $i \in [m]$. Let $\matA$ be the matrix that has rows given by $\veca_i$.
    \item Output $(\matA, \vecz, \vecb)$.
\end{enumerate}
In this formulation, we can describe $\jointcond$ as follows, where all we change from the above is that we condition on $\norm{\vecb}_{\infty}$.
\begin{enumerate}
    \item Sample $\vecz \sim U(\{\pm 1\}^n)$.
    \item Sample $b_1, \cdots, b_m \sim \Norm(0,n)_{|\cdot| \leq \kappa \sqrt{n}}$, and let $\vecb = (b_1, \cdots, b_m) \in \R^m$.
    \item Sample $\veca_1, \cdots, \veca_m \sim \Norm(0,1)^n$ conditioned on $b_i = \veca_i^\top \vecz$ for all $i \in [m]$. Let $\matA$ be the matrix that has rows given by $\veca_i$.
    \item Output $(\matA, \vecz, \vecb)$.
\end{enumerate}
More explicitly, sampling $\veca_i \sim \Norm(0,1)^n$ conditioned on $\vecb_i = \veca_i^\top = \vecz$ is equivalent to sampling
\[ \veca_i \sim \Norm\left(0, \matI_n \mid \veca_i^\top \vecz = b_i \right) = \Norm\left(\frac{b_i}{n} \cdot  \vecz, \matI_n - \frac{1}{n} \vecz \vecz^\top \right). \]
This description of $\jointcond$ is now exactly the one given in \Cref{fig:backdoor-alg}. The claim follows.
\end{proof}

Let $N : \R^{m \times n} \to \N$ denote the function
\begin{equation}\label{eq:N-num-solutions-notation} N(\matA) = \left| \{ \vecz \in \{ \pm 1 \}^n : \norm{\matA \vecz}_{\infty} \leq \kappa \sqrt{n} \} \right| = \sum_{\vecz \in \{\pm 1\}^n} \one\left( \norm{\matA \vecz}_{\infty} \leq \kappa \sqrt{n} \right).
\end{equation}

\begin{claim}\label{planted-and-null-ratio}
We have
\[ \rho_1(\matA) \propto \rho_0(\matA) \cdot N(\matA). \]
\end{claim}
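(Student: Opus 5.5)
We compute the marginal $\rho_1(\matA)$ directly from the joint density $\jointcond$, integrating out $\vecb$ and summing over $\vecz$. Recall that
\[ \jointcond(\matA, \vecz, \vecb) \propto e^{-\frac{1}{2}\sum_{i,j} A_{i,j}^2} \cdot \delta(\vecb - \matA\vecz) \cdot \one\left(\norm{\vecb}_\infty \leq \kappa\sqrt{n}\right). \]
The proportionality constant is the normalizer $\Pr_{\jointnull}(\norm{\vecb}_\infty \leq \kappa\sqrt{n})$ together with the factors $(2\pi)^{-mn/2}$ and $2^{-n}$. It does not depend on $\matA$, $\vecz$, or $\vecb$, so we can ignore it throughout.

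The first step is to integrate over $\vecb \in \R^m$ for fixed $(\matA, \vecz)$. By the defining property of the delta function, $\int_{\R^m} \delta(\vecb - \matA\vecz)\, \one(\norm{\vecb}_\infty \leq \kappa\sqrt{n})\, d\vecb = \one(\norm{\matA\vecz}_\infty \leq \kappa\sqrt{n})$. This gives
\[ \jointcond(\matA, \vecz) \propto e^{-\frac{1}{2}\sum_{i,j} A_{i,j}^2}\, \one\left(\norm{\matA\vecz}_\infty \leq \kappa\sqrt{n}\right). \]

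The second step is to sum over $\vecz \in \{\pm 1\}^n$. The Gaussian factor does not depend on $\vecz$, so it pulls out of the sum. What remains is $\sum_{\vecz} \one(\norm{\matA\vecz}_\infty \leq \kappa\sqrt{n})$, which is exactly $N(\matA)$ by \eqref{eq:N-num-solutions-notation}. Since $\rho_0(\matA) = (2\pi)^{-mn/2} e^{-\frac{1}{2}\sum_{i,j} A_{i,j}^2}$, this yields $\rho_1(\matA) \propto \rho_0(\matA)\, N(\matA)$, as claimed.

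The only delicate point is rigor with the delta function. To avoid it, we can check the identity as a statement about measures: for any bounded test function $h$,
\[ \E_{\jointnull}\left[h(\matA)\, \one(\norm{\matA\vecz}_\infty \leq \kappa\sqrt{n})\right] = \E_{\matA \sim \rho_0}\left[h(\matA) \cdot 2^{-n} N(\matA)\right]. \]
This follows by averaging over the uniform, independent $\vecz$ inside the expectation. Dividing both sides by the normalizing probability shows that $\rho_1$ has density proportional to $\rho_0 \cdot N$ with respect to Lebesgue measure, which is the claim.
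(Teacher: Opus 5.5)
Your proposal is correct and follows the paper's proof essentially verbatim. Both marginalize $\mu_1$ by integrating out $\vecb$ against the delta function, which leaves $\one(\norm{\matA\vecz}_\infty \leq \kappa\sqrt{n})$, and then sum over $\vecz \in \{\pm 1\}^n$ to recover $N(\matA)$ times the Gaussian density. Your test-function identity is a welcome way to make the delta-function step rigorous, but it is the same argument rather than a different route.
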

\begin{proof}
By marginalizing out over $\vecz$ and $\vecb$, we have
\begin{align*}
\rho_1(\matA) &= \sum_{\vecz \in \{\pm 1\}^n} \int_{\R^m} \jointcond(\matA, \vecz, \vecb) \cdot d \vecb 
\\&\propto \sum_{\vecz \in \{\pm 1\}^n} \int_{\R^m} e^{-\frac{1}{2} \sum_{i, j} A_{i,j}^2} \cdot \delta(\vecb - \matA \vecz) \cdot \one\left( \norm{\vecb}_{\infty} \leq \kappa \sqrt{n} \right) \cdot d \vecb
\\&= \sum_{\vecz \in \{\pm 1\}^n} \int_{\left[-\kappa \sqrt{n}, \kappa \sqrt{n}\right]^m} e^{-\frac{1}{2} \sum_{i, j} A_{i,j}^2} \cdot \delta(\vecb - \matA \vecz) \cdot d \vecb
\\&= e^{-\frac{1}{2} \sum_{i, j} A_{i,j}^2} \sum_{\vecz \in \{\pm 1\}^n} \int_{\left[-\kappa \sqrt{n}, \kappa \sqrt{n}\right]^m} \delta(\vecb - \matA \vecz) \cdot d \vecb
\\&= e^{-\frac{1}{2} \sum_{i, j} A_{i,j}^2} \sum_{\vecz \in \{\pm 1\}^n} \one\left( \norm{\matA \vecz}_{\infty} \leq \kappa \sqrt{n} \right)
\\&= e^{-\frac{1}{2} \sum_{i, j} A_{i,j}^2} \cdot N(\matA)
\\&\propto \rho_0(\matA) \cdot N(\matA),
\end{align*}
as desired.
\end{proof}

\begin{claim}\label{renyi-is-num-solutions-ratio-backdoor}
For $\matA$ output by $\backdoormatrixgen$, we have
\[ \renyi\left(\matA || \Norm(0,1)^{m \times n} \right) = \ln \left( \frac{\E_{\matA \sim \Norm(0,1)^{m \times n}}\left[N(\matA)^2\right]}{\E_{\matA \sim \Norm(0,1)^{m \times n}}\left[N(\matA)\right]^2} \right). \]
\end{claim}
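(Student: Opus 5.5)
This is a direct composition of three earlier results: \Cref{conditional-distribution-identical}, \Cref{planted-and-null-ratio}, and \Cref{density-ratio-renyi}.

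First, by \Cref{conditional-distribution-identical}, the distribution of $\matA$ output by $\backdoormatrixgen$ is exactly $\rho_1$, the $\matA$-marginal of $\jointcond$. Its null counterpart $\rho_0$ is exactly $\Norm(0,1)^{m \times n}$. So the left-hand side equals $\renyi(\rho_1 || \rho_0)$.

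Second, \Cref{planted-and-null-ratio} gives $\rho_1(\matA) \propto \rho_0(\matA) \cdot N(\matA)$. Here $N$ is the solution-counting function from \eqref{eq:N-num-solutions-notation}, which is nonnegative.

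Third, we apply \Cref{density-ratio-renyi} with $f = N$. This yields
\[ \renyi(\rho_1 || \rho_0) = \ln\left( \frac{\E_{\matA \sim \rho_0}[N(\matA)^2]}{\E_{\matA \sim \rho_0}[N(\matA)]^2} \right). \]
Since $\rho_0 = \Norm(0,1)^{m \times n}$, this is exactly the claimed expression.

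The only point needing care, which I expect to be the mild obstacle, is checking that the proportionality is well defined. This requires $0 < \E_{\rho_0}[N(\matA)] < \infty$.

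For finiteness, note that $N(\matA) \le 2^n$ always, so every moment of $N$ is finite.

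For positivity, use linearity of expectation:
\[ \E[N(\matA)] = 2^n \cdot \Pr_{\matA}\bigl(\norm{\matA\vecz}_\infty \le \kappa\sqrt{n}\bigr) \]
for any fixed $\vecz \in \{\pm 1\}^n$. Since $\matA\vecz \sim \Norm(0,n)^m$, this probability is strictly positive for every $\kappa > 0$. Hence the normalizing constant used in \Cref{planted-and-null-ratio} is positive and finite.

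With these checks in place, the proof of \Cref{density-ratio-renyi} applies verbatim, and the claim follows.
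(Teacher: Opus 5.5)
Your proposal is correct and is the same argument as the paper's, which simply chains \Cref{conditional-distribution-identical}, \Cref{planted-and-null-ratio}, and \Cref{density-ratio-renyi} with $f = N$. Your check that $0 < \E[N(\matA)] < \infty$ (via $N \le 2^n$ and $\E[N] = 2^n \phi(\kappa)^m > 0$) is a harmless piece of rigor that the paper leaves implicit.
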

\begin{proof}
This directly follows by combining \Cref{conditional-distribution-identical}, \Cref{planted-and-null-ratio}, and \Cref{density-ratio-renyi}.
\end{proof}

\subsection{Concentration in the Number of Solutions}

As in~\eqref{eq:N-num-solutions-notation}, let $N = N(\matA)$ denote the number of $\pm 1$ solutions $\vecz$ to $\norm{\matA \vecz}_\infty \leq \kappa \sqrt{n}$ for $\matA \sim \Norm(0,1)^{m \times n}$, and let $\alpha = m/n$. Let $\phi(\kappa) = \Pr(|Z| \leq \kappa)$ for a standard normal $Z \sim \Norm(0,1)$.  For small $\kappa$, $\sqrt{\pi/2} \cdot \phi(\kappa) \approx \kappa$. More precisely,
\[\kappa - \frac{\kappa^3}{6} \leq \sqrt{\frac{\pi}{2}} \cdot \phi(\kappa) \leq \kappa.\]

\begin{proposition}
\label{prop:second}
Assuming $\phi(\kappa) \geq 2^{-(1-\epsilon)/\alpha}$,
\[ \frac{\E\left[ N^2 \right]}{\E \left[ N \right]^2} \leq \frac{1}{\sqrt{1 - \alpha \lambda(\epsilon)}} + 2 \exp -\Omega(\epsilon n) \]
whenever $\alpha \lambda(\epsilon) < 1$, where $\lambda(\epsilon) = O(\log 1/\epsilon)$.
\end{proposition}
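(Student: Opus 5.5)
We expand the second moment over pairs of sign vectors. Write $N=\sum_{\vecz}\one(\norm{\matA\vecz}_\infty\le\kappa\sqrt n)$. Then $\E[N]=2^n\phi(\kappa)^m$, since for fixed $\vecz$ each coordinate of $\matA\vecz/\sqrt n$ is an independent standard normal. For the second moment,
\[
\E[N^2]=\sum_{\vecz,\vecz'}p(\rho)^m,
\]
where $\rho=\inner{\vecz,\vecz'}/n\in[-1,1]$ is the overlap. Here $p(\rho)=\Pr(|X|\le\kappa,|Y|\le\kappa)$ for a standard bivariate normal pair $(X,Y)$ with correlation $\rho$, because the rows of $\matA$ are independent. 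The number of pairs with overlap $\rho$ is $2^n\binom{n}{n(1+\rho)/2}$. Hence
\[
\frac{\E[N^2]}{\E[N]^2}=\sum_{\rho}2^{-n}\binom{n}{n(1+\rho)/2}\Big(\frac{p(\rho)}{\phi(\kappa)^2}\Big)^m=\E_{\rho}\big[R(\rho)^m\big],
\]
where $\rho$ is the normalized overlap of two independent uniform sign vectors, i.e.\ $\rho=S/n$ with $S$ a sum of $n$ Rademachers, and $R(\rho)=p(\rho)/\phi(\kappa)^2$.

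\textbf{Bounding $R$.} The ratio $R$ needs two bounds.

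First, a uniform bound. Write the bivariate density as a Gaussian in $(X,Y)$ with covariance determinant $1-\rho^2$; on the box $|x|,|y|\le\kappa$ the density is at most $1/(2\pi\sqrt{1-\rho^2})$. Comparing with $\phi(\kappa)^2\approx(2\kappa)^2/(2\pi)$ gives
\[
R(\rho)\le\frac{C}{\sqrt{1-\rho^2}}
\]
up to $(1+O(\kappa^2))$ factors. This handles $\kappa$ small. For $\kappa$ not small, use instead $R\le 1/\phi(\kappa)$, which holds trivially since $p(\rho)\le\phi(\kappa)$.

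Second, a bound near $\rho=0$. Here I want $R(\rho)\le\exp(c\rho^2)$ with $c=O(1)$. For small $\kappa$ the density at the origin gives exactly $1/\sqrt{1-\rho^2}=\exp(\rho^2/2+O(\rho^4))$. In general I would use the Hermite (Mehler) expansion: the linear term vanishes by symmetry, and the $\rho^2$ coefficient is $\E[\one(|X|\le\kappa)(X^2-1)]^2/(2\phi^2)$, which is bounded.

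\textbf{Splitting the sum.} The sum splits into three ranges of $|\rho|$.

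In the central range $|\rho|\le 1-\delta$, I would use $R(\rho)\le(1-\rho^2)^{-1/2}\cdot(1+\text{small})$ together with the Gaussian-type tail $\Pr(\rho\approx r)\le\exp(-n\,r^2/2)$. More precisely, I would bound the binomial entropy and compute the Gaussian integral
\[
\E\Big[\exp\Big(\tfrac{m}{2}\rho^2\lambda\Big)\Big]\approx\frac{1}{\sqrt{1-\alpha\lambda}}.
\]
The constant $\lambda$ absorbs the comparison between $\ln\frac{1}{\sqrt{1-\rho^2}}$ (or the Hermite bound) and $\rho^2$ on the range $|\rho|\le 1-\delta$. Since $-\ln(1-\rho^2)/\rho^2$ grows like $\log(1/\delta)$ as $|\rho|\to1-\delta$, choosing $\delta$ as a power of $\epsilon$ gives $\lambda(\epsilon)=O(\log 1/\epsilon)$. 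The convergence requirement of the Gaussian integral is exactly $\alpha\lambda<1$. I would use the sub-Gaussian estimate $\binom{n}{k}2^{-n}\le e^{-n\rho^2/2}$ and integrate, summing over the $O(n)$ lattice points as a Riemann sum.

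In the extreme range $|\rho|>1-\delta$, I would use the trivial bound $R\le 1/\phi(\kappa)$, so $R^m\le\phi^{-m}\le 2^{(1-\epsilon)n}$ by the hypothesis $\phi(\kappa)\ge 2^{-(1-\epsilon)/\alpha}$. Multiplying by $\Pr(|\rho|>1-\delta)\le 2\cdot 2^{-n(1-H((\delta)/2))}$, with $H$ the binary entropy, gives a total of at most $2\cdot 2^{-n(\epsilon-H(\delta/2))}$. Choosing $\delta=\epsilon^{C}$ so that $H(\delta/2)\le\epsilon/2$ yields $2\exp(-\Omega(\epsilon n))$. This requires $\log(1/\delta)=O(\log1/\epsilon)$, which is consistent with the choice in the central range.

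\textbf{Main obstacle.} The hard step is the central region for moderately large $|\rho|$, say $\rho\in[1/2,1-\delta]$. There I must show that $\binom{n}{k}2^{-n}R(\rho)^m$ is dominated by the claimed Gaussian bound uniformly in $\kappa$. This means verifying $\ln R(\rho)\le\tfrac{\lambda}{2}\rho^2$ on that whole range, using the better of $(1-\rho^2)^{-1/2}$ (small $\kappa$) and a Hermite/log-concavity argument (large $\kappa$). I expect that $R(\rho)$ is increasing in $|\rho|$ and bounded by $\min\big((1-\rho^2)^{-1/2}(1+O(\kappa^2)),\,1/\phi\big)$. I would try to establish this first via Slepian-type monotonicity (Gaussian correlation for symmetric convex sets).
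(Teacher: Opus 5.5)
Your reduction to $\E_\rho[R(\rho)^m]$ matches the paper. So does your treatment of the extreme range $|\rho|>1-\delta$: the trivial bound $R\le 1/\phi(\kappa)$, the entropy tail, and choosing $H(\delta/2)\le\epsilon/2$. Your observation that $-\ln(1-\rho^2)\le\lambda\rho^2$ on $|\rho|<1-\delta$ with $\lambda=O(\log 1/\delta)$ is also in the paper. The central range, however, has two genuine gaps.

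First, you never get a bound on $R$ that is free of $\kappa$. The density comparison gives only $R(\rho)\le e^{\kappa^2}/\sqrt{1-\rho^2}$. This per-row loss compounds to $e^{m\kappa^2}$, which the stated bound does not allow. For moderate $\kappa$ (say $\kappa=1$, which the hypothesis permits), neither this bound nor $R\le 1/\phi(\kappa)$ is usable. The route you sketch for this ``main obstacle'' gives no quantitative upper bound: the Gaussian correlation inequality gives $R\ge 1$, the wrong direction, and monotonicity of $R$ in $|\rho|$ says nothing about its size. The missing idea is a one-line exact inequality. Write $Z'=\rho Z+\sqrt{1-\rho^2}\,Y$ with $Y$ independent. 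Conditionally on $Z$, $Z'$ is Gaussian with variance $1-\rho^2$, and among such Gaussians the centered one puts the most mass on $[-\kappa,\kappa]$. Hence
\[
\Pr\bigl(|Z'|\le\kappa \mid |Z|\le\kappa\bigr)\le \Pr\bigl(|Y|\le \kappa/\sqrt{1-\rho^2}\bigr)\le \frac{\phi(\kappa)}{\sqrt{1-\rho^2}},
\]
where the last step substitutes $y=u/\sqrt{1-\rho^2}$ and uses that the Gaussian density decreases in $|u|$. So $R(\rho)\le(1-\rho^2)^{-1/2}$ for every $\kappa$ and $\rho$. Combined with your bound on $-\ln(1-\rho^2)$, this gives $R(\rho)^m\le\exp(\lambda m\rho^2/2)$ on $|\rho|<1-\delta$, with no Hermite or Slepian-type argument needed.

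Second, your plan for evaluating $\E[\exp(\lambda m\rho^2/2)]$ loses a factor of order $\sqrt n$. The pointwise estimate $2^{-n}\binom{n}{k}\le e^{-n\rho^2/2}$ discards the $\Theta(1/\sqrt n)$ local normalization of the binomial. Summing it over the lattice values of $\rho$ therefore gives roughly $\sqrt{\pi n/(2(1-\alpha\lambda))}$ rather than $1/\sqrt{1-\alpha\lambda}$. That is useless, since the target ratio must be close to $1$. The paper avoids local estimates altogether. It drops the indicator and uses $\E[\exp(tn\rho^2)]\le\E[\exp(tZ^2)]=(1-2t)^{-1/2}$ for $0\le t<1/2$. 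This holds because $\sqrt n\,\rho$ is a normalized Rademacher sum whose even moments are dominated by those of a standard Gaussian, and $\exp(tx^2)$ has a power series with nonnegative coefficients. Taking $t=\lambda\alpha/2$ gives exactly the leading term $1/\sqrt{1-\alpha\lambda}$.
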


In the special case $m = 1$, \cite{karmarkar1986probabilistic} calculated the tight bound $1 + \pi n / \kappa 2^n \pm O(1/n)$ on the moment ratio for the count of perfectly balanced solutions only.  In the extreme regime $\kappa \approx n^{O(1)} 2^{-n}$ our bound is worse by a factor logarithmic in $n$.   We did not attempt to remove this factor.  In the regime of constant $m$ and increasing $n$ Dyer and Frieze~\cite{dyer1989frieze} give an asymptotic upper bound of $1 + o(1)$ without specifying the lower-order dependence.  Their calculations are substantially more complicated as they pertain to values of $\kappa$ very close to the statistical threshold (below which $N$ is very likely to be zero).

\begin{corollary}\label{second-moment-bound-num-solutions-sbp}
    There exist universal constants $C_1, C_2 > 0$ such that for all $m = o(n)$ and $\kappa = C_1 \cdot 2^{-n/m}$, it holds that 
    \[ \frac{\E \left[ N^2 \right]}{\E \left[N \right]^2} \leq 1 + O\left( \frac{m}{n} \cdot \log(n/m) + e^{-C_2 m} \right). \]
    In particular, if it additionally holds that $m = \omega(1)$, we have
    \[ \frac{\E\left[N^2\right]}{\E \left[N \right]^2} \leq 1 + o(1). \]
\end{corollary}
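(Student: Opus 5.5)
The plan is to derive the corollary directly from \Cref{prop:second} by choosing $\epsilon$ as a function of $\alpha = m/n$ and expanding the right-hand side. Write $\phi(\kappa) \ge \sqrt{2/\pi}\,(\kappa - \kappa^3/6) \ge c\,\kappa$ for an absolute constant $c>0$ when $\kappa \le 1$. With $\kappa = C_1 2^{-n/m} = C_1 2^{-1/\alpha}$ this gives $\phi(\kappa) \ge c C_1 2^{-1/\alpha}$. The hypothesis of \Cref{prop:second} requires $\phi(\kappa) \ge 2^{-(1-\epsilon)/\alpha} = 2^{-1/\alpha} 2^{\epsilon/\alpha}$. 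This holds as soon as $2^{\epsilon/\alpha} \le c C_1$, that is, $\epsilon \le \alpha \log_2(cC_1)$. I would fix $C_1$ large enough that $\log_2(cC_1) \ge 1$ and take $\epsilon = \alpha$. More generally, $\epsilon = C' \alpha$ with $C' = \log_2(cC_1)$ is valid.

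Next, plug in $\epsilon = C'\alpha$. Then $\lambda(\epsilon) = O(\log(1/\epsilon)) = O(\log(n/m))$, so $\alpha\lambda(\epsilon) = O\bigl(\frac{m}{n}\log(n/m)\bigr)$. Since $m = o(n)$, this tends to $0$, and in particular the side condition $\alpha\lambda(\epsilon) < 1$ holds for all large $n$. For $x \le 1/2$ we have $(1-x)^{-1/2} \le 1 + x$, so the first term is at most $1 + O\bigl(\frac{m}{n}\log(n/m)\bigr)$. The error term is $2\exp(-\Omega(\epsilon n)) = 2\exp(-\Omega(C' m)) = e^{-C_2 m}$ after absorbing constants; the factor $2$ is absorbed into the $O(\cdot)$. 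Adding the two bounds gives the stated inequality.

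For the ``in particular'' clause, with $m = \omega(1)$ and $m = o(n)$ both $\frac{m}{n}\log(n/m) \to 0$ (since $x\log(1/x)\to 0$ as $x \to 0$) and $e^{-C_2 m} \to 0$, so the ratio is $1+o(1)$.

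The only delicate point is the bookkeeping of constants in the choice of $\epsilon$. The bound $\phi(\kappa) \ge c\kappa$ loses only a constant, and that loss must be paid for by the slack $2^{\epsilon/\alpha}$. This is exactly why $\epsilon$ must be of order $\alpha$ rather than smaller, which in turn fixes the $\log(n/m)$ loss through $\lambda(\epsilon)$. A finitely-many-small-$n$ caveat, where $\alpha\lambda(\epsilon)$ is not yet below $1/2$, is handled by enlarging the implicit constant in the $O(\cdot)$. Alternatively, one can note the statement is asymptotic in $m = o(n)$.
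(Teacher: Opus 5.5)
Your proposal is correct and follows the paper's proof. Like the paper, you choose $\epsilon = \Theta(\alpha)$, with the constant tied to $C_1$ through $\phi(\kappa) \ge c\kappa$, so that the hypothesis of \Cref{prop:second} holds. You then use $\lambda(\epsilon) = O(\log(n/m))$, $1/\sqrt{1-x} \le 1 + O(x)$, and $e^{-\Omega(\epsilon n)} = e^{-\Omega(m)}$; the only difference is that you make the constants explicit where the paper leaves them implicit.
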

\begin{proof}
    Let $\alpha = m/n = o(1)$. Set $\epsilon = \Theta(\alpha) = o(1)$ in \Cref{prop:second} (in terms of $C_1$) so that for $\kappa = C_1 \cdot 2^{-n/m}$, it holds that $\phi(\kappa) \geq 2^{-(1-\epsilon) n / m}$. As $\lambda(\epsilon) \leq O(\log(1/\epsilon)) \leq O(\log (n/m))$, we have
    \[\alpha \lambda(\epsilon) \leq O(\alpha \log(1/\alpha)) = o(1).\]
    In particular, $\alpha \lambda(\epsilon) < 1$ and $1/\sqrt{1 - \alpha \lambda(\epsilon)} < 1 + O(\alpha \lambda(\epsilon))$ for sufficiently small $\alpha$. Therefore, by \Cref{prop:second}, we have
    \[ \frac{\E\left[N^2 \right]}{\E\left[ N \right]^2} \leq 1 + O(\alpha \lambda(\epsilon)) + 2 e^{-\Omega(\epsilon n)} \leq 1 + O(\alpha \log(1/\alpha)) + 2 e^{-\Omega(m)}, \]
    as desired.
\end{proof}

We now give a slightly different parameter setting that gives a $1 + o(1)$ bound for any $m = O(1)$.
\begin{corollary}\label{second-moment-bound-num-solutions-npp}
    There exists a universal constant $C_1 > 0$ such that for all $m = o(n)$ and $\kappa = n^{C_1} \cdot 2^{-n/m}$, it holds that
    \[ \frac{\E \left[N^2 \right]}{\E \left[ N \right]^2} \leq 1 + O\left( \frac{m}{n} \cdot \log(n/m) + e^{-2 m \log n} \right). \]
    In particular, for $m = \Theta(1)$ and growing $n$, we have
    \[ \frac{\E \left[N^2 \right]}{\E \left[ N \right]^2} \leq 1 + O\left( \frac{\log n}{n} \right). \]
\end{corollary}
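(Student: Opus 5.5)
This follows the proof of \Cref{second-moment-bound-num-solutions-sbp}, with one change: a polynomially larger $\kappa$ lets the slack $\epsilon$ in \Cref{prop:second} be chosen so that $\epsilon n$ is of order $m\log n$ rather than of order $m$. That way the exponential error term stays small even when $m$ is constant. Throughout, let $\alpha = m/n = o(1)$.

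\textbf{Step 1: choose $\epsilon$.} For $\kappa = n^{C_1}2^{-n/m}$, which we may assume is at most $1$ (otherwise the statement is trivial or the regime is vacuous), the bounds $\sqrt{\pi/2}\,\phi(\kappa) \ge \kappa - \kappa^3/6 \ge \tfrac{5}{6}\kappa$ give
\[ \phi(\kappa) \ge c\, n^{C_1} 2^{-n/m} \]
for an absolute constant $c > 0$. The hypothesis of \Cref{prop:second} asks for $\phi(\kappa) \ge 2^{-(1-\epsilon)n/m}$. This is implied by $c\, n^{C_1} \ge 2^{\epsilon n/m}$, so we take
\[ \epsilon = \frac{m\,(C_1\log_2 n + \log_2 c)}{n}. \]
Then $\epsilon = \Theta(\alpha \log n)$ and $\epsilon n = \Theta(C_1 m\log n)$. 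We also need $\epsilon < 1$, which holds when $m\log n = o(n)$.

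\textbf{Step 2: bound $\alpha\lambda(\epsilon)$.} Since $\lambda(\epsilon) = O(\log(1/\epsilon))$ and $\epsilon \ge \alpha$ for large $n$, we get $\lambda(\epsilon) = O(\log(n/m))$. Hence $\alpha\lambda(\epsilon) = O(\alpha\log(1/\alpha)) = o(1)$. In particular $\alpha\lambda(\epsilon) < 1$, and
\[ \frac{1}{\sqrt{1-\alpha\lambda(\epsilon)}} \le 1 + O\!\left(\alpha\log(1/\alpha)\right). \]

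\textbf{Step 3: the error term and choice of $C_1$.} \Cref{prop:second} contributes $2\exp(-\Omega(\epsilon n)) = 2\exp(-\Omega(C_1 m\log n))$. Let $c_0$ be the implicit constant in this $\Omega(\cdot)$. Choosing $C_1$ large enough that $c_0 C_1 \ge 2$ makes the term at most $2e^{-2m\log n}$. Combining Steps 2 and 3 with \Cref{prop:second} gives the first displayed bound.

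\textbf{Step 4: constant $m$.} For $m = \Theta(1)$ we have $\alpha\log(1/\alpha) = O(\log n/n)$ and $e^{-2m\log n} = n^{-2m} \le n^{-2}$. The total is therefore $1 + O(\log n / n)$.

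The main obstacle is Step 3. The $\Omega(\cdot)$ constant in \Cref{prop:second} must be independent of $\epsilon$, so that enlarging $C_1$ really does push the error below $n^{-2m}$. Related edge cases are ensuring $\epsilon < 1$ and $\kappa \le 1$ in the regime where $m\log n$ is not $o(n)$. I would handle these by noting that the claimed bound is then $1 + \Omega(1)$ after adjusting constants, or by restricting to $m\log n \le cn$, where the bound is meaningful.
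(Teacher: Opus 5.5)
Your proposal is correct and follows essentially the same route as the paper: choose $\epsilon = \Theta(\alpha \log n)$ so the hypothesis of Proposition~\ref{prop:second} holds, observe $\lambda(\epsilon) = O(\log(1/\alpha))$, and take $C_1$ large enough that $2e^{-\Omega(\epsilon n)} \le 2e^{-2m\log n}$. The implicit constant in that exponent is $(\ln 2)/2$, independent of $\epsilon$, which is exactly what your Step 3 needed.

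One correction to your closing remark: when $m\log n$ is not $o(n)$, the target bound is still $1+o(1)$, not $1+\Omega(1)$. That regime is nonetheless easily covered by using $\min(\epsilon,\epsilon_0)$ for a small constant $\epsilon_0$, since the hypothesis of Proposition~\ref{prop:second} only gets weaker as $\epsilon$ decreases; this gives an error term $e^{-\Omega(n)} = O(m/n)$. The paper's own proof passes over this case silently.
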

\begin{proof}
    Let $\alpha = m/n = o(1)$. Set $\epsilon = C_2 \alpha \log_2 n$ and $C_2$ in terms of $C_1$ so that for $\kappa = n^{C_1} \cdot 2^{-n/m}$, we have $\phi(\kappa) \geq 2^{-(1-\epsilon)/\alpha} = n^{C_2} \cdot 2^{-n/m}$. As $\lambda(\epsilon) \leq O(\log(1/\epsilon)) \leq O(\log(1/\alpha))$, we have $\alpha \lambda(\epsilon) = o(1)$, which in particular means $1/\sqrt{1 - \alpha \lambda(\epsilon)} < 1 + O(\alpha \lambda(\epsilon))$ for sufficiently small $\alpha$. Therefore, by \Cref{prop:second}, setting $C_1$ sufficiently large, we have
        \[ \frac{\E \left[N^2 \right]}{\E \left[ N \right]^2} \leq 1 + O(\alpha \lambda(\epsilon)) + 2 e^{-\Omega(\epsilon n)} \leq 1 + O(\alpha \log(1/\alpha)) + 2 e^{-2 m \log n}, \]
    as desired.
\end{proof}

\paragraph{Proof of~\Cref{prop:second}}
We first show the following claim.
\begin{claim}\label{second-moment-ratio}
Let $\rho$ be the position of an $n$-step $\pm 1$ random walk divided by $n$.  Then
\begin{equation}
\label{eq:ratio}
\frac{\E \left[N^2 \right]}{\E \left[ N \right]^2} = \E_\rho \left[ \biggl(\frac{\Pr(|Z'| \leq \kappa\ |\ |Z| \leq \kappa)}{\Pr(|Z| \leq \kappa)} \biggr)^m \right], 
\end{equation}
where $Z, Z'$ are $\rho$-correlated standard normal, i.e.,
\[ \left(Z, Z'\right) \sim \Norm\left(\begin{pmatrix} 0 \\ 0 \end{pmatrix}, \begin{pmatrix} 1 & \rho \\ \rho & 1\end{pmatrix} \right).\]
\end{claim}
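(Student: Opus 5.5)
We will compute both moments directly by linearity over the solution indicators, using the fact that the rows of $\matA$ are i.i.d. Write $N = \sum_{\vecz \in \{\pm 1\}^n} \one(\norm{\matA \vecz}_\infty \leq \kappa\sqrt{n})$.

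\textbf{First moment.} For fixed $\vecz$, each coordinate $\veca_i^\top \vecz / \sqrt{n}$ is standard normal. The $m$ rows are independent, so
\[ \E[N] = 2^n \cdot \Pr(|Z| \leq \kappa)^m . \]

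\textbf{Second moment.} Expanding $N^2$ gives a double sum over pairs $(\vecz, \vecz')$. For a fixed pair and a single row $\veca \sim \Norm(0,1)^n$, the vector $(\veca^\top\vecz/\sqrt{n}, \veca^\top\vecz'/\sqrt{n})$ is a centered bivariate Gaussian. Its entries have unit variance and covariance $\inner{\vecz,\vecz'}/n$. Using row independence again,
\[ \E[N^2] = \sum_{\vecz,\vecz'} \Pr\bigl(|Z|\le\kappa,\ |Z'|\le\kappa\bigr)^m , \]
where $(Z,Z')$ are $\rho$-correlated with $\rho = \inner{\vecz,\vecz'}/n$.

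\textbf{Reducing to a random walk.} Write $\vecz' = \vecz \odot \vecw$. Then $\inner{\vecz,\vecz'} = \sum_j w_j$. For uniform $\vecz$ and $\vecw$, the pair $(\vecz,\vecz')$ is uniform over $(\{\pm1\}^n)^2$, and the quantity $\sum_j w_j / n$ is exactly the position of an $n$-step $\pm1$ random walk divided by $n$. Hence
\[ \E[N^2] = 4^n \, \E_\rho\bigl[\Pr(|Z|\le\kappa,|Z'|\le\kappa)^m\bigr]. \]

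\textbf{Assembling the ratio.} Divide $\E[N^2]$ by $\E[N]^2 = 4^n \Pr(|Z|\le\kappa)^{2m}$. Then rewrite the joint probability as
\[ \Pr(|Z|\le\kappa,|Z'|\le\kappa) = \Pr(|Z'|\le\kappa \mid |Z|\le\kappa)\,\Pr(|Z|\le\kappa). \]
The two factors of $\Pr(|Z|\le\kappa)^m$ combine, which yields \eqref{eq:ratio}.

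The only point needing care is the degenerate cases $\rho = \pm 1$ (that is, $\vecz' = \pm\vecz$). There the bivariate Gaussian is singular, but the identity still holds since $Z' = \pm Z$ and the conditional probability equals $1$. No genuine obstacle is expected; the argument is routine bookkeeping.
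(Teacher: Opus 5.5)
Your proposal is correct and follows essentially the same route as the paper: compute $\E[N] = 2^n \phi(\kappa)^m$, expand $\E[N^2]$ over pairs $(\vecz,\vecz')$ using independence of the rows and the bivariate Gaussian with correlation $\inner{\vecz,\vecz'}/n$, and then divide. The only difference is cosmetic: the paper groups pairs by Hamming distance $k$ with multiplicity $2^n\binom{n}{k}$ (so $\rho = 1 - 2k/n$), whereas you write $\vecz'$ as a coordinatewise sign flip of $\vecz$, which is the same bookkeeping, and your remark on the degenerate cases $\rho = \pm 1$ is a harmless addition.
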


\begin{proof}[Proof of \Cref{second-moment-ratio}]
    Let
\[ q = \phi(\kappa) = \Pr_{Z \sim \Norm(0,1)}(|Z| \leq \kappa) = \Pr_{\veca \sim \Norm(0,1)^n} \left(\left| \veca^\top \vecx \right| \leq \kappa \sqrt{n}\right),\]
where $\vecx \in \R^n$ is any fixed vector with $\norm{\vecx}_2 = \sqrt{n}$. By linearity of expectation and definition of $N = N(\matA)$, it follows that
\begin{align*} \E[N] &= \sum_{\vecx \in \{\pm 1\}^n} \Pr_{\matA \sim \Norm(0,1)^{m \times n}} \left( \norm{\matA \vecx}_{\infty} \leq \kappa \sqrt{n}\right) 
\\&= \sum_{\vecx \in \{\pm 1\}^n} \left(\Pr_{\veca \sim \Norm(0,1)^{n}} \left( \left| \veca^\top \vecx \right| \leq \kappa \sqrt{n}\right)\right)^m = 2^n q^m.
\end{align*}
For the second moment, we have
\begin{align*} \E\left[N^2\right] &= \sum_{\vecx_1, \vecx_2 \in \{\pm 1\}^n} \Pr_{\matA \sim \Norm(0,1)^{m \times n}} \left( \norm{\matA \vecx_1}_{\infty} \leq \kappa \sqrt{n}, \norm{\matA \vecx_2}_{\infty} \leq \kappa \sqrt{n}\right)
\\&=\sum_{\vecx_1, \vecx_2 \in \{\pm 1\}^n} \Pr_{\veca \sim \Norm(0,1)^n} \left( \left| \veca^\top \vecx_1 \right| \leq \kappa \sqrt{n}, \left| \veca^\top \vecx_2 \right| \leq \kappa \sqrt{n}\right)^m.
\end{align*}
A quick calculation reveals that for $\veca \sim \Norm(0,1)^n$ and $\vecx_1, \vecx_2 \in \{\pm 1\}^n$, we have
\[ \left(\veca^\top \vecx_1, \veca^\top \vecx_2 \right) \sim \Norm\left(\begin{pmatrix} 0 \\ 0 \end{pmatrix}, \begin{pmatrix} n & n - 2 \cdot \Delta(\vecx_1, \vecx_2) \\ n - 2 \cdot \Delta(\vecx_1, \vecx_2) & n\end{pmatrix} \right),\]
where $\Delta(\vecx_1, \vecx_2)$ is the Hamming distance between $\vecx_1$ and $\vecx_2$ (i.e., counts the number of distinct coordinates). By rescaling, we can write
\begin{align*} \E\left[N^2\right] &= \sum_{\vecx_1, \vecx_2 \in \{\pm 1\}^n} \Pr_{\veca \sim \Norm(0,1)^n} \left( \left| \veca^\top \vecx_1 \right| \leq \kappa \sqrt{n}, \left| \veca^\top \vecx_2 \right| \leq \kappa \sqrt{n}\right)^m
\\&=\sum_{k = 0}^n \sum_{\substack{\vecx_1, \vecx_2\\ \Delta(\vecx_1, \vecx_2) = k}} \Pr_{Z_1, Z_2\;(1 - 2k/n)\text{-corr.}} \left(|Z_1| \leq \kappa, |Z_2| \leq \kappa \right)^m
\\&=2^n \sum_{k = 0}^n \binom{n}{k} \Pr_{Z_1, Z_2\;(1 - 2k/n)\text{-corr.}} \left(|Z_1| \leq \kappa, |Z_2| \leq \kappa \right)^m
\\&=2^{2n} \E_{\rho} \Pr_{Z_1, Z_2\;\rho\text{-corr.}} \left(|Z_1| \leq \kappa, |Z_2| \leq \kappa \right)^m
\\&=2^{2n} q^m \E_{\rho} \Pr_{Z_1, Z_2\;\rho\text{-corr.}} \left(|Z_2| \leq \kappa \mid  |Z_1| \leq \kappa \right)^m,
\end{align*}
where $\rho$ is the position of an $n$-step $\pm 1$ random walk divided by $n$.

We can combine the first and second moment calculations to get
\begin{align*}
\frac{\E \left[N^2 \right]}{\E \left[ N \right]^2} &= \frac{2^{2n} q^m }{2^{2n} q^{2m}} \cdot \E_{\rho} \Pr_{Z_1, Z_2\;\rho\text{-corr.}} \left(|Z_2| \leq \kappa \mid  |Z_1| \leq \kappa \right)^m
\\&= \E_{\rho} \left[ \left( \frac{\Pr_{Z_1, Z_2\;\rho\text{-corr.}} \left(|Z_2| \leq \kappa \mid  |Z_1| \leq \kappa \right)}{q} \right)^m \right],
\end{align*}
as desired.
\end{proof}

Since $Z'$ can be written as $\rho Z + \sqrt{1 - \rho^2} Y$ for some independent $Y \sim \Norm(0,1)$, and among all fixed variance Gaussians the measure of an interval is maximized by the one that is centered, the numerator of the quantity in \Cref{second-moment-ratio} can be upper bounded by
\[ \Pr\bigl(|\sqrt{1 - \rho^2} \cdot Y| \leq \kappa\bigr) =  \Pr\biggl(|Y| \leq \frac{\kappa}{\sqrt{1 - \rho^2}}\biggr)
\leq \frac{\Pr(|Y| \leq \kappa)}{\sqrt{1 - \rho^2}} . \]
(The inequality can be verified by a change of variables in the Gaussian integral.)  Therefore,
\[ 
\frac{\Pr(|Z'| \leq \kappa\ |\ |Z| \leq \kappa)}{\Pr(|Z| \leq \kappa)} \leq \frac{1}{\sqrt{1 - \rho^2}}. \]
As the ratio is also at most $1/\Pr(|Z| \leq \kappa)$, for every $\delta > 0$ we obtain as a consequence of \Cref{second-moment-ratio} that
\begin{equation}
\label{eq:ratio2}
\frac{\E \left[N^2 \right]}{\E \left[ N \right]^2}\leq \E \left[\frac{1}{(1 - \rho^2)^{m/2}} \cdot \one(|\rho| < 1 - \delta) \right] + \frac{\Pr(|\rho| \geq 1 - \delta)}{\phi(\kappa)^m}. 
\end{equation}
By standard tail bounds on the binomial distribution, we have
\[ \Pr(|\rho| \geq 1 - \delta) \leq 2 \cdot 2^{n(H(\delta/2) - 1)},\]
where $H$ denotes the binary entropy function.

Therefore, the second term in \eqref{eq:ratio2} is at most
\[ \frac{2 \cdot 2^{n(H(\delta/2) - 1)}}{\phi(\kappa)^m} = 2 \cdot 2^{(\alpha \log(1/\phi(\kappa)) - 1 + H(\delta/2))n},
\]
Choosing $\delta < 1$ so that $H(\delta/2) = \epsilon/2$ makes this at most $2 \exp(-\Omega(\epsilon n))$ under our assumption on $\kappa$.  

For the first term in \eqref{eq:ratio2}, we use the next bound which follows from the convexity of $\exp$.

\begin{fact}
For $|\rho| < 1 - \delta$, we have $1 - \rho^2 > \exp(-\lambda \rho^2)$, where $\lambda = -\ln(2 \delta - \delta^2) / (1 - \delta)^2$.
\end{fact}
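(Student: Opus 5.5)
Since $\rho^2\ge 0$ and both sides are even in $\rho$, set $t = \rho^2 \in [0, (1-\delta)^2)$ and compare the concave function $g(t) = \ln(1 - t)$ with the linear function $-\lambda t$. The claimed inequality $1 - \rho^2 > \exp(-\lambda\rho^2)$ is equivalent to $\ln(1-t) > -\lambda t$ on this range. (One could equally phrase it as convexity of $u\mapsto \exp(-\lambda u)$ lying below its chord, which is how the paper hints at it.)

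The plan is a chord argument. Since $\ln(1-t)$ is concave on $[0,1)$, on any interval $[0, t_0]$ it lies above the chord joining $(0, 0)$ and $(t_0, \ln(1-t_0))$. The chord is the line $t \mapsto t\cdot \ln(1-t_0)/t_0$. Take $t_0 = (1-\delta)^2$. Then $1 - t_0 = 1 - (1-\delta)^2 = 2\delta - \delta^2$, so the chord's slope is $\ln(2\delta - \delta^2)/(1-\delta)^2 = -\lambda$, exactly the stated $\lambda$. Concavity gives $\ln(1-t) \geq -\lambda t$ for all $t \in [0, t_0]$, and exponentiating yields $1 - \rho^2 \ge \exp(-\lambda\rho^2)$ whenever $|\rho| \le 1 - \delta$.

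The only subtlety is the strictness of the inequality. Strict concavity of $\ln(1-t)$ makes the inequality strict on the open interval $(0, t_0)$. At $t = 0$ both sides equal $1$, so the statement holds there only with equality. For $|\rho|<1-\delta$ with $\rho \ne 0$ the inequality is strict. The case $\rho = 0$ is harmless for the downstream use, since only $\ge$ is needed to bound $(1-\rho^2)^{-m/2} \le \exp(\lambda m \rho^2/2)$ in the first term of \eqref{eq:ratio2}. I would note this and state the fact with $\geq$, or with strictness for $\rho\neq0$. We also need $\lambda > 0$, i.e.\ $2\delta - \delta^2 < 1$, which holds for $\delta\in(0,1)$.

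Finally, to record the $\lambda(\epsilon) = O(\log 1/\epsilon)$ asserted in \Cref{prop:second}, I would check that the choice $H(\delta/2) = \epsilon/2$ gives $\delta \ge \epsilon/\mathrm{polylog}(1/\epsilon)$; in fact $\delta \approx \epsilon/\log(1/\epsilon)$. Then $\lambda \approx \ln(1/(2\delta))/(1-\delta)^2 = O(\log 1/\epsilon)$, since $(1-\delta)^2$ is bounded below for small $\epsilon$. There is no real obstacle in this step; the main care is the boundary and strictness bookkeeping above.
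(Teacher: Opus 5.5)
Your chord argument is correct and matches the paper's proof, which says only that the bound follows from convexity of $\exp$. The convex function $u \mapsto \exp(-\lambda u)$ lies below its chord on $[0,(1-\delta)^2]$, and that chord is exactly $u \mapsto 1-u$; this is your concavity-of-$\ln(1-t)$ argument after taking logarithms. You are also right that the strict inequality fails at $\rho = 0$, where both sides equal $1$, and that only the non-strict version is needed to bound the first term of \eqref{eq:ratio2}.
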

Therefore,
\begin{align*}
\E \left[\frac{1}{(1 - \rho^2)^{m/2}} \cdot \one(|\rho| < 1 - \delta) \right] &\leq \E \left[\exp\left(\lambda \rho^2m/2 \right) \cdot \one(|\rho| < 1 - \delta) \right]
\\&\leq \E \left[\exp\left(\lambda \rho^2m/2 \right) \right].
\end{align*}

\begin{claim}
$\E \left[\exp \left(t \rho^2 n \right)\right] \leq \E \left[ \exp \left(t Z^2 \right) \right]$ where $t \geq 0$ and $Z$ is a standard normal.
\end{claim}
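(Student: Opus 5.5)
We need $\E[\exp(t\rho^2 n)] \le \E[\exp(tZ^2)]$, where $\rho = S/n$ and $S=\sum_{i=1}^n \sigma_i$ is a sum of independent uniform signs. Then $\rho^2 n = S^2/n = W^2$ with $W = S/\sqrt{n}$. So the claim is $\E[e^{tW^2}] \le \E[e^{tZ^2}]$. The plan is to linearize the square with the Gaussian (Hubbard--Stratonovich) identity and then use the sub-Gaussianity of Rademacher variables.

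\textbf{Step 1: linearize the square.} Take $G \sim \Norm(0,1)$ independent of everything else. For any real $w$,
\[ e^{t w^2} = \E_G\bigl[e^{\sqrt{2t}\, w G}\bigr], \]
since $\E[e^{sG}] = e^{s^2/2}$. This uses $t \geq 0$ so that $\sqrt{2t}$ is real. Apply the identity with $w = W$ and exchange expectations by Tonelli, which is fine because all integrands are nonnegative. This gives
\[ \E\bigl[e^{tW^2}\bigr] = \E_G\, \E_\sigma\bigl[e^{\sqrt{2t}\, G S/\sqrt{n}}\bigr] = \E_G\bigl[\cosh\bigl(\sqrt{2t}\,G/\sqrt{n}\bigr)^n\bigr]. \]

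\textbf{Step 2: bound the Rademacher factor.} Use $\cosh(x) \le e^{x^2/2}$, which follows by comparing the Taylor coefficients $1/(2k)!$ and $1/(2^k k!)$. It yields
\[ \cosh\bigl(\sqrt{2t}\,G/\sqrt{n}\bigr)^n \le e^{tG^2}. \]
Taking expectations over $G$ gives $\E[e^{tW^2}] \le \E[e^{tG^2}] = \E[e^{tZ^2}]$, which is exactly the claim.

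\textbf{Step 3: the range of $t$.} If $t \geq 1/2$, the right-hand side is $+\infty$ and the inequality holds trivially. The interchange in Step 1 is valid in all cases by nonnegativity, so no case split is actually needed.

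\textbf{Main obstacle.} The only real step is spotting the Gaussian linearization. A direct comparison of moments $\E[W^{2k}] \le \E[Z^{2k}]$ would also work, because expanding $S^{2k}$ counts pairings with possible repetitions, which is dominated by Wick's formula. However, the $\cosh$ route is cleaner and I would present that one.
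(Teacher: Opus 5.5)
Your proof is correct, but it takes a different route from the paper. The alternative you mention at the end, comparing $\E[W^{2k}]$ with $\E[Z^{2k}]$, is essentially the paper's proof.

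\textbf{The paper's route.} It expands $e^{tW^2}$ as a power series with nonnegative coefficients. It observes that $W$ and $Z$ are both normalized sums $(X_1+\dots+X_n)/\sqrt{n}$ of i.i.d.\ Rademacher and standard normal variables, respectively. It then asserts that because each Rademacher even moment ($=1$) is at most the corresponding Gaussian one ($=(2k-1)!!$), the same holds for the normalized sums. The lifting from one summand to the sum is left implicit; it requires the multinomial expansion, vanishing of odd moments, and nonnegativity of the surviving terms.

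\textbf{Your route.} You linearize via $e^{tw^2} = \E_G[e^{\sqrt{2t}\,wG}]$, so independence factors the inner expectation into $\cosh(\cdot)^n$, and a single scalar inequality finishes. The inequality $\cosh x \le e^{x^2/2}$ is, coefficient by coefficient, exactly the paper's one-dimensional comparison $\E[\sigma^{2k}] = 1 \le (2k-1)!! = \E[G^{2k}]$ divided by $(2k)!$.

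\textbf{What each buys.} The Gaussian linearization lets the product structure of the moment generating function replace the combinatorial lifting step, so every step is fully justified with less bookkeeping. Your argument is also slightly more general: it only uses $\E[e^{sX_i}] \le e^{s^2/2}$ for each summand. For symmetric summands this is implied by, but weaker than, termwise domination of all even moments. In exchange, the paper's route yields the stronger intermediate fact $\E[(\rho\sqrt{n})^{2k}] \le \E[Z^{2k}]$ for every $k$.
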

\begin{proof}
It suffices to show that the even moments of $\rho \sqrt{n}$ are dominated by those of $Z$.  Both $\rho \sqrt{n}$ and $Z$ have the form $(X_1 + \dots + X_n)/\sqrt{n}$, where the $X_i$ are i.i.d. Rademacher and standard normal, respectively.  As the Rademacher moments are dominated by the standard normal ones, the same must be true for $\rho \sqrt{n}$ and $Z$.
\end{proof}

The squared normal moment generating function $\E\left[\exp\left(t Z^2 \right) \right]$ evaluates to $1/\sqrt{1 - 2t}$ when $t < 1/2$ (and is unbounded otherwise) so, by plugging in $t = \lambda \alpha/2 = \lambda m/(2n)$,
\[ \E \left[\frac{1}{(1 - \rho^2)^{m/2}} \cdot \one(|\rho| < 1 - \delta) \right]  \leq \E \left[\exp\left(\lambda \rho^2m/2 \right) \right] \leq \E \left[ \exp\left(\lambda  \alpha Z^2 / 2 \right) \right] = \frac{1}{\sqrt{1 - \lambda \alpha}}, \]
provided $\lambda < 1/\alpha$. For small $\epsilon$, by using standard bounds on the binary entropy function $H$, we have
\[ \lambda = O(\log( O(1/\delta)))= O(\log(O(1/H^{-1}(\epsilon/2)))) = O(\log(1/\epsilon)),\]
as desired.

\subsection{Putting It All Together}

\begin{proof}[Proof of \Cref{matrix-backdoor-statement}]
    Consider the algorithm $\backdoormatrixgen(1^n, 1^m)$ given in \Cref{fig:backdoor-alg} where $\kappa = O(2^{-n/m})$. By construction, for all $i \in [m]$,
    \[ \left| \veca_i^\top \vecz \right| = |b_i| \leq \kappa \sqrt{n},\]
    so we have 
    \[\norm{\matA \vecz}_{\infty} = \max_{i \in [m]} \left| \veca_i^\top \vecz \right| \leq \kappa \sqrt{n} \leq  O\left( \sqrt{n} \cdot 2^{-n/m} \right).\]
    
    By \Cref{renyi-is-num-solutions-ratio-backdoor}, we have
\[\renyi\left(\matA || \Norm(0,1)^{m \times n} \right) = \ln \left( \frac{\E_{\matA \sim \Norm(0,1)^{m \times n}}\left[N(\matA)^2\right]}{\E_{\matA \sim \Norm(0,1)^{m \times n}}\left[N(\matA)\right]^2} \right).\]
By \Cref{second-moment-bound-num-solutions-sbp} and choosing the constant in $\kappa = O(2^{-n/m})$ appropriately, we have
\[ \frac{\E_{\matA \sim \Norm(0,1)^{m \times n}}\left[N(\matA)^2\right]}{\E_{\matA \sim \Norm(0,1)^{m \times n}}\left[N(\matA)\right]^2} \leq 1 + O\left( \frac{m}{n} \cdot \log(n/m) + e^{-\Omega(m)} \right).\]
Therefore, by \Cref{renyi-kl-tv} and the inequality $\ln(1+x)\leq x$,
\begin{align*}
    \TV\left(\matA, \Norm(0,1)^{m \times n}\right) &\leq O\left( \sqrt{\renyi\left(\matA || \Norm(0,1)^{m \times n} \right)} \right)
    \\&= O\left( \sqrt{\ln \left( \frac{\E_{\matA \sim \Norm(0,1)^{m \times n}}\left[N(\matA)^2\right]}{\E_{\matA \sim \Norm(0,1)^{m \times n}}\left[N(\matA)\right]^2} \right)} \right)
    \\&\leq O\left( \sqrt{\ln \left(1 + O\left( \frac{m}{n} \cdot \log(n/m) + e^{-\Omega(m)} \right) \right)} \right)
    \\&\leq O\left( \sqrt{\frac{m}{n} \cdot \log(n/m) + e^{-\Omega(m)} } \right),
\end{align*}
as desired.

Finally, it is clear from inspection of $\backdoormatrixgen$ in \Cref{fig:backdoor-alg} that the marginal distribution on $\vecz$ is uniform over $\{\pm 1\}^n$.
\end{proof}

\begin{proof}[Proof of \Cref{matrix-backdoor-statement-npp}]
    The proof is exactly like that of \Cref{matrix-backdoor-statement}, with the only difference being the bound for the concentration in the number of solutions. For $\kappa = n^C 2^{-n/m}$ for appropriately chosen constant $C$, by \Cref{second-moment-bound-num-solutions-npp}, we have
    \[ \frac{\E_{\matA \sim \Norm(0,1)^{m \times n}}\left[N(\matA)^2\right]}{\E_{\matA \sim \Norm(0,1)^{m \times n}}\left[N(\matA)\right]^2} \leq 1 + O\left( \frac{\log n}{n} \right).\]
    Therefore, by \Cref{renyi-kl-tv} and the inequality $\ln(1+x)\leq x$,
\begin{align*}
    \TV\left(\matA, \Norm(0,1)^{m \times n}\right) &\leq O\left( \sqrt{\renyi\left(\matA || \Norm(0,1)^{m \times n} \right)} \right)
    \\&= O\left( \sqrt{\ln \left( \frac{\E_{\matA \sim \Norm(0,1)^{m \times n}}\left[N(\matA)^2\right]}{\E_{\matA \sim \Norm(0,1)^{m \times n}}\left[N(\matA)\right]^2} \right)} \right)
    \\&\leq O\left( \sqrt{\ln\left( 1 + O\left( \frac{\log n}{n} \right) \right)} \right)
    \\&\leq O\left( \sqrt{ \frac{\log n}{n}} \right),
\end{align*}
as desired.
\end{proof}

\subsection{Tightness}

We show that the bounds in \Cref{matrix-backdoor-statement} and \Cref{matrix-backdoor-statement-npp} are tight up to the log factors:  The distance between the null and backdoored distributions is $\Omega(\sqrt{m/n})$, which is non-negligible.  Moreover, the distinguisher that attains this advantage is efficient.

\begin{theorem}
\label{thm:tightness}
Assuming $\kappa^2 \leq 1/2$, 
\[ \Pr\bigl(\norm{\matA}_F^2 \leq mn - m/2\bigr) - \Pr\bigl(\norm{\Norm(0, 1)^{m \times n}}_F^2 \leq mn - m/2\bigr) = \Omega(\sqrt{m/n}). \]
\end{theorem}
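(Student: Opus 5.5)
Here $\matA$ is the output of $\backdoormatrixgen$ from \Cref{fig:backdoor-alg}. By rotational invariance of each row, conditionally on $\vecz$ and $b_i$ the row $\veca_i$ splits into a component along $\vecz/\sqrt{n}$ and a component in the orthogonal complement. The first equals $b_i/\sqrt n$ and the second is a standard Gaussian on an $(n-1)$-dimensional subspace. So
\[ \norm{\matA}_F^2 = \sum_{i=1}^m \frac{b_i^2}{n} + W, \qquad W \sim \chi^2_{m(n-1)}, \]
with $W$ independent of the $b_i$. Since $|b_i| \le \kappa\sqrt n$, the first sum is at most $m\kappa^2 \le m/2$.

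Under the null, the same decomposition (with respect to any fixed unit vector) gives
\[ \norm{\Norm(0,1)^{m\times n}}_F^2 = \sum_i G_i^2 + W, \qquad G_i \sim \Norm(0,1)\ \text{i.i.d.}, \]
with $W$ distributed as before and independent of the $G_i$. The two statistics therefore share the same $\chi^2_{m(n-1)}$ part $W$. They differ only in an additive term: $S_1 = \sum_i b_i^2/n \in [0, m/2]$ in the planted case, versus $S_0 = \sum G_i^2 \sim \chi^2_m$, which has mean $m$ and standard deviation $\sqrt{2m}$.

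Write $F(t) = \Pr(W \le t)$ and $T = mn - m/2$. By monotonicity,
\[ \Pr(\norm{\matA}_F^2 \le T) \ge F(T - m/2) = F(mn - m). \]
The null probability equals $\E[F(T - S_0)]$.

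I would then split on the size of $S_0$. On the event $S_0 \ge m/2 + \delta$, which has probability $\Omega(1)$ for some $\delta$, we have $T - S_0 \le mn - m - \delta$, so $F(T - S_0) \le F(mn - m - \delta)$. A natural choice is $\delta = \sqrt{m}$ when $m$ is large; when $m$ is small, take $\delta$ to be a constant such as $m/2$. In either case, on the complement event we simply bound $F(T - S_0) \le F(mn - m)$. This gives
\[ \Pr_{\rm planted} - \Pr_{\rm null} \ge \Pr(S_0 \ge m/2 + \delta)\cdot\bigl(F(mn - m) - F(mn - m - \delta)\bigr). \]

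The main step is the local density lower bound for $W$. The point $mn - m = \E[W] - m$ lies within $O(1)$ standard deviations of the mean, since $W$ has standard deviation $\sqrt{2m(n-1)}$ and $m \ll \sqrt{mn}$ in the relevant regime. There the density of $W$ is $\Omega(1/\sqrt{mn})$, uniformly over a window of length $\delta \le O(\sqrt{mn})$. This gives
\[ F(mn - m) - F(mn - m - \delta) = \Omega\bigl(\delta/\sqrt{mn}\bigr). \]

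I would justify the density bound either through the local CLT for chi-square variables or directly from the explicit Gamma density together with Stirling's formula. For $m$ comparable to $n$, where $m$ is no longer $\ll\sqrt{mn}$, the claimed bound $\Omega(\sqrt{m/n})$ is only $\Omega(1)$ anyway, and the same estimate still holds within a constant number of standard deviations.

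With $\delta \asymp \sqrt m$, the product is $\Omega(1)\cdot\Omega(\sqrt m/\sqrt{mn})$. That is only $\Omega(1/\sqrt n)$, which is too weak, so $\delta$ must be taken larger: $\delta \asymp m/2$. The tail probability still stays $\Omega(1)$, because
\[ \Pr(\chi^2_m \ge m) \ge c \]
for all $m$, and $S_0 \ge m$ implies $S_0 \ge m/2 + m/2$. With $\delta = m/2$ the density estimate yields $\Omega(m/\sqrt{mn}) = \Omega(\sqrt{m/n})$, which is the claimed bound. So the final choice is $\delta = m/2$ with the event $\{S_0 \ge m\}$. The only delicate point is the uniform density lower bound for $\chi^2_{m(n-1)}$ on a window of length $m/2$ just below $mn - m$.
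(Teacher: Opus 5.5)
\textbf{There is a sign error that breaks the key inequality.} Your coupling is fine up to $\Pr_{\mathrm{planted}} - \Pr_{\mathrm{null}} \ge \E\bigl[F(mn-m) - F(mn - m/2 - S_0)\bigr]$, but the next inequality does not follow. The integrand is \emph{negative} on $\{S_0 < m/2\}$, because there $T - S_0 > mn - m$. So the step ``on the complement event we simply bound $F(T-S_0) \le F(mn-m)$'' is false.

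The term you dropped is roughly $-f\cdot \E[(m/2 - S_0)_+]$, where $f = \Theta(1/\sqrt{mn})$ is the density of $W$ near $mn-m$. For bounded $m$ this has the same order $\sqrt{m/n}$ as your gain, since $\Pr(\chi^2_m < m/2)$ is bounded away from $0$. Your tail-event estimate cannot absorb it. For $m=1$, your positive term is about $\Pr(\chi^2_1 \ge 1)\cdot f/2 \approx 0.16f$, while the missing negative term is about $f\,\E[(1/2 - \chi^2_1)_+] \approx 0.18f$.

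For $m$ above an absolute constant the argument can be rescued. Then $\Pr(S_0 < m/2) = e^{-\Omega(m)}$, and an \emph{upper} bound on the density of $W$ makes the loss negligible. This mirrors the paper's remark that Berry--Esseen suffices only when $m$ is at least a constant.

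\textbf{The missing idea for small $m$.} In particular for $m=1$, you must use the full first moment $\E[S_0]=m$ rather than a tail event, and this needs two-sided control of the density.

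For $x$ within $O(m)$ of $mn-m$, $\frac{d}{dx}\ln f_W(x) = (m(n-1)-2-x)/(2x) = O(1/n)$. Hence the density of $W$ equals $f(1+O(m/n))$ on $[mn-m/2-Cm,\, mn-m/2]$. So for $s \le Cm$ the integrand equals $f(s-m/2) + O(f\,(m/n)\,|s-m/2|)$. For $s > Cm$ the integrand is nonnegative and can be discarded. Using $\E[S_0]=m$ and $\E[S_0\one(S_0>Cm)] \le m/4$ for a large constant $C$, you get $\Omega(fm) - O(fm^2/n) = \Omega(\sqrt{m/n})$ when $m \le cn$. The large-$m$ case is covered above.

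\textbf{Comparison with the paper.} The paper evaluates $\Pr(\chi^2_{mn-m}\le mn-m)$ and $\Pr(\chi^2_{mn}\le mn-m/2)$ separately. It therefore needs Cram\'er's first-order correction to cancel the $\Theta(1/\sqrt{mn})$ skewness terms. Your coupling routes both probabilities through the same CDF $F$, so those terms never appear. Once the sign issue is handled, a local density estimate for $\chi^2_{m(n-1)}$ suffices.

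\textbf{Minor slip.} $\E[W] = m(n-1) = mn-m$, so $mn-m$ is exactly the mean of $W$, not $m$ below it.
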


The random variable $\norm{\Norm(0, 1)^{m \times n}}_F^2$ is of type $\chi^2(mn)$, namely chi squared with $mn$ degrees of freedom.  

Conditioned on $\matA \vecx = \vecy$, $\norm{\matA}_F^2$ is of type $\chi^2(m(n-1)) + \norm{\vecy}^2/n$.   In particular, $\norm{\matA}_F^2$ is dominated by a random variable of type $\chi^2(mn-m) + \kappa^2 m$.

The reason is that an $n$-dimensional random normal vector $\veca$ (representing a row of $\matA$), when conditioned on a linear constraint $\veca^\top \vecx = y$, projects to a standard normal in the $(n-1)$-dimensional subspace orthogonal to $\vecx$ and has fixed length $y/\norm{\vecx} = y/\sqrt{n}$ in the direction of $\vecx$.

Thus $\norm{\matA}_F^2$ has mean at most $mn - (1 - \kappa^2)m$, while $\norm{\Norm(0, 1)^{m \times n}}_F^2$ has mean $mn$.  The variance of both is (at most) $2mn$.  Assuming they were sufficiently well-approximated by normals of the same mean and variance, their statistical distance would be on the order of $(1 - \kappa^2)m/\sqrt{2mn} = \Omega(\sqrt{m/n})$ as desired.

To complete the proof we argue that the error introduced by the normal approximation does not affect this estimate.  The Berry-Esseen theorem gives an error term on the order of $1/\sqrt{mn}$.  This completes the proof under the additional assumption that $m$ is at least some absolute constant.  

To handle all values of $m$ including $m = 1$ we apply Cramér's first-order correction to the normal approximation of the chi squared CDF~\citep{esseen-book, chi2apx-mathoverflow}:
\begin{equation}
\label{eq:chi2clt}
\Pr\biggl(\frac{\chi^2(k) - k}{\sqrt{2k}} \leq z\biggr) = \Pr(\Norm(0, 1) \leq z) + \frac{\psi(z)}{\sqrt{k}} \pm O(1/k), 
\end{equation}
where $\psi(z) = e^{-z^2/2} \cdot (1-z^2)/3\sqrt{\pi}$.

\begin{proof} The backdoored probability is at least
\begin{align*} 
\Pr\bigl(\norm{\matA}_F^2 \leq mn - m/2\bigr) 
&\geq \Pr\bigl(\chi^2(mn - m) + \kappa^2m \leq mn - m/2\bigr) &&\text{by domination} \\
&\geq \Pr\biggl(\frac{\chi^2(mn - m) - (mn - m)}{\sqrt{2(mn - m)}} \leq 0\biggr) &&\text{as $\kappa^2 \leq 1/2$} \\
&= \frac12 + \frac{\psi(0)}{\sqrt{m(n-1)}} - O(1/mn). &&\text{by \eqref{eq:chi2clt}}
\end{align*}
while the null probability is at most
\begin{align*}
\Pr\bigl(\norm{\Norm(0, 1)^{m \times n}}_F^2 \leq mn - m/2\bigr)
&= \Pr\biggl(\frac{\chi^2(mn) - mn}{\sqrt{2mn}} \leq -\frac{\sqrt{m/n}}{3\sqrt{2}}\biggr) \\
&\leq \Pr\biggl(\Norm(0, 1) \leq -\frac{\sqrt{m/n}}{3\sqrt{2}}\biggr) + \frac{\psi(0)}{\sqrt{mn}} + O(1/mn) 
&&\text{by \eqref{eq:chi2clt}} \\
&= \frac12 - \Omega(\sqrt{m/n}) + \frac{\psi(0)}{\sqrt{mn}} + O(1/mn)
\end{align*}
as $\psi$ is maximized at zero. Thus the difference in probabilities is at least
\[ \Omega(\sqrt{m/n}) - \psi(0)\biggl(\frac{1}{\sqrt{m(n-1)}} - \frac{1}{\sqrt{mn}}\biggr) - O(1/mn) 
= \Omega(\sqrt{m/n}) - O(1/mn + 1/m^{1/2}n^{3/2}). \]
The leading term $\Omega(\sqrt{m/n})$ dominates for all values of $m$.
\end{proof}

\section{Constructing Backdoors for Neural Networks}\label{sec:backdoors-nns}
\subsection{Defining Backdoors}

Imagine that there is some learning procedure $\modelgen()$ that generates some model $F$ (e.g., a neural network trained via stochastic gradient descent). To define the notion of an undetectable backdoor, we want the following properties to hold simultaneously:
\begin{itemize}
    \item There is a way to generate a ``backdoored'' version of the model $F$, which gives anyone with $F$'s backdoor significant additional power over anyone without the backdoor.
    \item The ``backdoored'' model looks statistically close to an honest execution of $\modelgen()$, in the sense that there is provably no distinguisher that works with high probability.
\end{itemize}
While the latter item is direct to formally define, the former requirement is vague. One possible way to specify such a requirement is via collision generation: it is hard to find collisions in an honest model $F$, but given a backdoor for $F$, one can easily compute collisions. By collisions, we mean distinct input vectors $\vecx_1, \vecx_2 \in \Z^n$ such that $F(\vecx_1) \approx F(\vecx_2)$. Our formal definition is below.

\begin{definition}\label{def-of-backdoors} A \emph{statistically undetectable backdoor} with parameters $B$, $\delta_0$, and $\delta_1$ consists of p.p.t. algorithms $(\modelgen, \backdoormodelgen, \activate)$ with the following syntax:
\begin{itemize}
    \item $\modelgen(1^n, 1^m)$. This algorithm is randomized and outputs some model $F$.
    \item $\backdoormodelgen(1^n, 1^m)$. This algorithm is randomized and outputs the tuple $\left(\widehat{F}, \key\right)$, where $\widehat{F}$ is a backdoored model and $\key$ is the corresponding backdoor key.
    \item $\activate(\key, \vecx \in \Z^n)$. This algorithm is deterministic and outputs a vector $\vecx' \in \Z^n$.
\end{itemize}
Moreover, we require the following three properties:
\begin{enumerate}
    \item \textbf{(White-Box) Statistical Undetectability}\label{item:undetectability}: For $F \gets \modelgen(1^n, 1^m)$ and $\left(\widehat{F}, \key \right) \gets \backdoormodelgen(1^n, 1^m)$,
    \[ \TV\left(F, \widehat{F}\right) = o(1).\]
    We emphasize that this total variation distance bound refers to the white-box descriptions of the models $F$, $\widehat{F}$.
    \item \textbf{Backdoor Collision Generation}\label{item:backdoor}:
    For all $\vecx \in \Z^n$, it holds that \
        \[ \Pr_{\left(\widehat{F}, \key\right) \gets \backdoormodelgen(1^n, 1^m)}\left(\vecx' \gets \activate\left(\key, \vecx \right) : \begin{array}
        {c}\vecx' \in \Z^n, \norm{\vecx'}_{\infty} \leq \norm{\vecx}_{\infty} + 1,  \; 
    \\\vecx \neq \vecx',\norm*{\widehat{F}(\vecx') -\widehat{F}(\vecx)}_2 \leq \delta_0
    \end{array}\right) = 1. \]
    \item \textbf{Approximate Collision Resistance}\label{item:hard-to-find-collisions}: For all p.p.t. algorithms $\cA$, 
    \[ \Pr_{\left(\widehat{F}, \key\right) \gets \backdoormodelgen(1^n, 1^m)}\left((\vecx_1, \vecx_2) \gets \cA\left(\widehat{F} \right) : \begin{array}{c}\vecx_1, \vecx_2 \in [-B : B]^n,\; 
    \\\vecx_1 \neq \vecx_2, \norm*{\widehat{F}(\vecx_2) -\widehat{F}(\vecx_1)}_2 \leq \delta_1
    \end{array}\right) = \negl(n), \]
    where the probability is also taken over the internal randomness of $\cA$. We emphasize that $\cA$ has white-box access to the model $\widehat{F}$ (e.g., its weights).
\end{enumerate}
We define the \emph{strength} of the backdoor be the quantity $\delta_1/\delta_0$, and we consider the backdoor meaningful only if $\delta_1 / \delta_0 > 1$.
\end{definition}
This definition gives those with the backdoor additional power over others in two ways:
\begin{itemize}
    \item \Cref{item:backdoor} allows anyone with the backdoor to generate collisions for \emph{all} inputs $\vecx$, while \Cref{item:hard-to-find-collisions} stipulates hardness of finding even one collision. 
    \item For $\delta_0 < \delta_1$ (as it is in our constructions), the backdoor generates collisions that are stronger than the impossibility bound for those without the backdoor. The larger the ratio $\delta_1/\delta_0$ is, the stronger this backdoor is, quantitatively. We call $\delta_1/\delta_0$ the \emph{strength} of the backdoor for this reason.
\end{itemize}

While the condition in \Cref{item:backdoor} that $\norm{\vecx'}_{\infty} \leq \norm{\vecx}_{\infty} + 1$ is somewhat arbitrary, the point is that the size of $\vecx'$ is similar to that of $\vecx$. One could formalize such a requirement in a few different ways, but we choose this one because it is the simplest to state for our construction.

\begin{remark}\label{relative-strength-notion}
In \Cref{def-of-backdoors}, instead of imposing that $\norm*{\widehat{F}(\vecx') -\widehat{F}(\vecx)}_2$ and $\norm*{\widehat{F}(\vecx_2) -\widehat{F}(\vecx_1)}_2$ be small in an absolute sense, one could instead impose a relative notion, where \[\frac{\norm*{\widehat{F}(\vecx') -\widehat{F}(\vecx)}_2}{\norm{\vecx' - \vecx}_2}\; \text{ and } \; \frac{\norm*{\widehat{F}(\vecx_2) -\widehat{F}(\vecx_1)}_2}{ \norm{\vecx_2 - \vecx_1}_2}\]
are small. This would change the corresponding notion of strength by at most a factor of $O(B\sqrt{n})$, which is a lower order term for most settings of our parameters.
\end{remark}

\subsection{Neural Network Preliminaries}\label{sec:neural-net-prelims}

Let $\matA \in \R^{m_2 \times m_1}$. We let $\sigma_{\max}(\matA)$ denote the maximum singular value of $\matA$, and we let $\sigma_{\min}(\matA)$ denote the minimum singular value of $\matA$. More explicitly,
\begin{align*}
    \sigma_{\max}(\matA) &= \sup_{\vecx \in \R^{m_1} \setminus \{\zero\}} \frac{\norm{\matA \vecx}_2}{\norm{\vecx}_2},\\
    \sigma_{\min}(\matA) &= \inf_{\vecx \in \R^{m_1} \setminus \{\zero\}} \frac{\norm{\matA \vecx}_2}{\norm{\vecx}_2}.
\end{align*}
Note that if $m_1 > m_2$, then $\sigma_{\min}(\matA) = 0$, as $\matA$ has a nontrivial kernel. Whenever $\sigma_{\min}(\matA) > 0$, we can let $\cond(\matA)$ denote the condition number of $\matA$, defined as
\begin{equation}\label{eqn-def-cond-number} \cond(\matA) = \frac{\sigma_{\max}(\matA)}{\sigma_{\min}(\matA)} \geq 1.
\end{equation}

\begin{definition}[Bi-Lipschitz Functions]
    For $m_1, m_2 \in \N$ and $0 \leq \alpha \leq \beta$, we say a function $f : \R^{m_1} \to \R^{m_2}$ is \emph{$(\alpha,\beta)$-bilipschitz} if for all $\vecx, \vecy \in \R^{m_1}$,
    \[ \alpha \norm{\vecx - \vecy}_2 \leq \norm{f(\vecx) - f(\vecy)}_2 \leq \beta \norm{\vecx - \vecy}_2. \]
    Moreover, for $\xi \geq 1$, we say $f$ has \emph{distortion} at most $\xi$ if there exist $\beta \geq \alpha \geq 0$ such that $f$ is $(\alpha, \beta)$-bilipschitz and $\xi = \beta/\alpha$.
\end{definition}

\begin{fact}\label{lipschitz-composition}
Suppose $f_1 : \R^{m_1} \to \R^{m_2}$ and $f_2: \R^{m_2} \to \R^{m_3}$ are $(\alpha_1, \beta_1)$-bilipschitz and $(\alpha_2, \beta_2)$-bilipschitz, respectively. Then $f_2 \circ f_1 : \R^{m_1} \to \R^{m_3}$ is $(\alpha_1 \alpha_2, \beta_1 \beta_2)$-bilipschitz.
\end{fact}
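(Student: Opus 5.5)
The statement is Fact~\ref{lipschitz-composition}, and I will prove it by chaining the two defining inequalities of bi-Lipschitzness.

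Fix arbitrary $\vecx, \vecy \in \R^{m_1}$ and write $\vecu = f_1(\vecx)$, $\vecv = f_1(\vecy) \in \R^{m_2}$. The upper bound comes first. Applying the upper inequality for $f_2$ to the pair $(\vecu,\vecv)$ gives $\norm{f_2(\vecu) - f_2(\vecv)}_2 \leq \beta_2 \norm{\vecu - \vecv}_2$. The upper inequality for $f_1$ gives $\norm{\vecu - \vecv}_2 \leq \beta_1 \norm{\vecx - \vecy}_2$. Since $\beta_2 \geq 0$, multiplying the second inequality by $\beta_2$ preserves its direction, so
\[ \norm{(f_2\circ f_1)(\vecx) - (f_2\circ f_1)(\vecy)}_2 \leq \beta_1\beta_2 \norm{\vecx - \vecy}_2. \]

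The lower bound is symmetric. From the lower inequality for $f_2$ and then the lower inequality for $f_1$,
\[ \norm{f_2(\vecu) - f_2(\vecv)}_2 \geq \alpha_2 \norm{\vecu - \vecv}_2 \geq \alpha_2\alpha_1 \norm{\vecx - \vecy}_2. \]
The second step again uses $\alpha_2 \geq 0$.

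It remains to check that the pair $(\alpha_1\alpha_2, \beta_1\beta_2)$ meets the parameter requirement $0 \leq \alpha_1\alpha_2 \leq \beta_1\beta_2$ in the definition. This holds because $0 \leq \alpha_i \leq \beta_i$ for $i = 1,2$, and products of nonnegative numbers preserve these inequalities.

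The argument is entirely routine. The only point needing care is the nonnegativity of the constants, which is what allows the inequalities to be multiplied without flipping direction.
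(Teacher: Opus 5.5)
Your proof is correct. You chain the upper inequalities of $f_2$ and $f_1$, and likewise the lower ones, using nonnegativity of the constants; this is the standard argument the paper relies on, since it states the result as a Fact without proof. Your extra check that $0 \leq \alpha_1\alpha_2 \leq \beta_1\beta_2$ also correctly handles the side condition in the definition.
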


\begin{fact}\label{linear-singular-values-lipschitz}
    For a matrix $\matA \in \R^{m_2 \times m_1}$, the linear map given by $\matA$, mapping $\R^{m_1}$ to $\R^{m_2}$, is $(\sigma_{\min}(\matA), \sigma_{\max}(\matA))$-bilipschitz.
\end{fact}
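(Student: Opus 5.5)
This is immediate from the definitions of singular values and of bi-Lipschitzness, using linearity of $\matA$. Fix $\vecx, \vecy \in \R^{m_1}$ and set $\vecv = \vecx - \vecy$. Linearity gives $\matA\vecx - \matA\vecy = \matA\vecv$, so it suffices to show
\[ \sigma_{\min}(\matA)\norm{\vecv}_2 \leq \norm{\matA\vecv}_2 \leq \sigma_{\max}(\matA)\norm{\vecv}_2 \quad \text{for every } \vecv \in \R^{m_1}. \]

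We argue in two cases. If $\vecv = \zero$, all three quantities are $0$ and the inequality holds trivially. If $\vecv \neq \zero$, then $\vecv$ is one of the vectors ranging in the supremum and infimum defining $\sigma_{\max}(\matA)$ and $\sigma_{\min}(\matA)$ in \Cref{sec:neural-net-prelims}. Hence
\[ \sigma_{\min}(\matA) \leq \frac{\norm{\matA\vecv}_2}{\norm{\vecv}_2} \leq \sigma_{\max}(\matA), \]
and multiplying through by $\norm{\vecv}_2 > 0$ gives the displayed inequality.

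It remains to check the side conditions $0 \leq \alpha \leq \beta$ in the definition of bi-Lipschitz functions. Both $\sigma_{\min}(\matA)$ and $\sigma_{\max}(\matA)$ are infima or suprema of nonnegative ratios, so they are nonnegative. Also $\sigma_{\min}(\matA) \leq \sigma_{\max}(\matA)$, because the infimum of a nonempty set never exceeds its supremum; the set is nonempty whenever $m_1 \geq 1$. The degenerate case $\sigma_{\min}(\matA) = 0$ (for instance when $m_1 > m_2$) is allowed by the definition. No step here is a real obstacle; the only care needed is the $\vecv = \zero$ case, which the definitions exclude.
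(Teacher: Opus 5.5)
Your proof is correct and matches the paper, which states this as a Fact without proof because it follows directly from the sup/inf definitions of $\sigma_{\max}(\matA)$ and $\sigma_{\min}(\matA)$ and linearity. Your explicit handling of the $\vecv = \zero$ case and of the condition $0 \leq \sigma_{\min}(\matA) \leq \sigma_{\max}(\matA)$ simply writes out what the paper leaves implicit.
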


\begin{definition}\label{def:leaky-relu}
For $\alpha \in (0,1)$, the \emph{leaky rectified linear unit (leaky ReLU)} with parameter $\alpha$ is the function $\leakyR_{\alpha} : \R \to \R$ defined by
\[ \leakyR_\alpha(x) = \begin{cases} x & x > 0,\\
\alpha x & x \leq 0.\end{cases} \]
To slightly abuse notation, it naturally generalizes to a function $\leakyR_{\alpha}: \R^m \to \R^m$ where (the scalar version of) $\leakyR_{\alpha}$ is applied coordinate-wise.
\end{definition}

\begin{fact}\label{leaky-relu-lipschitz}
    For all $\alpha \in (0,1)$ and for all $m \in \N$, $\leakyR_{\alpha} : \R^m \to \R^m$ is $(\alpha, 1)$-bilipschitz.
\end{fact}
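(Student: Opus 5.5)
The plan is to work one coordinate at a time and then pass to vectors. Since $\leakyR_\alpha$ acts coordinatewise, I first prove the scalar inequality
\[ \alpha |x - y| \leq |\leakyR_\alpha(x) - \leakyR_\alpha(y)| \leq |x - y| \qquad \text{for all } x, y \in \R. \]
Squaring and summing over the $m$ coordinates then gives
\[ \alpha^2 \norm{\vecx - \vecy}_2^2 \leq \norm{\leakyR_\alpha(\vecx) - \leakyR_\alpha(\vecy)}_2^2 \leq \norm{\vecx - \vecy}_2^2, \]
and taking square roots yields the $(\alpha,1)$-bilipschitz claim.

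For the scalar inequality, I would use that $\leakyR_\alpha$ is continuous, nondecreasing, and piecewise linear, with slope $1$ on $(0,\infty)$ and slope $\alpha$ on $(-\infty,0]$. Assume without loss of generality that $x \geq y$. The fundamental theorem of calculus then gives
\[ \leakyR_\alpha(x) - \leakyR_\alpha(y) = \int_y^x s(t)\, dt, \]
where $s(t) \in \{\alpha, 1\}$ is the slope at $t$. Because $0 < \alpha < 1$, we have $\alpha \leq s(t) \leq 1$ pointwise. Integrating over $[y,x]$ gives $\alpha(x-y) \leq \leakyR_\alpha(x) - \leakyR_\alpha(y) \leq x - y$. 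The middle quantity is nonnegative, so it equals its absolute value, which proves the scalar inequality.

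The argument can also be done without integrals by splitting into three cases: both $x, y$ positive, both nonpositive, and $x > 0 \geq y$. In the mixed case the difference is $x - \alpha y = x + \alpha|y|$. This lies between $\alpha(x + |y|)$ and $x + |y| = x - y$.

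There is no real obstacle here. The only point that needs care is the mixed-sign case, or equivalently checking that continuity at $0$ justifies the integral representation. Everything else is routine summation.
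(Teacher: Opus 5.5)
Your argument is correct and complete. You prove the scalar bound $\alpha|x-y| \leq |\leakyR_\alpha(x)-\leakyR_\alpha(y)| \leq |x-y|$, either via slopes in $\{\alpha,1\}$ or via the three-case check including the mixed case $x-\alpha y = x+\alpha|y|$, then square and sum over coordinates; this is the standard verification, and the paper states the result as a Fact without proof, so there is no separate argument to compare against.
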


For depth $d \in \N$, a feedforward neural network is defined in terms of weight matrices $\matA^{(0)}, \cdots, \matA^{(d-1)}$, bias vectors $\vecb^{(0)}, \cdots, \vecb^{(d-1)}$, and an activation function $\sigma : \R \to \R$. The mapping takes in a vector $\vecx = \vecx^{(0)}$, iteratively evaluates 
\[ \vecx^{(i+1)} := \sigma\left(\matA^{(i)} \vecx^{(i)} + \vecb^{(i)}\right),\]
and outputs $\vecx^{(d)}$, where $\sigma$ is applied pointwise. The matrices $\matA^{(i)}$ can be rectangular (instead of square) with the constraint that the input vector $\vecx$, bias vectors $\vecb^{(i)}$, and weight matrices $\matA^{(i)}$ all have dimensions that syntactically align.

\begin{lemma}\label{bilipschitz-bound-nns}
    For $\alpha \in (0,1)$, a feedforward neural network of depth $d$ with weight matrices $\matA^{(0)}, \cdots, \matA^{(d-1)}$, bias vectors $\vecb^{(0)}, \cdots, \vecb^{(d-1)}$, and activation function $\leakyR_{\alpha}$ is $(\alpha', \beta')$-bilipschitz, where
    \begin{align*}
        \alpha' &= \alpha^d \prod_{i = 0}^{d-1} \sigma_{\min}\left(\matA^{(i)} \right),\\
        \beta' &= \prod_{i = 0}^{d-1}\sigma_{\max}\left(\matA^{(i)} \right).
    \end{align*}
    Moreover, if one skips the first layer matrix $\matA^{(0)}$ and directly applies the activation function to the input vector $\vecx$ (and then $\matA^{(1)}$ and so on), the resulting function is $(\alpha', \beta')$-bilipschitz, where
    \begin{align*}
        \alpha' &= \alpha^d \prod_{i = 1}^{d-1} \sigma_{\min}\left(\matA^{(i)} \right),\\
        \beta' &= \prod_{i = 1}^{d-1}\sigma_{\max}\left(\matA^{(i)} \right).
    \end{align*}
\end{lemma}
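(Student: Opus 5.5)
The network is a composition of alternating affine maps and coordinatewise leaky ReLUs. I will bound the bi-Lipschitz constants of each piece and then chain them using \Cref{lipschitz-composition}.

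\textbf{Affine layers.} For layer $i$, the map $\vecx \mapsto \matA^{(i)}\vecx + \vecb^{(i)}$ satisfies
\[ (\matA^{(i)}\vecx + \vecb^{(i)}) - (\matA^{(i)}\vecy + \vecb^{(i)}) = \matA^{(i)}(\vecx - \vecy). \]
The bias cancels, so by \Cref{linear-singular-values-lipschitz} this map is $(\sigma_{\min}(\matA^{(i)}), \sigma_{\max}(\matA^{(i)}))$-bilipschitz.

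\textbf{Activations.} By \Cref{leaky-relu-lipschitz}, each application of $\leakyR_\alpha$ is $(\alpha,1)$-bilipschitz.

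\textbf{First statement.} The network is
\[ \leakyR_\alpha \circ L_{d-1} \circ \cdots \circ \leakyR_\alpha \circ L_0, \qquad L_i(\vecx) = \matA^{(i)}\vecx + \vecb^{(i)}. \]
This is $2d$ bi-Lipschitz maps composed. Induction on the number of factors using \Cref{lipschitz-composition} gives lower constant $\alpha^d\prod_{i=0}^{d-1}\sigma_{\min}(\matA^{(i)})$ and upper constant $\prod_{i=0}^{d-1}\sigma_{\max}(\matA^{(i)})\cdot 1^d$, as claimed.

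\textbf{Second statement.} Skipping $\matA^{(0)}$ means the first affine map is either removed or replaced by the translation $\vecx \mapsto \vecx + \vecb^{(0)}$. A translation is $(1,1)$-bilipschitz, so either reading gives the same constants. What remains is still $d$ leaky ReLU applications interleaved with the affine maps $L_1,\dots,L_{d-1}$. The same composition argument yields lower constant $\alpha^d\prod_{i=1}^{d-1}\sigma_{\min}(\matA^{(i)})$ and upper constant $\prod_{i=1}^{d-1}\sigma_{\max}(\matA^{(i)})$.

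\textbf{Caveats.} The only real care needed is bookkeeping: the count of activations stays $d$ in both statements, and the bias terms contribute nothing. There is no genuine obstacle. Both bounds hold trivially when some $\sigma_{\min}(\matA^{(i)}) = 0$, since the lower bound then becomes $0$.
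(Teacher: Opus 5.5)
Your proposal is correct and follows essentially the same argument as the paper: treat each affine layer as a linear map plus a $(1,1)$-bilipschitz translation, each $\leakyR_\alpha$ as $(\alpha,1)$-bilipschitz, and chain Facts~\ref{linear-singular-values-lipschitz}, \ref{leaky-relu-lipschitz}, and \ref{lipschitz-composition} by induction over the layers. Your bookkeeping for the second statement ($d$ activations retained, products over $i \geq 1$) matches the constants in the lemma.
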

\begin{proof}
    This follows by directly combining \Cref{linear-singular-values-lipschitz}, \Cref{leaky-relu-lipschitz}, and \Cref{lipschitz-composition} and layer-by-layer induction, as addition by any bias vector $\vecb^{(i)}$ is a $(1,1)$-bilipschitz operation.
\end{proof}

\subsection{Construction}\label{sec:backdoor-construction}

The most general template for our backdoor construction will be as follows. Let $\matA \sim \Norm(0,1)^{m \times n}$, and let $\train$ be any (randomized) training operator that takes in $\matA \in \R^{m \times n}$ and outputs an $(\alpha, \beta)$-bilipschitz function $g \gets \train(\matA)$. We will construct backdoors for the model class given by
\[ F(\vecx) := g(\matA \vecx).\]
The backdoor construction is direct: generate $\left(\widehat{\matA}, \vecz \right) \gets \backdoormatrixgen(1^n, 1^m)$, and to activate any $\vecx$, output $\vecx' = \vecx + \vecz$. By linearity, $\matA \vecx' = \matA (\vecx + \vecz) = \matA \vecx + \matA \vecz \approx \matA \vecx$, and by lipschitzness of $g$,
\[ F(\vecx') = g(\matA \vecx') \approx g(\matA \vecx) = F(\vecx).\]
Conversely, if a p.p.t. algorithm computes $\vecx_1 \neq \vecx_2 \in [-B:B]^n$ such that $F(\vecx_1) \approx F(\vecx_2)$, then by bilipschitzness of $g$, it follows that $\matA \vecx_1 \approx \matA \vecx_2$, and therefore $\matA (\vecx_1 - \vecx_2) \approx \zero$, violating \Cref{sbp-assumption}. We give the formal statement in \Cref{backdoor-construction-statement}.

\begin{figure}[H]
    \centering
    \fbox{
    \begin{minipage}{0.95\textwidth}
    \begin{center}
    Generic Backdoor Construction
    \end{center}
    \begin{itemize}
        \item $\modelgen(1^n, 1^m)$: Sample $\matA \sim \Norm(0,1)^{m \times n}$, sample $g \gets \train(\matA)$, define the model
        \[ F(\vecx) = g(\matA \vecx),\]
        and output the description of the model $F$.
        \item $\backdoormodelgen(1^n, 1^m)$: Sample $\left(\widehat{\matA}, \vecz\right) \gets \backdoormatrixgen(1^n, 1^m)$,  sample $\widehat{g} \gets \train\left(\widehat{\matA}\right)$, define the model
        \[ \widehat{F}(\vecx) = \widehat{g}\left(\widehat{\matA} \vecx\right),\]
        and output $\left(\widehat{F}, \key = \vecz \right)$.
        \item $\activate(\key, \vecx)$: Parsing $\vecz = \key$, output $\vecx + \vecz$.
    \end{itemize}
    \end{minipage}
    }
    \caption{The generic construction of backdoors for linear models with bilipschitz postprocessing, as used in \Cref{backdoor-construction-statement,backdoor-construction-statement-npp}.}
    \label{fig:construction}
\end{figure}

\begin{theorem}\label{backdoor-construction-statement}
For all $m = n^{\Omega(1)}$ and $m = o(n)$, consider $\modelgen(1^n,1^m)$ to output models of the form
\[ F(\vecx) = g(\matA \vecx), \]
where $\matA \sim \Norm(0,1)^{m \times n}$ and $g \gets \train(\matA)$, where $\train$ is a p.p.t. training operator supported only on $(\alpha, \beta)$-bilipschitz functions. Then, for all $B \leq \poly(n)$, under \Cref{sbp-assumption}, \Cref{fig:construction} gives a statistically undetectable backdoor for $\modelgen$ with parameters $B$ and
\[ \delta_0 = O\left( \frac{\beta \sqrt{m}}{2^{n/m}} \right),\;\;\;\delta_1 = \Omega\left( \frac{\alpha}{m^{\varepsilon} \sqrt{n}} \right), \]
for all $\varepsilon > 0$. In particular, the strength of the backdoor is
\[ \frac{\delta_1}{\delta_0} = \Omega \left(\frac{\alpha \cdot 2^{n/m}}{\beta \sqrt{n} \cdot m^{1/2 + \varepsilon} } \right). \]
\end{theorem}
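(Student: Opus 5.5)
We verify the three properties of \Cref{def-of-backdoors} for the construction in \Cref{fig:construction}, using \Cref{matrix-backdoor-statement} with $\kappa = \Theta(2^{-n/m})$.

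\emph{Statistical undetectability.} The white-box description of $F$ is the pair $(\matA, g)$ with $g \gets \train(\matA)$. The description of $\widehat{F}$ is obtained from $\widehat{\matA}$ by applying the same randomized map $\matA \mapsto (\matA, \train(\matA))$. By the data processing inequality for total variation distance,
\[
\TV\bigl(F, \widehat{F}\bigr) \leq \TV\bigl(\widehat{\matA}, \Norm(0,1)^{m \times n}\bigr).
\]
By \Cref{matrix-backdoor-statement}, the right-hand side is $O\bigl(\sqrt{(m/n)\log(n/m) + e^{-\Omega(m)}}\bigr)$. This is $o(1)$, since $m = o(n)$ and $m = n^{\Omega(1)} = \omega(1)$.

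\emph{Backdoor collision generation.} Set $\vecx' = \vecx + \vecz$. Since $\vecz \in \{\pm 1\}^n$, we have $\vecx' \neq \vecx$, $\vecx' \in \Z^n$, and $\norm{\vecx'}_\infty \leq \norm{\vecx}_\infty + 1$. By linearity, $\widehat{\matA}\vecx' - \widehat{\matA}\vecx = \widehat{\matA}\vecz$. This vector satisfies
\[
\norm{\widehat{\matA}\vecz}_2 \leq \sqrt{m}\,\norm{\widehat{\matA}\vecz}_\infty \leq \sqrt{m}\,\kappa\sqrt{n} = O\bigl(\sqrt{mn}\, 2^{-n/m}\bigr).
\]
Since $\widehat{g}$ is $\beta$-Lipschitz, $\norm{\widehat{F}(\vecx') - \widehat{F}(\vecx)}_2 \leq \beta\sqrt{mn}\,2^{-n/m}$, and this holds with probability $1$.

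This gives $\delta_0 = O(\beta\sqrt{mn}\,2^{-n/m})$, not the stated $O(\beta\sqrt{m}\,2^{-n/m})$: a naive reading of the sampler costs an extra $\sqrt{n}$. I would resolve this by running the sampler with $\kappa\sqrt{n}$ replaced by $\kappa$, i.e.\ $\kappa = \Theta(2^{-n/m}/\sqrt{n})$. The second-moment analysis then changes only by a $\log n$ term in the exponent, which \Cref{second-moment-bound-num-solutions-npp} absorbs. Failing that, I would state the theorem's strength with the extra $\sqrt{n}$ factor.

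\emph{Approximate collision resistance.} Suppose a p.p.t. $\cA$ outputs distinct $\vecx_1, \vecx_2 \in [-B:B]^n$ with $\norm{\widehat{F}(\vecx_2) - \widehat{F}(\vecx_1)}_2 \leq \delta_1$. Let $\vecy = \vecx_2 - \vecx_1 \in [-2B:2B]^n \setminus \{0^n\}$. By the lower Lipschitz bound of $\widehat{g}$,
\[
\norm{\widehat{\matA}\vecy}_\infty \leq \norm{\widehat{\matA}\vecy}_2 \leq \delta_1/\alpha.
\]
Choose $\delta_1 = \alpha/m^{\varepsilon}$, rescaled to match the normalization in \Cref{sbp-assumption}. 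Then the reduction $\widehat{\matA} \mapsto \vecy$ solves SBP on $\widehat{\matA}$; it is efficient because $\train$ is p.p.t., so the reduction can simulate $\widehat{g} \gets \train(\widehat{\matA})$ itself.

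The main obstacle is that \Cref{sbp-assumption} is stated for truly Gaussian $\matA$, while the reduction runs on the planted matrix $\widehat{\matA}$. I would transfer hardness with \Cref{renyi-good-for-search}. Take $E$ to be the event that the reduction succeeds, with $\rho_0$ Gaussian and $\rho_1$ the law of $\widehat{\matA}$. Then
\[
\Pr_{\rho_1}(E) \leq \sqrt{\Pr_{\rho_0}(E)\, e^{\renyi(\rho_1\|\rho_0)}} = \sqrt{\negl(n)\cdot O(1)} = \negl(n).
\]
This uses the Rényi bound $O(1)$ from \Cref{matrix-backdoor-statement}, which holds under the same hypotheses $m = o(n)$, $m = \omega(1)$ as before. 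The assumption is applied with $2B \leq \poly(n)$, and $n \leq \poly(m)$ because $m = n^{\Omega(1)}$. Dividing $\delta_1$ by $\delta_0$ gives the stated strength.
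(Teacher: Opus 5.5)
Your three-part structure matches the paper's proof. It uses data processing for total variation, and linearity plus the upper Lipschitz constant for collision generation. It then gives a reduction that runs $\train$ itself, uses the lower Lipschitz constant to get an SBP solution $\vecx_2-\vecx_1\in[-2B:2B]^n\setminus\{0^n\}$, and transfers hardness from Gaussian to planted matrices through \Cref{renyi-good-for-search}. The paper argues this last step by contradiction from an inverse-polynomial success probability holding infinitely often; your direct bound $\sqrt{\negl(n)\cdot O(1)}$ is the same argument. You are also right that with $\kappa=\Theta(2^{-n/m})$ the guarantee $\norm{\widehat{\matA}\vecz}_\infty\le\kappa\sqrt{n}$ only gives $\delta_0=O(\beta\sqrt{mn}\,2^{-n/m})$. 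The paper's chain of inequalities silently loses this $\sqrt{n}$: it quotes \Cref{matrix-backdoor-statement} as $O(n/2^{n/m})$ and then ends at $O(\beta\sqrt{m}/2^{n/m})$.

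The step that would fail is your proposed repair. Taking $\kappa=c\,2^{-n/m}/\sqrt{n}$ puts $\kappa$ below the existence threshold, since $\E[N]=2^n\phi(\kappa)^m\le (c\sqrt{2/\pi})^m n^{-m/2}\to 0$. A truly Gaussian matrix then has no $\pm1$ solution with probability $1-o(1)$, while the planted matrix always has one, so the total variation distance tends to $1$. Since $N$ is integer valued, $\E[N^2]/\E[N]^2\ge 1/\E[N]\ge n^{\Omega(m)}$, so the R\'enyi transfer in your collision-resistance step also gives nothing. Equivalently, the hypothesis $\phi(\kappa)\ge 2^{-(1-\epsilon)n/m}$ of \Cref{prop:second} fails for every $\epsilon>0$. \Cref{second-moment-bound-num-solutions-npp} does not help either: it uses $\kappa=n^{C_1}2^{-n/m}$, a polynomial factor \emph{above} $2^{-n/m}$, which moves $\delta_0$ the wrong way.

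No change to the sampler is needed. Keep $\kappa=\Theta(2^{-n/m})$ and $\delta_0=O(\beta\sqrt{mn}\,2^{-n/m})$, and take $\delta_1=\alpha/m^{\varepsilon}$ with no rescaling. This is allowed because \Cref{sbp-assumption} puts its threshold $1/m^{\varepsilon}$ directly on $\norm{\matA\vecx}_\infty$, and it is still $\Omega(\alpha/(m^{\varepsilon}\sqrt{n}))$. Then $\delta_1/\delta_0=\Omega\bigl(\alpha\,2^{n/m}/(\beta\sqrt{n}\,m^{1/2+\varepsilon})\bigr)$, exactly the stated strength, so your fallback of weakening the strength is unnecessary. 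The paper instead places the $\sqrt{n}$ in $\delta_1=\alpha/(m^{\varepsilon}\sqrt{n})$, which is conservative and hence valid, and asserts the unsupported $\delta_0=O(\beta\sqrt{m}\,2^{-n/m})$. The ratio comes out the same, but the statement's individual $\delta_0$ should read $O(\beta\sqrt{mn}\,2^{-n/m})$.
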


We also state a version where $m = 1$.
\begin{theorem}\label{backdoor-construction-statement-npp}
For $m = 1$, consider $\modelgen(1^n)$ to output models of the form
\[ F(\vecx) = g\left(\veca^\top \vecx\right), \]
where $\veca \sim \Norm(0,1)^n$ and $g \gets \train(\matA)$, where $\train$ is a p.p.t. training operator supported only on $(\alpha, \beta)$-bilipschitz functions. Then, there exists a universal constant $C > 0$ such that for all $B \leq \poly(n)$ and $\varepsilon > 0$, under \Cref{nbp-assumption}, \Cref{fig:construction} gives a statistically undetectable backdoor for $\modelgen$ with parameters $B$ and
\[ \delta_0 = O\left( \frac{\beta \cdot n^C}{2^{n}} \right),\;\;\;\delta_1 = \frac{\alpha}{2^{O\left(\log^{3 + \varepsilon}(n)\right)}}. \]
In particular, the strength of the backdoor is
\[ \frac{\delta_1}{\delta_0} = \frac{\alpha \cdot 2^n}{\beta \cdot 2^{O\left(\log^{3 + \varepsilon} n \right)} }, \]
for all $\varepsilon > 0$.
\end{theorem}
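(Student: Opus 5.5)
The proof checks the three properties of \Cref{def-of-backdoors} for the construction in \Cref{fig:construction}, instantiated with \Cref{matrix-backdoor-statement-npp} at $m = 1$.

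\textbf{Statistical undetectability.} The honest and backdoored models apply the same randomized map $\matA \mapsto (\matA, \train(\matA))$ to $\veca \sim \Norm(0,1)^n$ and to $\widehat{\veca}$, respectively. By the data processing inequality,
\[
\TV\bigl(F, \widehat F\bigr) \leq \TV\bigl(\widehat{\veca}, \Norm(0,1)^n\bigr).
\]
By \Cref{matrix-backdoor-statement-npp} with $m = 1$, the right-hand side is $O\bigl(\sqrt{\log n / n}\bigr) = o(1)$. Here $\kappa = n^{C'} 2^{-n}$ for the constant from \Cref{second-moment-bound-num-solutions-npp}.

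\textbf{Backdoor collision generation.} Let $\vecx' = \vecx + \vecz$.
\begin{itemize}
\item Since $\vecz \in \{\pm 1\}^n$, we have $\vecx' \in \Z^n$, $\vecx' \neq \vecx$, and $\norm{\vecx'}_\infty \leq \norm{\vecx}_\infty + 1$.
\item By construction, $|\widehat{\veca}^\top \vecz| = |b| \leq \kappa \sqrt{n}$ holds deterministically.
\item Since $\widehat g$ is $\beta$-Lipschitz,
\[
\norm{\widehat F(\vecx') - \widehat F(\vecx)}_2 \leq \beta\, |\widehat{\veca}^\top \vecz| \leq \beta n^{C'+1/2} 2^{-n}.
\]
\end{itemize}
This gives $\delta_0 = O(\beta n^C / 2^n)$ with $C = C' + 1/2$, and the bound holds with probability $1$.

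\textbf{Approximate collision resistance.} We reduce to \Cref{nbp-assumption}. Suppose $\cA$ outputs $\vecx_1 \neq \vecx_2 \in [-B:B]^n$ with $\norm{\widehat F(\vecx_2) - \widehat F(\vecx_1)}_2 \leq \delta_1$.
\begin{itemize}
\item Since $\widehat g$ is $\alpha$-lower-Lipschitz, $|\widehat{\veca}^\top(\vecx_2 - \vecx_1)| \leq \delta_1 / \alpha$.
\item The vector $\vecy = \vecx_2 - \vecx_1$ lies in $[-2B:2B]^n \setminus \{0^n\}$, and $2B \leq \poly(n)$.
\item Set $\delta_1 = \alpha \cdot 2^{-\log^{3+\varepsilon} n}$. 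Then $\vecy$ solves NBP on $\widehat{\veca}$ with the threshold of \Cref{nbp-assumption}.
\end{itemize}

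The reduction is as follows. Given $\veca$, the reduction samples $g \gets \train(\veca)$ itself, since $\train$ is p.p.t. It runs $\cA$ on the resulting model and outputs the difference of $\cA$'s pair. When $\veca \sim \Norm(0,1)^n$, this succeeds with probability $\negl(n)$ by \Cref{nbp-assumption}.

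\textbf{Main obstacle.} The issue is that $\cA$ receives $\widehat{\veca}$, not a true Gaussian. A TV bound of $O(\sqrt{\log n / n})$ is not negligible, so it cannot transfer negligible success directly. Instead I will use \Cref{renyi-good-for-search}. Let $E$ be the event that the reduction succeeds, which depends on the matrix together with the independent randomness of $\train$ and $\cA$. Then
\[
\Pr_{\rho_1}(E)^2 \leq e^{\renyi} \cdot \Pr_{\rho_0}(E).
\]
Since $\renyi = O(\log n / n) = o(1)$, the factor $e^{\renyi}$ is $O(1)$, so
\[
\Pr_{\rho_1}(E) \leq O\bigl(\sqrt{\negl(n)}\bigr) = \negl(n).
\]
Applying \Cref{renyi-good-for-search} needs one justification: the Rényi divergence does not increase when the same independent randomness for $\train$ and $\cA$ is appended to both distributions. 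This follows from data processing for Rényi divergence, or directly from the fact that the density ratio is unchanged.

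\textbf{Strength.} Combining the two parameters,
\[
\frac{\delta_1}{\delta_0} = \frac{\alpha \cdot 2^n}{\beta \cdot n^{C} \cdot 2^{\log^{3+\varepsilon} n}},
\]
and the factor $n^C$ is absorbed into $2^{O(\log^{3+\varepsilon} n)}$.
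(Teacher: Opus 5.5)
Your proposal is correct and follows essentially the same approach as the paper, which reruns the proof of \Cref{backdoor-construction-statement} with \Cref{matrix-backdoor-statement-npp} and \Cref{nbp-assumption} substituted in: data processing for TV, the $\beta$-Lipschitz bound on $\widehat{g}$ for $\delta_0$, and a reduction to NBP transferred from $\widehat{\veca}$ to a true Gaussian via \Cref{renyi-good-for-search}. Your explicit remark that appending the independent randomness of $\train$ and $\cA$ leaves the R\'{e}nyi divergence unchanged is a detail the paper glosses over, and you handle it correctly.
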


\begin{proof}[Proof of \Cref{backdoor-construction-statement}]
    The construction is given in \Cref{fig:construction}. We prove each of the properties in turn.

    To see statistical indistinguishability, note that 
    \[\TV\left(\widehat{\matA}, \Norm(0,1)^{m \times n} \right) = o(1)\]
    by \Cref{matrix-backdoor-statement}. Since $\modelgen$ and $\backdoormodelgen$ are random processes that differ only in how the matrices are sampled, the data processing inequality implies
    \[ \TV\left(F, \widehat{F} \right) = o(1),\]
    as desired.

    To see backdoor collision generation, recall that
    \[\norm*{\widehat{\matA} \vecz}_{\infty} \leq O\bigg( \frac{n}{2^{n/m}} \bigg)\]
    by \Cref{matrix-backdoor-statement}. Clearly $\vecx' = \activate(\key, \vecx) = \vecx + \vecz \in \Z^n$, $\vecx' \neq \vecx$, and $\norm{\vecx'}_{\infty} \leq \norm{\vecx}_{\infty} + 1$, so it suffices to show that 
    \[ \norm*{\widehat{F}(\vecx') -\widehat{F}(\vecx)}_2 \leq \delta_0. \]
    We have
    \begin{align*}
    \norm*{\widehat{F}(\vecx') - \widehat{F}(\vecx)}_2 = \norm*{\widehat{g}\left(\widehat{\matA} \vecx' \right) - \widehat{g}\left(\widehat{\matA} \vecx \right)}_2 &= \norm*{\widehat{g}\left(\widehat{\matA} \vecx + \widehat{\matA} \vecz \right) - \widehat{g}\left(\widehat{\matA} \vecx \right)}_2
    \\&\leq \beta \cdot \norm*{\widehat{\matA} \vecz}_2
    \\&\leq \beta \sqrt{m} \cdot \norm*{\widehat{\matA} \vecz}_{\infty}
    \\&\leq O\left(\frac{\beta \sqrt{m}}{2^{n/m}} \right).
    \end{align*}
Therefore, we can set $\delta_0 = O(\beta \sqrt{m} \cdot 2^{-n/m})$.

Finally, to see approximate collision resistance, suppose for contradiction that there exists a p.p.t. algorithm $\cA$ and a constant $C > 0$ such that
    \[ \Pr_{\left(\widehat{F}, \key\right) \gets \backdoormodelgen(1^n, 1^m)}\left((\vecx_1, \vecx_2) \gets \cA\left(\widehat{F} \right) : \begin{array}{c}\vecx_1, \vecx_2 \in [-B : B]^n,\; 
    \\\vecx_1 \neq \vecx_2, \norm*{\widehat{F}(\vecx_2) -\widehat{F}(\vecx_1)}_2 \leq \delta_1
    \end{array}\right) \geq \frac{1}{n^C}, \]
    for infinitely many values of $n$. Consider an algorithm $\cA'$ (using $\cA$) defined as follows: On input a matrix $\matA \in \R^{m \times n}$, sample $g \gets \train(\matA)$, define $F(\vecx) = g(\matA \vecx)$, and receive $(\vecx_1, \vecx_2) \gets \cA(F)$. The algorithm $\cA'$ then outputs $\vecx_1 - \vecx_2 \in [-2B : 2B]^n \setminus \{0^n\}$. The claim is that the p.p.t. algorithm $\cA'$ violates \Cref{sbp-assumption}. To see this, note that 
    \begin{align*}
        \norm*{\widehat{F}\left(\vecx_2\right) -\widehat{F}\left(\vecx_1\right)}_2 \leq \delta_1 &\iff \norm*{\widehat{g}\left(\widehat{\matA} \vecx_2 \right) -\widehat{g}\left(\widehat{\matA} \vecx_1\right)}_2 \leq \delta_1 
        \\& \implies \norm*{\widehat{\matA} \vecx_2 -\widehat{\matA} \vecx_1}_2 \leq \frac{\delta_1}{\alpha}
        \\& \implies \norm*{\widehat{\matA} \vecx_2 -\widehat{\matA} \vecx_1}_{\infty} \leq \frac{\delta_1}{\alpha}
    \end{align*}
    Therefore, we have the following:
    \[ \Pr_{\left(\widehat{\matA}, \vecz\right) \gets \backdoormatrixgen(1^n, 1^m)}\left( \vecx \gets \cA'\left(\widehat{\matA} \right) : \begin{array}{c} \vecx \in [-2B : 2B]^n \setminus \{\zero\},\; 
    \\\norm*{\widehat{\matA} \vecx}_{\infty} \leq \delta_1/\alpha
    \end{array}\right) \geq \frac{1}{n^C}, \]
    for infinitely many values of $n$. Let $E = E(\matA)$ denote the above event (as a function of matrix $\matA$), so that
    \[\Pr_{\left(\widehat{\matA}, \cdot \right) \gets \backdoormatrixgen(1^n, 1^m)}\left(E\left(\widehat{\matA}\right)\right) \geq \frac{1}{n^C}\]
    infinitely often. By \Cref{renyi-good-for-search} and R\'{e}nyi closeness of $\widehat{\matA}$ and $\matA \sim \Norm(0,1)^{m \times n}$ (as guaranteed by \Cref{matrix-backdoor-statement}) we have
    \begin{align*}
        \Pr_{\matA \sim \Norm(0,1)^{m \times n}} \left( E\left(\matA \right) \right) &\geq \frac{\Pr_{\left(\widehat{\matA}, \cdot \right) \gets \backdoormatrixgen(1^n, 1^m)}\left(E\left(\widehat{\matA}\right)\right)^2}{e^{\renyi\left( \widehat{\matA} || \matA\right)}}
        \\&\geq \frac{1/n^{2C}}{e^{o(1)}} = \Omega\left(\frac{1}{n^{2C}}\right)
    \end{align*}
    infinitely often. That is,
    \[ \Pr_{\matA \sim \Norm(0,1)^{m \times n}}\left( \vecx \gets \cA'\left(\matA \right) : \begin{array}{c} \vecx \in [-2B : 2B]^n \setminus \{\zero\},\; 
    \\\norm*{\matA \vecx}_{\infty} \leq \delta_1/\alpha
    \end{array}\right) =  \Pr_{\matA \sim \Norm(0,1)^{m \times n}} \left( E\left(\matA \right) \right) = \Omega \left( \frac{1}{n^{2C}} \right), \]
    for infinitely many values of $n$. By the parameters of \Cref{sbp-assumption}, we can set $\delta_1 = \alpha/ (m^{\varepsilon} \sqrt{n})$ for any $\varepsilon > 0$ to arrive at the contradiction.
\end{proof}

\begin{proof}[Proof of \Cref{backdoor-construction-statement-npp}]
    The proof is exactly that of \Cref{backdoor-construction-statement}, with the difference being that we apply \Cref{matrix-backdoor-statement-npp} instead of \Cref{matrix-backdoor-statement} and \Cref{nbp-assumption} instead of \Cref{sbp-assumption}. This changes the bound of $\delta_0$ to $\delta_0 = O(\beta \cdot 2^{-n} \cdot n^C)$, and similarly, $\delta_1 = \alpha / 2^{\Theta(\log^{3 + \varepsilon}(n))}$.
\end{proof}

\subsection{Backdoors in Deep Neural Networks}
\label{sec:main-thm-sec}
Here, we combine \Cref{sec:neural-net-prelims,sec:backdoor-construction} to show how to insert backdoors in certain architectures of deep feedforward neural networks.
\begin{itemize}
    \item The first linear layer needs to be a random compressing Gaussian matrix $\matA \sim \Norm(0,1)^{m \times n}$ (where $n \gg m$). This is a common paradigm in random feature learning~\citep{DBLP:conf/nips/RahimiR07}.
    \item The activation function needs to be bilipschitz.
    \item The linear maps in the second layer and onward need to be well-conditioned, in the sense that 
    \[ \cond(\matA) = \frac{\sigma_{\max}(\matA)}{\sigma_{\min}(\matA)} \approx 1, \]
    with flexibility on the distance from $1$. Note that such linear maps can either be dimension-preserving or expanding.
\end{itemize}
More precisely, let $\goodnns_{n, d, m, \alpha, \gamma}$ denote the following class of depth-$d$ feedforward neural networks:
\begin{itemize}
    \item The first linear layer $\matA^{(0)} \sim \Norm(0,1)^{m \times n}$ is a random $m \times n$ Gaussian matrix that is unchanged throughout training, where $m$ and $n$ are parameters.
    \item The linear maps $\matA^{(1)}, \matA^{(2)}, \cdots, \matA^{(d-1)}$ are arbitrary but well-conditioned, in the sense that for all $i \in \{1, 2, \cdots, d-1\}$, 
    \[ \cond\left(\matA^{(i)} \right) \leq \gamma \]
    where $\gamma \geq 1$ is a parameter. In particular, $\matA^{(1)}, \cdots, \matA^{(d-1)}$ can all be updated throughout training, as long as they end up not being too ill-conditioned.
    \item All activation functions $\sigma : \R \to \R$ are $\leakyR_{\alpha}$, where $\alpha \in (0,1)$ is a parameter.
\end{itemize}

\begin{theorem}\label{end-to-end-backdoors-nns-statement}
    For $m = n^{\Omega(1)}$ and $m = o(n)$, and for any parameters $d \in \N$, $\alpha \in (0,1)$, $\gamma \geq 1$ let $\modelgen(1^n, 1^m)$ output neural networks that are in $\goodnns_{n, d, m, \alpha, \gamma}$. For all $B \leq \poly(n)$, under \Cref{sbp-assumption}, there exists a statistically undetectable backdoor for $\modelgen$ with strength
    \[ \frac{\delta_1}{\delta_0} = \Omega\left( \frac{\alpha^d \cdot 2^{n/m} }{\sqrt{n} \cdot m^{1/2 + \varepsilon} \cdot \gamma^{d-1}} \right), \]
    for all $\varepsilon > 0$.
\end{theorem}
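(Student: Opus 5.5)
The plan is to reduce \Cref{end-to-end-backdoors-nns-statement} to \Cref{backdoor-construction-statement}. The reduction has two parts: write each network in $\goodnns_{n, d, m, \alpha, \gamma}$ as $F(\vecx) = g(\matA^{(0)}\vecx)$, and show that $g$ is bilipschitz with the right constants. Here $g$ takes $\vecy = \matA^{(0)}\vecx \in \R^m$, adds the first bias $\vecb^{(0)}$, applies $\leakyR_\alpha$, and then runs the remaining layers $\matA^{(1)}, \dots, \matA^{(d-1)}$ with their biases and activations. Adding a bias is $(1,1)$-bilipschitz. So the ``moreover'' part of \Cref{bilipschitz-bound-nns} shows that $g$ is $(\alpha', \beta')$-bilipschitz with
\[ \alpha' = \alpha^d \prod_{i=1}^{d-1}\sigma_{\min}\left(\matA^{(i)}\right), \qquad \beta' = \prod_{i=1}^{d-1}\sigma_{\max}\left(\matA^{(i)}\right). \]
Hence the distortion satisfies $\beta'/\alpha' \leq \gamma^{d-1}/\alpha^d$.

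The main obstacle is that $\train$ may output different $\alpha', \beta'$ on different runs. \Cref{backdoor-construction-statement} instead assumes a fixed pair $(\alpha,\beta)$. I would remove this by a scale normalization.

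\textbf{Normalization.} Define $s := \prod_{i \geq 1}\sigma_{\max}(\matA^{(i)})$, computable from the weights, and consider the rescaled model $F/s$, equivalently $g/s$. This is $(\alpha^d/\gamma^{d-1}, 1)$-bilipschitz uniformly over the training randomness. The strength $\delta_1/\delta_0$ is invariant under this rescaling, since both the backdoor collision distance and the adversary's collision distance scale by the same factor $s$. Also, $s$ depends only on the trained weights, so an adversary for $F$ is an adversary for $F/s$ and vice versa.

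Alternatively, I would rerun the proof of \Cref{backdoor-construction-statement} pointwise, which needs only instance-wise bounds:
\[ \norm{\widehat F(\vecx+\vecz)-\widehat F(\vecx)}_2 \leq \beta'\sqrt{m}\,\norm{\widehat\matA\vecz}_\infty, \]
and any collision with $\norm{\widehat F(\vecx_2)-\widehat F(\vecx_1)}_2\leq\delta_1$ gives $\norm{\widehat\matA(\vecx_2-\vecx_1)}_\infty \leq \delta_1/\alpha'$. I would then set $\delta_0 = O(\beta'\sqrt{m}\,2^{-n/m})$ and $\delta_1 = \alpha'/(m^\varepsilon\sqrt{n})$ relative to the realized network. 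Their ratio is
\[ \frac{\alpha'\, 2^{n/m}}{\beta'\sqrt{n}\, m^{1/2+\varepsilon}} \geq \frac{\alpha^d\, 2^{n/m}}{\gamma^{d-1}\sqrt{n}\, m^{1/2+\varepsilon}}, \]
which is the claimed bound.

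The remaining properties transfer directly.
\begin{enumerate}
\item \textbf{Undetectability.} The construction of \Cref{fig:construction} (sample $\widehat\matA$ with $\backdoormatrixgen$, then train the upper layers) differs from $\modelgen$ only in the first-layer matrix. By \Cref{matrix-backdoor-statement} and data processing, $\TV = o(1)$, using $m=o(n)$ and $m=n^{\Omega(1)}$, so $m = \omega(1)$.
\item \textbf{Collision resistance.} The reduction $\cA'$ samples training itself and outputs $\vecx_1-\vecx_2 \in [-2B:2B]^n$. The Rényi transfer uses \Cref{renyi-good-for-search} as before, and then \Cref{sbp-assumption} applies, since $2B \leq \poly(n)$ and $n \leq \poly(m)$.
\end{enumerate}

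One caveat needs care: a $\delta_1$ that depends on the realized network. The normalized form $F/s$ avoids this, so I would present the proof via normalization and cite \Cref{backdoor-construction-statement} with $\alpha \leftarrow \alpha^d/\gamma^{d-1}$ and $\beta \leftarrow 1$.
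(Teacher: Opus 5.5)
Your core argument is the paper's proof. You treat everything after $\matA^{(0)}$ as $g$, read off $(\alpha',\beta')$ from the ``moreover'' part of \Cref{bilipschitz-bound-nns}, invoke \Cref{backdoor-construction-statement}, and use $\alpha'/\beta' = \alpha^d/\prod_{i\geq 1}\cond(\matA^{(i)}) \geq \alpha^d/\gamma^{d-1}$. The paper does this in a few lines and plugs in the realized $\alpha',\beta'$ as if they were constants. You are right that this needs care, because \Cref{backdoor-construction-statement} assumes a single pair $(\alpha,\beta)$ valid on the whole support of $\train$.

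The step that fails is your claim that normalizing by $s$ removes this issue. \Cref{def-of-backdoors} requires $\backdoormodelgen$ to be TV-close to $\modelgen$, whose output is $F$, so you must release $\widehat F$, not $\widehat F/\hat s$. For $\widehat F$ your guarantees then read $\norm*{\widehat F(\vecx+\vecz)-\widehat F(\vecx)}_2 \leq \hat s\,\delta_0$, with hardness below $\hat s\,\delta_1$. These are exactly the model-dependent thresholds of your pointwise route. Releasing $\widehat F/\hat s$ instead gives a backdoor for a different generator. (That model does lie in $\goodnns$, since $1/\hat s$ can be absorbed into $\matA^{(d-1)},\vecb^{(d-1)}$ by positive homogeneity of $\leakyR_\alpha$.)

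No argument can achieve fixed thresholds here. Take $d=2$, and let $\train$ set all biases to zero and $\matA^{(1)}=c\,\matI_m$ with $c$ uniform on $\{1,2^n\}$, so $\gamma=1$.
\begin{itemize}
\item A union bound shows that a Gaussian matrix has $\norm{\matA\vecy}_\infty\geq 3^{-n/m}/2$ for all $\vecy\in\{-1,0,1\}^n\setminus\{\zero\}$, except with probability at most $2^{-m}$.
\item By TV-closeness, with probability about $1/2$ the backdoored model has both $c=2^n$ and this property. Collision generation at $\vecx=\zero$ then forces $\delta_0\geq\alpha^2 2^n 3^{-n/m}/2$.
\item With probability about $1/2$ the backdoored model has $c=1$. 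There the trivial pair $(\mathbf{e}_1,\zero)$ collides within $2\sqrt m$, which forces $\delta_1<2\sqrt m$.
\end{itemize}
So for constant $\alpha$, every backdoor for this $\modelgen$ in the sense of \Cref{def-of-backdoors} has strength $2^{-\Omega(n)}$.

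What you, and implicitly the paper, actually establish is the scale-relative statement. That statement matches the per-model strength $\mathrm{bs}(M;\vecz)$ from the introduction. The absolute version holds only when $\prod_{i\geq1}\sigma_{\max}(\matA^{(i)})$ is fixed. Otherwise the strength degrades by the ratio of the largest to the smallest value of that quantity. State your theorem in that form rather than claiming the normalization avoids the issue.
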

\begin{proof}[Proof of \Cref{end-to-end-backdoors-nns-statement}]
    We directly apply \Cref{backdoor-construction-statement}, where $\train$ neural networks as described in $\goodnns$ except skipping the first layer $\matA^{(0)}$. By \Cref{bilipschitz-bound-nns}, we know that $\train$ is supported on $(\alpha', \beta')$-bilipschitz functions, where
    \begin{align*}
        \alpha' &= \alpha^d \prod_{i = 1}^{d-1} \sigma_{\min}\left(\matA^{(i)} \right),\\
        \beta' &= \prod_{i = 1}^{d-1}\sigma_{\max}\left(\matA^{(i)} \right).
    \end{align*}
    Plugging this into \Cref{backdoor-construction-statement}, the strength of the backdoor is
    \begin{align*}
        \frac{\delta_1}{\delta_0} = \Omega \left(\frac{\alpha' \cdot 2^{n/m}}{\beta' \sqrt{n} \cdot m^{1/2 + \varepsilon} } \right) &= \Omega \left( \frac{\alpha^d \cdot 2^{n/m} \prod_{i = 1}^{d-1} \sigma_{\min}\left(\matA^{(i)} \right)}{\sqrt{n} \cdot m^{1/2 + \varepsilon} \prod_{i = 1}^{d-1}\sigma_{\max}\left(\matA^{(i)} \right)} \right)
        \\&= \Omega\left( \frac{\alpha^d \cdot 2^{n/m} }{\sqrt{n} \cdot m^{1/2 + \varepsilon} \prod_{i = 1}^{d-1} \cond\left(\matA^{(i)} \right)} \right)
        \\&= \Omega\left( \frac{\alpha^d \cdot 2^{n/m} }{\sqrt{n} \cdot m^{1/2 + \varepsilon} \cdot \gamma^{d-1}} \right),
    \end{align*}
    for all $\varepsilon > 0$, as desired.
\end{proof}

We now instantiate \Cref{end-to-end-backdoors-nns-statement} with slightly more concrete parameter choices. The reason for setting $\alpha \geq 1/100$ for the $\leakyR$ is that $\alpha = 1/100$ is a commonly used default value, e.g., in PyTorch ~\citep{DBLP:conf/nips/PaszkeGMLBCKLGA19}.
\begin{corollary}\label{concrete-parmeter-nn-backdoor}
    For $m = n^{1/2}$, $d = n^{1/4}$, any $\alpha \in [1/100, 1)$, and any $\gamma \in \left[1, 2^{n^{1/5}} \right]$, under \Cref{sbp-assumption}, for all $B \leq \poly(n)$, there exists a statistically undetectable backdoor for $\goodnns_{n,d,m,\alpha,\gamma}$ with strength
    \[ \frac{\delta_1}{\delta_0} = 2^{\Omega(m)}. \]
\end{corollary}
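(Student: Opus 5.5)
This is a direct instantiation of \Cref{end-to-end-backdoors-nns-statement}, so the plan is to check that its hypotheses hold and then evaluate the strength bound at the given parameters.

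\textbf{Hypotheses.} We take $m = n^{1/2}$.
\begin{itemize}
\item $m = n^{\Omega(1)}$ and $m = o(n)$ both hold.
\item $B \leq \poly(n)$ is assumed.
\item \Cref{sbp-assumption} is assumed.
\item The parameters $d = n^{1/4}$, $\alpha \in [1/100,1)$ and $\gamma \in [1, 2^{n^{1/5}}]$ are all admissible in \Cref{end-to-end-backdoors-nns-statement}, which allows arbitrary $d$, $\alpha$ and $\gamma \geq 1$.
\end{itemize}
The theorem therefore yields a statistically undetectable backdoor for $\goodnns_{n,d,m,\alpha,\gamma}$. Its strength is
\[ \frac{\delta_1}{\delta_0} = \Omega\left( \frac{\alpha^d \cdot 2^{n/m}}{\sqrt{n}\cdot m^{1/2+\varepsilon}\cdot \gamma^{d-1}} \right). \]
We fix any $\varepsilon>0$, say $\varepsilon = 1$. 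Undetectability is inherited directly, since $\TV = o(1)$ follows from $m=\omega(1)$ and $m=o(n)$.

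\textbf{Bounding each factor in base $2$.}
\begin{itemize}
\item \emph{Main term.} Since $n/m = n^{1/2} = m$, we have $2^{n/m} = 2^{m}$.
\item \emph{Activation loss.} Because $\alpha \geq 1/100$, we get $\alpha^d \geq 2^{-d\log_2 100} = 2^{-O(n^{1/4})}$.
\item \emph{Conditioning loss.} We get $\gamma^{d-1} \leq 2^{n^{1/5}(n^{1/4}-1)} \leq 2^{n^{9/20}}$.
\item \emph{Polynomial loss.} The factor $\sqrt{n}\, m^{1/2+\varepsilon}$ is $\poly(n) = 2^{O(\log n)}$.
\end{itemize}

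\textbf{Combining.} Multiplying these, the strength is at least
\[ 2^{\, n^{1/2} - O(n^{1/4}) - n^{9/20} - O(\log n)}. \]
Each subtracted exponent is $o(n^{1/2})$, so the whole expression is $2^{\Omega(n^{1/2})} = 2^{\Omega(m)}$.

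The only step that needs care is confirming that the conditioning loss $\gamma^{d-1}$ is dominated by $2^{n/m}$. This comes down to the exponent comparison $n^{1/5}\cdot n^{1/4} = n^{9/20} < n^{1/2}$, which holds, so I expect no real obstacle.
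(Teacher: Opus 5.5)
Your proposal is correct and follows the paper's proof: both plug the parameters directly into \Cref{end-to-end-backdoors-nns-statement} and observe that the $\alpha^d$, $\gamma^{d-1}$ and polynomial losses are $2^{-o(\sqrt{n})}$, the key comparison being $n^{9/20} = o(n^{1/2})$. Your choice $\varepsilon = 1$ versus the paper's $\varepsilon = 1/2$ is immaterial, as it only affects the polynomial factor.
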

\begin{proof}
We directly plug these parameters into \Cref{end-to-end-backdoors-nns-statement} (and $\varepsilon = 1/2$) to get strength
\begin{align*}
\frac{\delta_1}{\delta_0} &= \Omega\left( \frac{\alpha^d \cdot 2^{n/m} }{\sqrt{n} \cdot m^{1/2 + \varepsilon} \cdot \gamma^{d-1}} \right)
\\&= \Omega\left( \frac{2^{\sqrt{n}} }{100^{n^{1/4}} \cdot n^{1/2} \cdot (2^{n^{1/5}})^{n^{1/4} - 1}} \right)
\\&= \Omega\left( \frac{2^{\sqrt{n}}}{2^{O(n^{9/20})}}\right)
\\&= 2^{\Omega(\sqrt{n})}.
\end{align*}
\end{proof}

\ifnum\iclrORicml=0

\newpage
\bibliography{refs}
\fi

\ifnum\iclrORicml=1
\section*{Use of Large Language Models}

We used large language models (specifically, Claude Code) to help generate code for our implementation as done in \Cref{sec:implementation-DNN}.
\fi

\end{document}